\newcommand{\underformula}[2]{ \underset{#2}{\underbrace{#1}} }
\DeclareMathOperator*{\ed}{\text{ED}}
\DeclareMathOperator*{\poi}{\text{Poisson}}
\DeclareMathOperator*{\gam}{\text{Gamma}}
\DeclareMathOperator*{\var}{\mathbb{V}}
\DeclareMathOperator{\V}{\mathbb{V}}
\DeclareMathOperator*{\cov}{\text{Cov}}
\DeclareMathOperator*{\simiid}{\overset{\text{iid}}{\sim}}
\DeclareMathOperator*{\eqdef}{\overset{\text{def}}{=}}
\DeclareMathOperator{\E}{\mathbb{E}}
\newcommand{\discrepancy}{\text{d}}
\newcommand{\T}{\text{T}}
\newcommand{\target}{\T^{*}}
\LetLtxMacro{\originaleqref}{\eqref}
\renewcommand{\eqref}{Eq.~\originaleqref}
\newcommand{\z}{\theta}
\newcommand{\Z}{\Theta}
\newcommand{\ZSPACE}{\Uptheta}
\newcommand{\YSPACE}{\Upupsilon}
\definecolor{orcidlogocol}{HTML}{A6CE39}
\renewcommand{\z}{z}
\renewcommand{\Z}{Z}
\renewcommand{\ZSPACE}{\mathscr{Z}}
\renewcommand{\YSPACE}{\mathscr{Y}}
\begin{document}
\title{Prior Specification for Bayesian Matrix Factorization via Prior Predictive Matching %
}
\author{\name Eliezer~de~Souza~da~Silva \email eliezer.souza.silva@ntnu.no \\
        \addr Department of Computer Science\\ Norwegian University of Science and Technology, Trondheim, Norway
        \AND
        \name Tomasz~Kuśmierczyk \email tomasz.kusmierczyk@helsinki.fi \\
        \addr
        Department of Computer Science \\
        University of Helsinki, Helsinki, Finland
        \AND
        \name Marcelo~Hartmann \email marcelo.hartmann@helsinki.fi \\
        \addr
        Department of Computer Science \\
        University of Helsinki, Helsinki, Finland
        \AND
        \name Arto~Klami \email arto.klami@helsinki.fi \\
        \addr
        Department of Computer Science \\
        University of Helsinki, Helsinki, Finland
}

\editor{}

\maketitle

\begin{abstract}%
The behavior of many Bayesian models used in machine learning critically depends on the choice of prior distributions, controlled by some hyperparameters that are typically selected by Bayesian optimization or cross-validation. This requires repeated, costly, posterior inference. We provide an alternative for selecting good priors without carrying out posterior inference, building on the prior predictive distribution that marginalizes out the model parameters. 
We estimate virtual statistics for data generated by the prior predictive distribution and then optimize over the hyperparameters to learn ones for which these virtual statistics match target values provided by the user or estimated from (subset of) the observed data.
We apply the principle for probabilistic matrix factorization, for which good solutions for prior selection have been missing. We show that for Poisson factorization models we can analytically determine the hyperparameters, including the number of factors, that best replicate the target statistics, and we study empirically the sensitivity of the approach for model mismatch. We also present a model-independent procedure that determines the hyperparameters for general models by stochastic optimization, and demonstrate this extension in context of hierarchical matrix factorization models.
\end{abstract}
\begin{keywords}
  Bayesian modeling, prior specification, hyperparameter search, probabilistic matrix factorization
\end{keywords}

\section{Introduction}\label{sec:intro}

Bayesian machine learning (ML) builds on the idea of  general-purpose hierarchical probabilistic models, such as mixture models, topic models or matrix factorization techniques, that can be used for various learning tasks. Often these models have large number of latent variables, such as the cluster indicators for mixture models or the topic proportions for topic models, for which the prior distributions are chosen predominantly by computational convenience.
The choice of the particular priors, as well as other hyperparameters like the number of factors or components, has notable impact on the overall performance of these models, but cannot be determined by classical statistical modeling principles: the priors may not have intuitive interpretation and for complex hierarchical models the relationship between the priors and data is poorly understood, ruling out subjective prior knowledge.
For example, the behavior of the hierarchical Poisson matrix factorization model of \citet{gopalan2015scalable} depends on seven hyperparameters (six for defining the priors and one for the number of latent factors) in a non-trivial manner.

The hyperparameters are typically chosen heuristically or by an iterative process that explicitly evaluates the quality of multiple choices.
The search can be automated with Bayesian optimization \citep{DBLP:conf/NeurIPS/SnoekLA12}, but evaluating the quality is costly. It is typically based on some proxy of the marginal likelihood, such as variational lower bound or leave-one-out cross validation \citep{DBLP:journals/sac/VehtariGG17a}, or directly on the performance in a downstream task, such as recommendation \citep{DBLP:conf/wcgo/GaluzziGCPA19}.
Both require carrying out posterior inference for every considered set of hyperparameters,
adding significant computational burden and increasing the overall training time by orders of magnitude.
Furthermore, the result is only optimal for the chosen measure and inference method, unnecessarily tying model specification with inference.

To overcome this, we turn attention to the statistical literature on the \emph{prior predictive distribution} (PPD) -- the marginal distribution of observables before seeing any data. The PPD is routinely used during the statistical modeling pipeline in form of prior predictive checks, to qualitatively access whether the model and the priors are reasonable \citep{schad:2019,gabry2019visualization,gelman2020bayesian}. PPD has also been used for prior elicitation, to convert knowledge an expert has on the properties of data into prior distributions \citep{kadane:1980,akbarov:2009,hartmann2020}. 
We turn those ideas into a tool for automatic learning of hyperparameters, by directly optimizing for a good match between  \textit{virtual statistics} of the PPD and statistics of the data\footnote{In order to distinguish from statistics of the observed data, we use the term \textit{virtual statistics} to refer to summary quantities calculated for hypothetical data sampled from PPD, inspired by the phrase  \textit{virtual counts} used sometimes for the hyperparameters in count-data models.}. 
The tool can be used in two ways: (1) the target statistics are provided by the expert (user) as prior knowledge on data, or (2) the target statistics are estimated from (subset of) the actual data. The former is related to use of PPD for prior elicitation \citep{kadane:1980}, extended here for practical use with Bayesian ML models with large number of latent variables, whereas the latter is similar to empirical Bayes \citep{Casella:1985}.

The proposed \emph{prior predictive matching} approach, described in Section~\ref{sec:concept}, finds good hyperparameters without needing posterior inference. When the true data generating process is within the assumed model family, the approach provides hyperparameters that are optimal with respect to the selected statistics, and we show empirically that the method is robust to small model misspecification. If the data fits poorly the assumed model family, the approach may return unreasonable choices, which can be interpreted as sign of model mismatch and a need for remodeling. For instance, we show how priors not accounting for non-homogeneous row and column counts in recommender engine setups result in clear underestimation of the number of factors, which could be fixed by changing to a formulation that is consistent with the observed margins \citep{yildirim2021}.

The principle of prior predictive matching is generally applicable for all statistical models used in machine learning that specify a sampling distribution. In this work we introduce it in the context of probabilistic matrix factorization models, focusing in particular on models for count data. This provides a tangible technical context for the work and enables showcasing how the specific instances of the general principle can provide particularly elegant solutions for practical models of interest.
First, we consider scenarios where certain moments of the PPD can be expressed analytically. We can then compute the virtual statistics in closed form and analytically solve for optimal hyperparameters. The method can be interpreted as a specific instance of the method of moments \citep{casella, pawitan} since we determine the parameters by matching the moments, but applied for learning the hyperparameters as an intermediate step in the full modelling process, rather than directly as means of approximate inference (once the hyperparameters are fixed, we still want to perform standard posterior inference). The approach facilitates computationally efficient selection of the hyperparameters, demonstrated in this work for count matrix factorization models for which we can set both the prior parameters and the number of factors in closed form. Previously, the latent dimensionality could be set automatically only for Gaussian matrix factorization, based on the analytic marginal likelihood by \citet{Bouveyron2019}, and now we can do it also for count models, although based on selected statistics rather than the marginal likelihood.

Closed-form analytic expressions are computationally ideal, but their derivation is tedious already for fairly simple models. To address this, we also provide a model-independent stochastic algorithm that uses sampling to compute the virtual statistics and stochastic gradient optimization to learn the hyperparameters. It only requires us to be able to sample from PPD and is applicable for all models with reparameterizable priors \citep{figurnov2018implicit, mohamed2019monte}. The method is loosely related to several recent methods directly learning a prior for flexible neural network -based models \citep{tomczak2018vae,Klushyn2019priors,nalisnick2018learning,nalisnick2021predictive,ducontrolled}, but our goal is specifically in determining the hyperparameters of existing priors and we obtain the solution without considering the observed data itself. We demonstrate also this approach in context of Bayesian matrix factorization models, this time considering a more complex hierarchical model for which we do not have analytic expressions for the moments.

\section{Motivation: Priors for Bayesian MF}
\label{sec:motivation}

Bayesian matrix factorization (BMF) is an important class of Bayesian ML models used, e.g., in recommender engines \citep{10.5555/2981562.2981720}, for dimensionality reduction \citep{10.1145/2433396.2433438, 10.1145/860435.860485}, community detection \citep{PhysRevE.83.066114}, and modeling relationships between data modalities \citep{Klami13jmlr}.
Importantly, it is a family for which the prior distributions are difficult to specify, as will be clarified in Section~\ref{sec:onpriors}.
We start by characterizing two concrete models building on Poisson distribution, for which the effect is emphasized~\citep{DBLP:journals/cin/Cemgil09}.

\textbf{Poisson Matrix Factorization} (PMF)~\citep{DBLP:journals/cin/Cemgil09,DBLP:conf/NeurIPS/GopalanCB14} with latent dimensionality $K$ specifies a generative model for a matrix $ \mathbf{Y} =\{ Y_{ij} \} \in \mathbb{R}^{N \times M}$, with each entry $Y_{ij}$ following a Poisson distribution with rate $\theta_{ik}\beta_{jk}$, a product of latent factors $\theta_{ik}$ indexed by the rows and  $\beta_{jk}$ indexed by the columns.

Each latent variable follows a prior $F(\mu,\sigma^2)$, parameterized here using mean $\mu$ and standard deviation $\sigma$
\begin{align}\label{eq:genmf}
    \theta_{ik} &\simiid F(\mu_{\theta},\sigma_{\theta}^2), \quad   \beta_{jk} \simiid F(\mu_{\beta},\sigma_{\beta}^2), \nonumber \\
     Y_{ij} & \simiid \poi \left(\sum_{k = 1}^K \theta_{ik}\beta_{jk}\right).
\end{align}
The majority of the PMF literature assumes the priors to be gamma distributions (using shape-rate parameterization, $F(\mu_{\theta},\sigma_{\theta}^2)=\gam(a,b)$ and $F(\mu_{\beta},\sigma_{\beta}^2)=\gam(c,d)$, with $\mu_{\theta}=\frac{a}{b}$, $\sigma_{\theta}^2 = \frac{a}{b^2}$, $\mu_{\beta}=\frac{c}{d}$ and $\sigma_{\beta}^2 = \frac{c}{d^2}$) for efficient posterior inference, but we use the more general notation to extend the analysis to all scale-location priors.

The priors and the number of factors $K$ control the \textit{sparsity} and \textit{magnitude} of the latent representation~\citep{DBLP:journals/cin/Cemgil09}, via the expected mean and variance of the rates. However, these effects are hard to separate from each other and many practitioners are unaware of the implications of their choices. As a practical example, already the common choice of independent priors for $\theta$ and $\beta$ (used also in our work) encodes the prior assumption that the rows (and, equivalently, the columns) are exchangeable and hence also that the margin counts are equal in expectation. Allowing for vastly varying row (and/or column) sums would hence require switching to a more general allocation models \citep{yildirim2021,cemgil2019} or more structured priors. In practice, the quality of the prior choices often becomes apparent for the practitioner only in context of the observed data \textit{a posteriori}.

\textbf{Compound Poisson Matrix Factorization} (CPMF) \citep{BasbugE16,simsekli2013} extends PMF by incorporating an additive exponential dispersion model (EDM) \citep{JorgensenEDM} in the observation model, while keeping the Poisson-Gamma factorization structure: 
\begin{align}
    \theta_{ik} &\sim F(\mu_{\theta},\sigma_{\theta}^2), \quad \beta_{jk} \sim F(\mu_{\beta},\sigma_{\beta}^2) \nonumber \\ 
    Y_{ij} & \sim \ed(w,\kappa N_{ij}), \quad N_{ij} \sim \poi(\sum_{k = 1}^K \theta_{ik}\beta_{jk}),
\end{align}
where we have $p(Y_{ij}| N_{ij} ; w,\kappa )=\text{exp}(Y_{ij}w-\kappa N_{ij}\psi(w))h(Y_{ij},\kappa N_{ij})$, $\E[Y_{ij} | N_{ij} ; w,\kappa ]=\kappa N_{ij} \psi'(w)$ and $\var[Y_{ij} |  N_{ij}; w,\kappa]=\kappa N_{ij} \psi{''}(w)$~\footnote{We denote $\psi'(w)=\frac{d \psi}{dw}$, $\psi''(w)=\frac{d^2 \psi}{dw ^2 }$ }, and
$n_{ui}$ is a Poisson distributed latent count. $\ed( w,\kappa N_{ij} )$ represents a distribution from the family of exponential dispersion model, with natural parameter given by $w$ and dispersion given by $\kappa N_{ij}$, and the particular distribution determined by the base log-partition function $\psi(w)$ and base-measure $ h(Y_{ij},\kappa N_{ij} )$. This model family includes Normal, Poisson, Gamma, Inverse-Gamma, and many other distributions (see Table 1 in \citealt{BasbugE16}).

The data generating distribution is influenced by both the chosen EDM distribution and the hyperparameters, now including also $\kappa$ and $w$, and a precise a priori reasoning about their joint effect is beyond feasible even for well-versed practitioners. An intuitive view of this model is that it allows us to decouple the sparsity or dispersion from the response model (controlled by the choice of distribution to be compounded). In this sense $\kappa$ would give an indication about variability of the responses, while $w$ would be related to the natural parameterization of the response distribution.
Determining specific values for these parameters to achieve desired or expected characteristics for the data is, however, difficult.

\textbf{Generic Bayesian Matrix Factorization} is a generalized template model, representing the family of matrix factorization models with distributions for the priors $F(\mu_{\theta},\sigma_{\theta}^2)$ and $ F(\mu_{\beta},\sigma_{\beta}^2) $, and observation model $ F_Y$. 
\begin{align}\label{eq:generic_pmf}
    \theta_{ik} &\sim F(\mu_{\theta},\sigma_{\theta}^2), \quad \beta_{jk} \sim F(\mu_{\beta},\sigma_{\beta}^2) \nonumber \\ 
    Y_{ij} & \sim F_Y(\sum_{k = 1}^K \theta_{ik}\beta_{jk}), \text{ with } \E[Y_{ij}]=\sum_{k = 1}^K \theta_{ik}\beta_{jk}.
\end{align}

The choice of specific parametric distribution for each part of the model leads specific instantiations of matrix factorization, for example if parameters and observations are normally distributed, with $F_Y = \mathcal{N}$,  $F(\mu_{\theta},\sigma_{\theta}^2)=\mathcal{N}(\mu_{\theta}=0,\sigma_{\theta}^2)$, and $F(\mu_{\beta},\sigma_{\beta}^2)=\mathcal{N}(\mu_{\beta}=0,\sigma_{\beta}^2)$, we obtain the classic probabilistic matrix factorization \citep{10.5555/2981562.2981720}; with Poisson distributed observations and Gamma distributed latent variables, $F_Y = \poi$,  $F(\mu_{\theta},\sigma_{\theta}^2)=\gam(a,b)$ and $F(\mu_{\beta},\sigma_{\beta}^2)=\gam(c,d)$, we obtain (PMF)~\citep{DBLP:journals/cin/Cemgil09,DBLP:conf/NeurIPS/GopalanCB14}. We use this generic formulation to obtain results that are valid for any choice of parametric distributions for prior and observation distributions.

\subsection{On Priors for BMF}
\label{sec:onpriors}

Despite the vast literature on BMF models and their inference algorithms, many of the papers developing or applying BMF models treat the prior choice lightly and resort to fairly heuristic choices.
For example, \citet{Brouwer2017c} present a compendium of BMF models with wide range of likelihoods and priors, but despite focusing on small data applications still set the hyperparameters
by either trying values from a regular grid or setting them to fixed values based on claims of insensitivity.
Similarly, \citet{gopalan2015scalable} solved the problem of choosing  the hyperparameters for hierarchical PMF by setting all of them to $1$ or $0.3$ to encourage sparsity, and \citet{BasbugE16} used a combination of empirically tested and heuristically chosen hyperparameter values for CPMF. Finally, \citet{DBLP:journals/pami/TanF13} optimized for the latent dimensionality $K$, but 
used ah-hoc values for other hyperparameters. Importantly, we stress that these examples should not be seen as weakness in these particular works, but rather as examples of a common practice motivating our research -- we have also published articles where we followed the same convention. We also point out that some works do take the prior choice much more seriously. In particular, \citet{cemgil2019} discusses the prior choices in detail in context of a broader class of count allocation models and provides valuable information not only for PMF but also for other models.

We argue that the common practice of heuristic choices is not because BMF models are particularly insensitive to the priors, but because selecting them is not easy. 
In many cases the PMF model is considered as a generic model family that is not necessarily motivated by well-understood generative description for the data, which makes specifying subjective prior knowledge difficult.
For example, often the only prior information for setting the variance parameters $\sigma^2$ of \eqref{eq:genmf} would be based on what kind of values have worked before when applying the model for other data sets, rather than an expert being able to somehow quantify a real subjective knowledge based on domain knowledge. Furthermore, the hyperparameters  are typically not even identifiable due to the latent variables relating to data only via the product $\theta_i^T \beta_j$. In general, this is a characteristic of hierarchical, high-dimensional and complex Bayesian models, where the interplay between prior specification and the final model properties is difficult to intuit aprioristically and can only be understood in connection to the likelihood and the predictions that come from it, as is argued by~\citet{DBLP:journals/entropy/GelmanSB17}.

For machine learning applications, various hyperparameters are today often chosen using a global optimization routine, such as Bayesian optimization \citep{DBLP:conf/NeurIPS/SnoekLA12} or a simple grid search.
Global optimization for BMF, howerer, is hard because (1) training/validation split is non-trivial for structured data, (2) posterior inference is slow for large data, and (3) the optimization surface is difficult. Figure~\ref{fig:prior_posterior} illustrates the last point for the PMF model by evaluating the predictive quality of the mean-field variational approximation
as measured by PSIS-LOO \citep{DBLP:journals/sac/VehtariGG17a} for a range of hyperparameter choices (see Section~\ref{sec:experiment_bo} for details). There are large regions of inappropriate choices and the sharp border between those and the feasible region makes global optimization hard. Furthermore, as illustrated on the right-most bottom plot in Figure~\ref{fig:prior_posterior} (an example of a 1D slice plotted using three different scalings for the y-axis), the optimization surface characteristics depend on the inspection scale; the small neighborhood with optimal scores is lost at coarses scales and is difficult to find with most optimization strategies. In particular, grid-based evaluation routines are likely to miss the good hyperparameters completely.

In Section~\ref{sec:model_specific}, we will describe analytic solution for learning the hyperparameters for this model by matching the virtual statistics of PPD with target values. The example already illustrates how that method, which here does not require any computation besides simple equations, finds a solution surface within the feasible region, selecting also the latent dimensionality automatically. It does not give the hyperparameters optimal for this specific evaluation metric and inference algorithm, nor should it. Instead, the result captures essential properties of the data with the PPD, resulting in an appropriate starting point for posterior inference.

\begin{figure}[t]
    \centering
      \includegraphics[width=0.48\columnwidth]{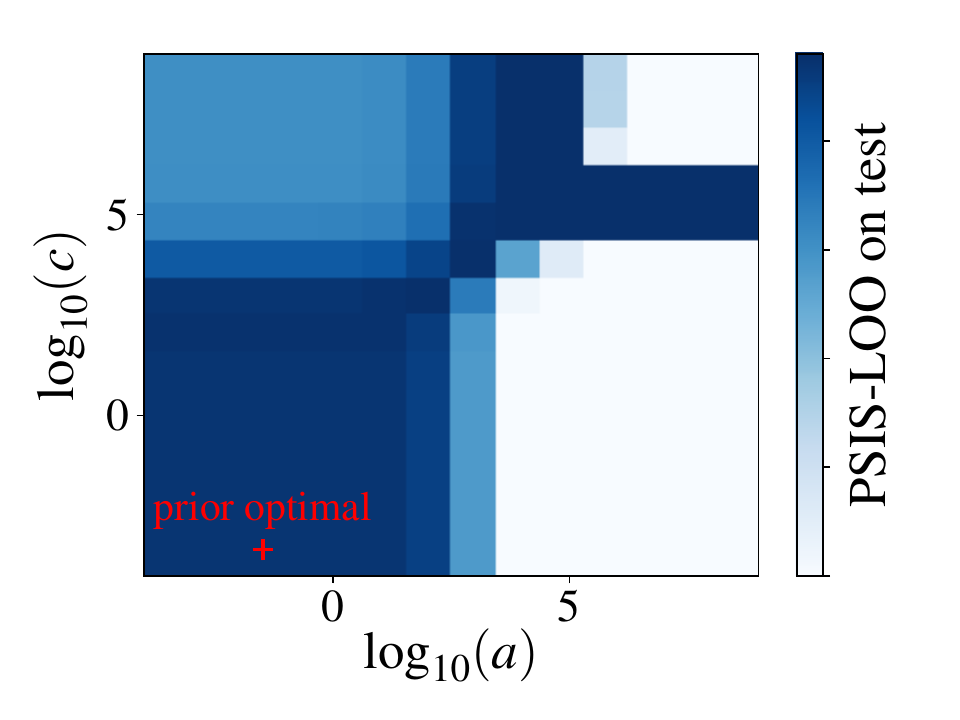}
      \includegraphics[width=0.48\columnwidth]{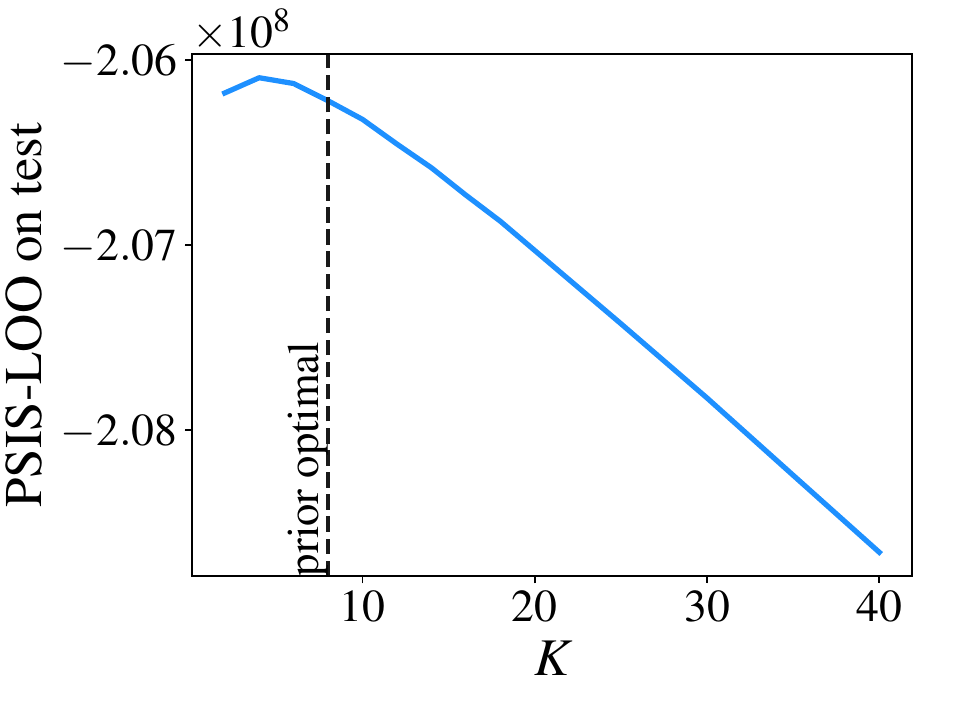}
      
    \includegraphics[width=0.48\textwidth]{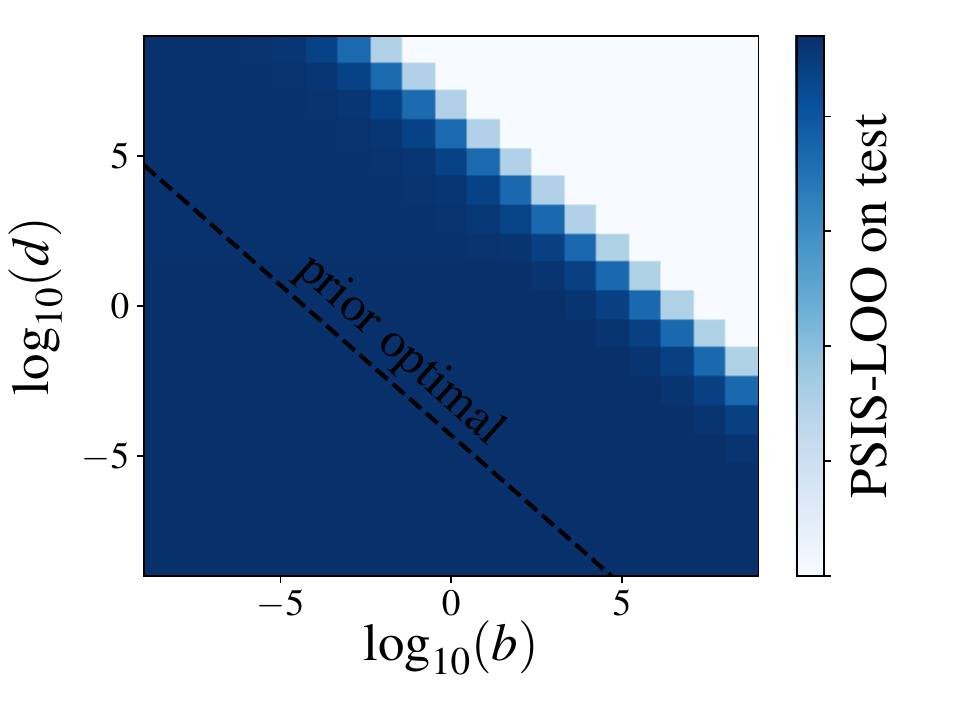}
    \includegraphics[width=0.48\textwidth]{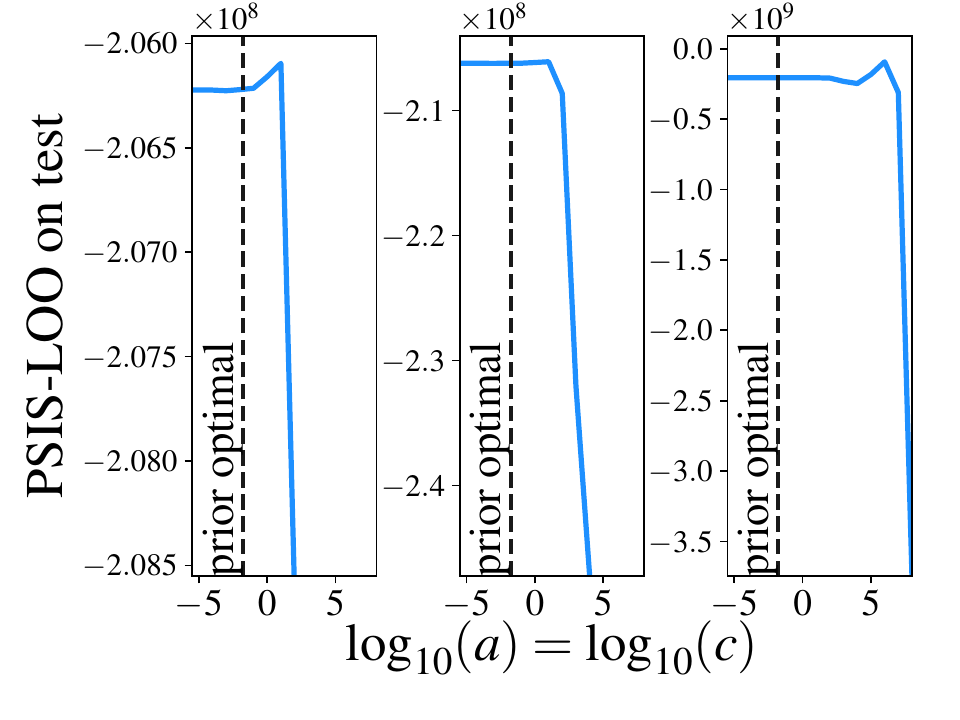}
      
    \caption{Illustration of difficulty of selecting good priors for Poisson matrix factorization, evaluated by predictive quality of a variational approximation on the \texttt{hetrec-lastfm} dataset. We show 2D (left) and 1D (right) slices of the loss surface in the five-dimensional hyperparameter space, with all other values fixed to prior optimal ones.
    The proposed prior predictive matching approach provides closed-form solution (indicated by "prior optimal"), including the latent dimensionality $K$ (top right), within the area of reasonable values.
    }   
    \label{fig:prior_posterior}
\end{figure}

\section{Prior Predictive Matching}
\label{sec:concept}

Having illustrated the challenges with BMF, we now proceed to provide a new method for determining hyperparameters of any Bayesian machine learning model, 
building on the idea of matching virtual statistics of prior predictive distribution as explained in detail below. The principle is presented from the perspective of general Bayesian models, but in this work it is demonstrated  only in the context of matrix factorization.

\subsection{General Idea}

Our goal is to select good hyperparameters $\lambda$ for a probabilistic model $p(Y,Z; \lambda)$, where $Z$ denotes actual model parameters and latent variables collectively, without directly computing the posterior quality of any particular model fit. That is, we want to avoid costly and potentially difficult global optimization requiring the %
selection of specific evaluation criterion for the quality of the final solution
as well as
training/validation split for the data.
Instead, we prefer to optimize an overall match between the model and the data characteristics.

To achieve this, we consider the prior predictive distribution
\[
p(Y;\lambda) = \int p(Y|Z;\lambda)p(Z;\lambda) dZ,
\]
which integrates out the parameters, and search for hyperparameters for which it matches the data distribution well. 
PPD is typically used for validating prior and modeling choices as part of the statistical modeling pipeline \citep{schad:2019, gelman2020bayesian}, often by visual comparison of prior predictive samples and the data, e.g., so that large deviation between the two is interpreted as an indication that the model should be modified~\citep{gabry2019visualization}. We extend the idea 
to automate the prior choice, by optimizing for $\lambda$ for which virtual statistics of PPD match sufficiently well with either prior knowledge of the user or empirical statistics for the available data.
Our goal is specifically to find a point estimate for $\lambda$, to be used for defining the prior for subsequent posterior inference and additional processing steps, and in this work we ignore potential direct prior information on $\lambda$ itself, assuming all choices are equally likely \emph{a priori}.

Note that for some models, such as Gaussian processes with conjugate likelihoods, the PPD can be expressed analytically and the optimal solution is then obtained by directly maximizing the marginal likelihood for the observed data. This approach is, however, restricted to exponential family models with conjugate priors, or requires significant model-specific effort that does not generalize over even minor variants of the model; see \citet{Bouveyron2019} for derivation of marginal likelihood for a specific Gaussian MF model. Our interest is in more general model classes where the PPD itself cannot be expressed in close form. Instead, we assume only that we can either (a) compute some low-order statistics of the PPD analytically, or (b) draw samples from the PPD. Our key contribution is providing an automatic process for selecting the hyperparameters for these broader classes of models in a computationally efficient manner. The practical method, described next, is related to the method of moments (see Section~\ref{sec:relatedwork} for more details), but is applied here for the purpose of learning the hyperparameters as an intermediate step in the modelling workflow, rather than alternative for posterior inference.

\subsection{Method}
\label{sec:usecases}

The gist of our proposed method is to search for $\lambda$ such that the PPD $p(Y;\lambda)$ and the true data distribution $p^{\dagger}(Y)$ or user's prior beliefs about $p^{\dagger}(Y)$ (when following strictly the principles of Bayesian modeling framework) match as well as possible.\footnote{We use $^{\dagger}$ to denote to the true distribution for clarity, but in practice only work on empirical estimates of statistics computed from a sample $Y$ or user-provided targets for the same statistics.} We quantify the match using a collection of statistics $\T$ that capture the essential properties of the data, for example in form of central moments. The goal is to find $\lambda$ such that the virtual statistics $\hat \T_{\lambda}$ of the PPD match some target statistics $\target$. In ideal case, we find the optimal match where $\hat \T_{\lambda} = \target$. We use the phrase \emph{virtual statistic} for $\hat \T_{\lambda}$ to emphasize that it does not correspond to any particular observed data, but can instead be thought of as the corresponding statistic computed for a hypothetical -- or virtual -- data set sampled from PPD.

This general formulation depends on two elements: (1) the choice of the statistics $\T$ (and associated discrepancy measure) used for evaluating the match, and (2) the choice of the specific target statistic values $\target$. Together they define the optimality.
Importantly, these two objects are fundamentally linked with each other. On one hand, a richer set of statistics $\T$ leads to hyperparameter choice likely to be good in broader set of applications, but at the same time implies the need to be more careful when providing the target values $\target$, while often making computation more difficult as well.
On the other hand, very simple statistics, such as only the mean of the data, are typically not sufficient for identifying a single optimal choice, but are already useful since they provide a surface of equally good choices and help ruling out nonsensical options.
Finally, let us note that
computational algorithms solving for $\lambda$ are agnostic to how $\target$ were obtained, but to clarify the broad scope of the developed machinery we explain three common use-cases with different way of defining $\target$:
\begin{enumerate}
    \item \textbf{Principled statistician:} Following the strict Bayesian principle, the target statistics may be provided by a domain expert, in form of the expected values for the statistics. When used in this form, the proposed method essentially becomes a prior elicitation method; the expert provides subjective information on what is to be expected regarding the data, and this is used for indirectly defining the prior over the model parameters, similar to e.g. \citet{kadane:1980} and \citet{hartmann2020}. Importantly, the expert only needs to provide statistics of the data and not of the model parameters, not necessarily needing to understand the whole role of the model in detail.
    \item \textbf{Held-out validation:} A somewhat more pragmatic approach is to %
    use actual observed statistics $\T$ of a separate validation data as the target values $\target$.
    For example, in the case of a recommender engine we might use a subset of the users and items to estimate the target statistics $\target$ and find $\lambda$ for which the virtual statistics of the PPD best match the observed ones. After this, this data subset is discarded and the remaining data is used for posterior inference and possible further computation steps with the hyperparameters fixed to the selected ones.
    \item \textbf{Automatic prior specification:} Finally, the method can also be used in a fashion where we use the observed statistics $\T$ of all available data $Y$ as the targets $\target$,
    loosely following the concept of empirical Bayes \citep{Casella:1985}. While this breaks the fundamental idea of specifying the priors independent of the data, the statistics we use in practice are of low dimensionality and only characterize the data on a rough level. Consequently, we expect many practitioners to be comfortable in using the tool also in this manner.
    Prior predictive checks are routinely used for manually tuning the priors so that the support of the PPD roughly matches the data \citep{schad:2019,gabry2019visualization,gelman2020bayesian}, and our approach can be interpreted as automated procedure for this if using simple statistics like mean and variance.
    Note that this reasoning would no longer hold if using very rich statistics, in the extreme case directly using individual data entries so that $\target=Y$, but in this work we only consider problems where $\T$ consists of a few low-order moments.
\end{enumerate}

Throughout this work we use moments such as mean and variance as the statistics $\T$, since they lead to computationally efficient algorithms applicable for reasonably broad model families, but the method would work for other choices like quantiles or extreme values as well. In particular, we go through details of two practical algorithms for different scenarios, demonstrated in the context of Bayesian MF models. In Section~\ref{sec:model_specific}, we first look at cases for which we can compute certain low-order moments of the PPD analytically and hence can find a closed-form expression for $\lambda$ corresponding to the optimal match $\hat \T_{\lambda} = \target$. This is ideal in terms of computation, but restricted in scope to specific models and statistics. Hence, in Section
~\ref{sec:model_independent}, we proceed to provide a
 general-purpose algorithm applicable to broader family of models and statistics, formulated as explicit optimization of a discrepancy measure between the PPD and target statistics, using sampling-based estimates for the virtual statistics and stochastic gradient-descent (SGD) optimization. The method is applicable for all continuous hyperparameters and requires only ability to sample from PPD, but alternative forms of optimization could be considered to extend support also for discrete hyperparameters. We demonstrate this algorithm for hierarchical Bayesian MF models, but it could be applied also outside matrix factorizations.

\section{Matching Moments for PMF and CPMF} \label{sec:model_specific}

PMF as specified in \eqref{eq:genmf} allows us to derive analytic expression for certain moments of the PPD, to be used as virtual statistics. If we denote by $Y$ a virtual data matrix following the PPD, we can compute the  
mean $\E[Y_{ij} ; \lambda ]$, the variance $\var[Y_{ij} ; \lambda ]$, and the correlations $\rho[Y_{ij}, Y_{tl}  ; \lambda ]$ in closed form. Here the hyperparameters are
$ \lambda = \{K, \mu_{\theta},\sigma_{\theta}^2, \mu_{\beta},\sigma_{\beta}^2 \}$, and we drop the explicit depedency on $\lambda$, writing $\E[Y_{ij}] := \E[Y_{ij} ; \lambda ]$. The detailed derivations, provided in the Appendix, build primarily on the laws of total expectation, variance and covariace for marginalizing out $\theta$ and $\beta$.

\begin{proposition}\label{prop:moments}
For any entry of the virtual data matrix $\mathbf{Y} =\{ Y_{ij} \} \in \mathbb{R}^{N \times M}$, the mean and variance is given by:
\begin{align}
    \E[Y_{ij}] &= K\mu_{\theta}\mu_{\beta} \label{eq:pmf_closedform_exp} \\
    \var[Y_{ij}] &= K[\mu_{\theta}\mu_{\beta}+ (\mu_{\beta}\sigma_{\theta})^2 +(\mu_{\theta}\sigma_{\beta})^2+ (\sigma_{\theta}\sigma_{\beta})^2]
    \label{eq:pmf_closedform_var}
\end{align}
\end{proposition}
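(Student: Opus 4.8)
The plan is to marginalize the latent factors by repeated use of the tower property, exploiting that $Y_{ij}$ conditioned on all of the $\theta_{ik}$ and $\beta_{jk}$ is Poisson with rate $r_{ij} \eqdef \sum_k \theta_{ik}\beta_{jk}$, a distribution whose mean and variance both equal $r_{ij}$.

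For the mean, I would apply the law of total expectation: $\E[Y_{ij}] = \E[\E[Y_{ij} \mid \theta, \beta]] = \E[r_{ij}] = \sum_k \E[\theta_{ik}\beta_{jk}]$. Since each $\theta_{ik}$ is drawn independently of each $\beta_{jk}$, every term factors as $\E[\theta_{ik}]\E[\beta_{jk}] = \mu_{\theta}\mu_{\beta}$, giving $\E[Y_{ij}] = K\mu_{\theta}\mu_{\beta}$, which is \eqref{eq:pmf_closedform_exp}.

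For the variance, I would use the law of total variance, $\var[Y_{ij}] = \E[\var[Y_{ij}\mid\theta,\beta]] + \var[\E[Y_{ij}\mid\theta,\beta]]$. The first term equals $\E[r_{ij}] = K\mu_{\theta}\mu_{\beta}$ by the Poisson mean–variance identity together with the mean computation above. For the second term, $\var[\E[Y_{ij}\mid\theta,\beta]] = \var[r_{ij}] = \var[\sum_k \theta_{ik}\beta_{jk}]$; the summands are mutually independent across $k$ because distinct $k$ involve disjoint sets of latent variables, so the variance of the sum is $\sum_k \var[\theta_{ik}\beta_{jk}]$ with no covariance contributions. I would then expand each term using independence of $\theta_{ik}$ and $\beta_{jk}$ as $\var[\theta_{ik}\beta_{jk}] = \E[\theta_{ik}^2]\E[\beta_{jk}^2] - \mu_{\theta}^2\mu_{\beta}^2$, substitute $\E[\theta_{ik}^2] = \mu_{\theta}^2 + \sigma_{\theta}^2$ and $\E[\beta_{jk}^2] = \mu_{\beta}^2 + \sigma_{\beta}^2$, and cancel to obtain $(\mu_{\beta}\sigma_{\theta})^2 + (\mu_{\theta}\sigma_{\beta})^2 + (\sigma_{\theta}\sigma_{\beta})^2$. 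Multiplying by $K$ and adding the first term yields \eqref{eq:pmf_closedform_var}.

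The computation is essentially routine, and the only place demanding care — where a slip is most likely — is tracking the independence structure: that the cross-term products $\theta_{ik}\beta_{jk}$ are independent across $k$ so the variance of their sum has no cross terms, and that $\theta_{ik}$ is independent of $\beta_{jk}$ so that second moments of the products factor. Everything else uses only the first two moments of the location–scale family $f$, which is precisely why the result holds for a general $f$ rather than only the Gamma case.
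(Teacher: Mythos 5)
Your proposal is correct and follows essentially the same route as the paper's proof: the law of total expectation for the mean, the law of total variance combined with the Poisson mean--variance identity for the variance, and the observation that the products $\theta_{ik}\beta_{jk}$ contribute no cross-covariances across $k$ so that $\var[\sum_k \theta_{ik}\beta_{jk}] = \sum_k \var[\theta_{ik}\beta_{jk}]$. The only cosmetic difference is that you expand $\var[\theta_{ik}\beta_{jk}]$ via $\E[\theta_{ik}^2]\E[\beta_{jk}^2]-\mu_\theta^2\mu_\beta^2$ whereas the paper uses the equivalent identity $\var[XY]=\E[X]^2\var[Y]+\E[Y]^2\var[X]+\var[X]\var[Y]$; both yield $(\mu_\beta\sigma_\theta)^2+(\mu_\theta\sigma_\beta)^2+(\sigma_\theta\sigma_\beta)^2$.
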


\begin{proposition}\label{prop:moments_2}
For any pair of entries $ Y_{ij} $ and $Y_{tl}$ of   matrix $\mathbf{Y}$, their correlation is given by:
\begin{align}
     \rho[Y_{ij},Y_{tl}] &= \begin{cases} 
0,\text{if }i \neq t \And j \neq l\\
1,\text{if }i = t \And j = l \\
\rho_1 ,\text{if }i = t \And j \neq l \\
\rho_2 ,\text{if }i \neq t \And j = l
\end{cases}
\end{align}

With $\rho_1 = \frac{K(\mu_{\beta}\sigma_{\theta})^2}{\var[Y_{ij}]} $ and $\rho_2 =\frac{K(\mu_{\theta}\sigma_{\beta})^2}{\var[Y_{ij}]} $.
\end{proposition}

These results indicate exactly how the hyperparameters of the model directly affect the virtual statistics derived from PDD, here denotated by $ \E[Y_{ij}]$, $\var[Y_{ij}]$ and $\rho[Y_{ij},Y_{tl}]$.
Given Propositions \ref{prop:moments} and \ref{prop:moments_2} and some target values for the moments, we can directly solve e.g. for the number of latent factors $K$. Denoting
$ \tau = 1-(\rho_1+\rho_2)$, we obtain:
  \begin{align}
      K  &= \frac{ \tau\var[Y_{ij}]-\E[Y_{ij}]}{\rho_1\rho_2} \left( \frac{\E[Y_{ij}]}{\var[Y_{ij}]} \right)^2 \label{eq:latent_var_est}.
  \end{align}
We also obtain formulas for relationships between the means and standard deviations of the priors that can be used to set e.g. the Gamma hyperparameters $a$, $b$, $c$ and $d$.
\begin{align}
    a &= \frac{\rho_2 \var[Y_{ij}] }{\tau\var[Y_{ij}]-\E[Y_{ij}]}   \\
    c &=  \frac{\rho_1 \var[Y_{ij}] }{\tau\var[Y_{ij}]-\E[Y_{ij}]}   \\
    bd &= \frac{\E[Y_{ij}]}{\var[Y_{ij}]}\sqrt{\frac{ac}{\rho_1 \rho_2}}.\label{eq:hyper_bd}
\end{align}

For the CPMF model we obtain the following result 
  \begin{align}
K  &= \frac{\tau\var[Y_{ij}]-\left(\kappa \psi'(w)+\frac{\psi{''}(w)}{\psi'(w)} \right) \E[Y_{ij}]}{\rho_1\rho_2} \left( \frac{\E[Y_{ij}]}{\var[Y_{ij}]} \right)^2 \label{eq:latent_var_est_2}
  \end{align}
for the latent factors and the following relationships for other terms:
\begin{align*}
    \sigma_\theta \sigma_\beta &= \frac{\var[Y_{ij}]}{\E[Y_{ij}]\kappa \psi'(w)}\sqrt{\rho_1 \rho_2} \\
    \E[Y_{ij}] &= \kappa \psi'(w) K\mu_{\theta}\mu_{\beta} \\
    \var[Y_{ij}] &=\kappa \psi{''}(w) K\mu_{\theta}\mu_{\beta}+[\kappa\psi'(w)]^2K[\mu_{\theta}\mu_{\beta}
     +(\mu_{\beta}\sigma_{\theta})^2+(\mu_{\theta}\sigma_{\beta})^2+(\sigma_{\theta}\sigma_{\beta})^2].
\end{align*}
The derivations are provided in Appendix in Propositions 8, 9, 10 and 13.

The generic observation model $Y_{ij} \sim F_Y(\sum_{k=1}^{K} \theta_{ik} \beta_{jk})$, with  $\E[Y_{ij} | \theta,\beta] = \sum_{k=1}^{K} \theta_{ik} \beta_{jk}$ can be analyzed using the same techniques. In this case we obtain the following equation for the dimensionality of the latent factors:
\begin{align}
  K  &= \frac{ \tau\var[Y_{ij}]-\E[\var(Y_{ij}|\theta, \beta)]}{\rho_1\rho_2} \left( \frac{\E[Y_{ij}]}{\var[Y_{ij}]} \right)^2 \label{eq:generic_solution}.
\end{align}
The term $\E[\var(Y_{ij}|\theta, \beta)]$ is model dependent with each distinct choice of a distribution for the observation model $F_Y$ having a different functional form for conditional variance $\var(Y_{ij}|\theta, \beta)$. For example, in the case of probabilistic matrix factorization model with a Gaussian observation model $ Y_{ij} \sim  \mathcal{N}(\sum_{k=1}^{K} \theta_{ik} \beta_{jk},\sigma_Y^{-1})$, we obtain $\E[\var(Y_{ij}|\theta, \beta)]=\sigma_Y^2$; while in the case of Poisson matrix factorization with observation model $ Y_{ij} \sim  \poi(\sum_{k=1}^{K} \theta_{ik} \beta_{jk})$, we would obtain $\E[\var(Y_{ij}|\theta, \beta)]=\E[Y_{ij}]$. Therefore, we obtain different analytical formulas based on Eq.~\ref{eq:generic_solution} for each of these models. The analysis presented here can be applied to virtually any matrix factorization models, with any combination of distributions for the priors $F(\mu_{\theta},\sigma_{\theta}^2)$ or $F(\mu_{\beta},\sigma_{\beta}^2)$, and observation model $F_Y$ (with finite mean and non-zero variance).

\textbf{Observations:} the above equations provide closed-form expressions that determine the priors in terms of the virtual statistics of PPD matching the target statistics. They can be computed instantaneously, bypassing the need for optimizing $\lambda$, and provide %
a computationally efficient way of automatically determining the number of factors for PMF and CPMF. The result is not necessarily optimal for any particular task, especially when the data does not follow the model well, but as illustrated experimentally in Section~\ref{sec:experiments} tends to be a good choice when there is small model mismatch and for the dataset analyzed.

\subsection{Empirical Estimates for the Moments}

\begin{algorithm}[t]
    \SetAlgoLined
    \SetKwInOut{Input}{input}\SetKwInOut{Output}{output}
    \Input{observed matrix $\mathbf{Y} =\{ y_{i,j} \} \in \mathbb{R}^{N \times M}$, number of samples for the estimator $S$}
    \Output{$\widetilde{\rho_1}$ and $\widetilde{\rho_2}$ }
     Initialize arrays $\mathbf{a} =\{ a_{i,j} \} \in \mathbb{R}^{S \times 2}$ and $\mathbf{b} =\{ b_{i,j} \} \in \mathbb{R}^{S \times 2}$\;
     \For{ $ s \in \{1, \cdots , S\} $}{
      Sample $ i \sim \text{Unif}( \{1, \cdots ,N \} ) $\;
      Sample $ j_1 \sim \text{Unif}( \{1, \cdots , M \} ) $\;
      Sample $ j_2 \sim \text{Unif}( \{1, \cdots , M \} \setminus \{ j_1 \}  ) $\;
      $a_{s,1} \leftarrow y_{i,j_1}$\;
      $a_{s,2} \leftarrow y_{i,j_2}$\;
      
      Sample $ j \sim \text{Unif}( \{1, \cdots ,M \} ) $\;
      Sample $ i_1 \sim \text{Unif}( \{1, \cdots , N \} ) $\;
      Sample $ i_2 \sim \text{Unif}( \{1, \cdots , N \} \setminus \{ i_1 \}  ) $\;
      $b_{s,1} \leftarrow y_{i_1,j}$\;
      $b_{s,2} \leftarrow y_{i_2,j}$\;
     }
     Calculate and return the Pearson correlation of columns of $\mathbf{a}$ and $\textbf{b}$\;
     \caption{Empirical correlations $\widetilde{\rho_1}$ and $\widetilde{\rho_2}$ for a observed matrix $\mathbf{Y} =\{ y_{ij} \} \in \mathbb{R}^{N \times M}$}
     \label{alg:corr}
    \end{algorithm}

As explained in Section~\ref{sec:usecases}, one way of using the method is based on matching the virtual statistics with the true statistics of the observed data.
For MF models, we only have a single matrix representing one (often partial) observation, and hence need to estimate the statistics by averaging over the rows and columns of one matrix instead of averaging over multiple matrices.
The mean and variance can be easily estimated over the independent matrix entries, but it is not immediate nor intuitive how to compute the correlations. One remark is that there are two values of correlations, one for rows and another one for columns, hence we can levarage this property of the model to make an estimator. To estimate the correlation we derived an estimator, described in Algorithm~\ref{alg:corr}, that samples two elements from the same row or column and uses them to calculate the correlation of elements sharing a row or column index, independent of the specific index.

\section{Gradient-based Approach} \label{sec:model_independent}

Deriving analytic expressions for even simple models and moments is tedious, error-prone, and often impossible. For users willing to give up the convenience and robustness of analytic expressions, we next provide an optimization-based alternative that does not require model-specific derivations. We will later demonstrate this algorithm in context of a hierarhical variant of Poisson MF by \citet{gopalan2015scalable}, as a concrete example of a matrix factorization model for which analytic moments are not known.

\subsection{Formulation as an Optimization Problem}

Instead of directly equating the target and virtual statistics, we solve for $\lambda$ that minimizes a discrepancy measuring the difference between the requested quantities $\target$ (either expert's prior expectation or empirical estimate) and the virtual statistics $\hat \T_{\lambda}$ of the PPD
\begin{equation}
\min_\lambda \discrepancy\left(\target, \hat T_{\lambda}\right),
\label{eq:discrepancy}
\end{equation} 
where for brevity we write $\hat \T_{\lambda}$ instead of the complete 
$\hat T (\E[g(Y);\lambda])$.
As before, $\T$ is a collection of statistics (e.g. central moments) defining which aspects are used for learning the hyperparameters. Intuitively, a richer set allows more accurate prior specification, but requires more careful choice of discrepancy as well.
For example, to match at the same time an expected value $\text{E}^*$ and variance $\text{V}^*$ we can use
$\discrepancy := (\text{E}^* - \E[Y])^2 + (\text{V}^* - (\E[Y^2]-\E[Y]^2))^2$,
where $g(Y) = (Y, Y^2)$
and $\hat \T(E_1, E_2) = (E_1, E_2-E_1^2)$. 
The result of the optimization problem naturally depends on the choice of the discrepancy and -- in the case of multiple target statistics -- the weighting of the different statistics. However, when the data follows the assumed model and the target statistics are achievable, we can typically reach $\hat T_{\lambda} = T$ and the choice only influences the computational efficiency of the optimizer, not the result itself. For other cases, the user needs to select an apppropriate measure and weighting that reflects their preferences.

The process builds on repeatedly drawing samples from PPD to estimate the virtual statistics $\hat{\T}_{\lambda}$, and solving for $\lambda$ using some iterative algorithm. Importantly, this only requires the model code providing the samples and the target statistics $\target$, and hence allows solving for the priors as part of the modeling pipeline without needing to consider any particular data.

\subsection{Differentiable Moments' Estimators}
\label{sec:differentiating_moments}

We optimize \eqref{eq:discrepancy} with 
stochastic gradient descent, using
Monte Carlo approximation~\citep{mohamed2019monte} for the prior predictive moments and automatic differentiation with reparameterization gradients~\citep{figurnov2018implicit, hartmann2020} wherever available and REINFORCE (log derivative trick)~\citep{williams1992simple} can be used elsewhere.
For gradient-based optimization %
we require
that $\discrepancy(\cdot)$ and $\hat \T$ are differentiable w.r.t their arguments, and 
that we can propagate gradient
 $\nabla_\lambda$
 through $\E[g(Y)]$.

Even though our main interest is in BMF models, we directly derive the stochastic optimization for a somewhat more general family of hierarchical Bayesian models consisting of $L$ layers of latent variables $\Z_l$.
The procedure is based on recursively applying the law of total expectation. 
The unconditional expectation of $g(Y)$ can be obtained by integrating out latent variables $Z$,
but 
since an analytical form of it is not available, 
we proceed by performing a numerical approximation,
where each of the integrals over latent variables $Z_1, \dots Z_l \dots Z_L$ is replaced by a sum over samples from respective (conditional) distributions.
An estimate of 
the required gradient $\nabla_\lambda \E[g(Y)]$ is then obtained by propagating estimates of the gradients
$\nabla_\lambda \E[g(Y)|\z_l]$ and $\nabla_\lambda \log p(\z_l| \dots; \lambda)$ backward through the computation graph.

The detailed description and derivation of the algorithm for estimating the gradients, which requires somewhat tedious notation and may not be of interest for readers primarily interested in BMF models, is provided in Supplement~\ref{sec:generalmethod}. We then show in Supplement~\ref{sec:model_indep_pmf} how both PMF and a hierarchical Poisson factorization (HPF) by \citet{gopalan2015scalable} can be expressed as instances of this general structure. For the HPF model we do not have analytic expressions for the moments and hence need this more general algorithm.

\section{Related Work}
\label{sec:relatedwork}

The way we use the prior predictive distribution for hyperparameter optimization is, to our best knowledge, novel. The technical elements and the overall goal are, however, related to several seemingly distinct concepts, briefly outlined here.

\textbf{Method of moments.}
Method of moments (MoM) refers to inferring model parameters by equating (in our terminology) virtual moments
with sample moments, as an alternative to maximum likelihood or Bayesian inference \citep{casella, pawitan, keith}. Even though our method involves similar expressions, the approaches are fundamentally different in their goals. MoM is typically used for learning a point estimate of the parameters ($Z$ in our notation) of a probabilistic model based on observed data and the theoretical moments for the model, whereas our goal is to set values of hyperparameters $\lambda$ and the formulation can be used for any statistics and targets that may not even be estimated from any data. Importantly, in our case we proceed to conduct standard Bayesian inference for $Z$ with the chosen $\lambda$.
The practical algorithms are also different. MoM is typically used either for simple models with easy analytic expression for the moments, or as generalized method of moments relying on asymptotic normality \citep{casella, pawitan}. Our derivations for PMF and CPMF are contributions for the MoM literature in itself, providing equations for computing the virtual moments for a non-trivial model class, and the model-independent algorithm using stochastic gradient optimization may be useful also in other contexts.

\textbf{Prior elicitation.}
By following the Bayesian paradigm, the choice of prior should be conducted independent from the observed data \citep{garthwaite:2005, ohagan:2006, mikkola2022}. When priors are difficult to specify, \emph{prior elicitation} can be used to convert expert opinions expressed e.g. using graphical interfaces to prior distributions. This is often done by eliciting information about the parameters, but e.g. \citet{kadane:1980} and \citet{akbarov:2009} have studied prior elicitation using prior predictive distributions.
When the target statistics are provided by the user, our approach can be seen as prior elicitation following their principles but using specific type of user-specified information. Furthermore, they could only elicit priors for over-simplistic models with Gaussian likelihoods, whereas we provide tools 
for more general models with potentially large number of latent variables or hierarchical structure. More recently, \citet{hartmann2020} followed similar lines as our paper by developing model-independent prior elicitation method based on PPD, but modelled directly the probability of the outcome data using a Dirichlet process model, to focus on accounting for the expert uncertainty in the numerical specification of the probability.

\textbf{Learning flexible priors.} For highly flexible models, such as variational autoencoders (VAE) or Bayesian neural networks (BNN), the question of the prior choice is more poorly defined than for classic Bayesian models. Several authors have proposed directly learning a flexible prior for such cases, typically by optimizing the same objective that is used for inference. For instance, \citet{tomczak2018vae} and \citet{Klushyn2019priors} learnt mixture priors for VAEs by optimizing the variational objective over the prior as well, and \citet{nalisnick2018learning} and \citet{ducontrolled} learnt parametric priors for BNNs to satisfy specific properties (e.g. rotational invariance or monotonicity) for the model predictions. The most closely related work in this direction is \citet{nalisnick2021predictive}, who proposed \emph{predictive complexity priors} to learn a prior such that the model output best matches that of a reference model. Their reference model can be interpreted as playing a role similar to our target statistic, providing an alternative way for the analyst to provide subjective information on how the model is expected to work, and their algorithm minimizing Kullback-Leibler divergence between the outputs shares some commonalities with our model-independent solution.

Our approach has two main differences to these methods directly optimizing for the prior. First of all, our primary interest is in determining hyperparameters for a specific hierarhical Bayesian models with clearly defined priors, rather than learning an arbitrary prior distribution to optimize the predictive performance. The more fundamental difference, however, is that all of these approaches use the observed data itself while fitting the prior, often by explicitly maximizing the final learning objective, whereas we only consider the PPD of the model. Even if using the observed data to determine the target statistics $\target$, the process for determining the priors still does not involve posterior inference in contrast to these approaches.

\textbf{Empirical Bayes.}
When the target statistics are computed directly from observed data, rather than provided by the user based on domain expertise, the method resembles \emph{empirical Bayes} (EB) methods that use the data to form the prior \citep{Casella:1985}. While this is misaligned with rigorous Bayesian principles, EB is commonly used for simplifying modeling tasks, e.g. by fixing hyperparameters to their maximum marginal likelihood values. Our approach shares the conceptual motivation and can be seen as practical extension of EB to models for which classical EB would be difficult. In context of BMFs, \citet{wang2018empirical} recently provided EB solution for a variational approximation of the model. This approach uses flexible prior families, but is focused on Gaussian likelihoods.

\textbf{Global optimization.}
The approach provides an alternative to global optimization of hyper-parameters with techniques such as Bayesian optimization \citep{DBLP:conf/NeurIPS/SnoekLA12}. The core difference is that they directly optimize for some specific criterion and posterior inference algorithm, and the result will not be optimal for other choices. Our result naturally depends on the choice of the statistics (and the discrepancy measure), but is agnostic of the eventual inference algorithm and the utility of the downstream task -- it can be used for any  modeling task and as such better follows the Bayesian modeling paradigm. Global optimization also has clear computational disadvantage in requiring repeated posterior analysis, though strategies for reducing the cost by using subsets of data have been developed \citep{Fabolas}. For models with analytic solution our approach is immediate, but also the model-independent algorithm has in our experiments been faster than any method relying on posterior inference. 

Finally, we would like to emphasize that our method takes advantage of the fact that generative models define a sampling distribution, and hence is restricted to such models. That is, we can only carry out hyperparameter optimization for generative probabilistic models, in contrast to Bayesian optimization that can be applied for determining hyperparameters of arbitrary machine learning models. Relying on this additional property is also why were are able to outperform the more generic method.

\textbf{Likelihood-free inference.}
Our approach also relates to likelihood-free inference (or Approximate Bayesian Computation) for posterior inference of models for which the likelihood cannot be evaluated in closed form but can be sampled from \citep{Marin2012,Lintusaari2016}. The prior predictive distribution has these properties, which means our method can be interpreted as likelihood-free inference for the hyperparameters and we can borrow suggestions for discrepancy measures from the literature in that field (e.g. \citealt{Beaumont2010,Lintusaari2016}). A core difference, however, is that we do not seek for a distribution over the hyperparameters but instead prefer a point estimate; the result will typically be used as input for subsequent modeling stages and we do not expect the user to be willing to specify further hierarchical priors for these parameters. This allows us to use direct gradient-based optimization, in constrast to model-based approximations \citep{Gutmann2017} or inefficient MCMC samplers \citep{marjoram2003} used for likelihood-free inference.

\textbf{Spectral methods.}
For various latent variable models, such as topic models, spectral methods find globally optimal parameters by matching low-order moments of data while integrating out the  latent variables \citep{anandkumar2012}. We also use low-order moments (or other statistics) as inputs and search for point estimates -- for hyperparameters while marginalizing out the parameters -- but otherwise the techniques are very different. Spectral methods require as inputs accurate estimates for very fine-grained moments (e.g. for topic models the joint probability of all possible triples of words, which already for a vocabulary of 10,000 words corresponds to $10^{12}$ values), whereas we use a small number of global moments -- just four in case of PMF -- that can be provided as crude estimates.

\section{Experiments}
\label{sec:experiments}

Here we present two main experimental results demonstrating the potential of our method, followed by a set of technical examinations illustrating its properties and behavior. 
We first verify the approach by showing in Section \ref{sec:analytic} that for PMF the analytic solution can retrieve the true generating prior for artificial data. 
Then, in Section~\ref{sec:experiment_bo} we compare hyperparameters selected by prior predictive matching against the ones found by Bayesian optimization applied directly for a posterior quality measure.
Finally, Section \ref{sec:technical} presents several technical illustrations, including an analysis of sensitivity of the approach to model mismatch or various kinds and a convergence study for the gradient-based algorithm.
 Throughout the experiments we use capital letters to refer to different hyperparameter settings, defined in Tables~\ref{tab:pmf_initializations}, \ref{tab:hpf_initializations}, \ref{tab:priors_initialization} and \ref{tab:distributions_params} in Appendix.
The code used for all experiments is available in public repository\footnote{\url{https://github.com/zehsilva/prior-predictive-specification}}.

\subsection{Analytic Solution for PMF and CPMF}
\label{sec:analytic}

\subsubsection{Number of factors for count data}

\begin{figure}[t]
    \centering
      \includegraphics[width=0.48\columnwidth]{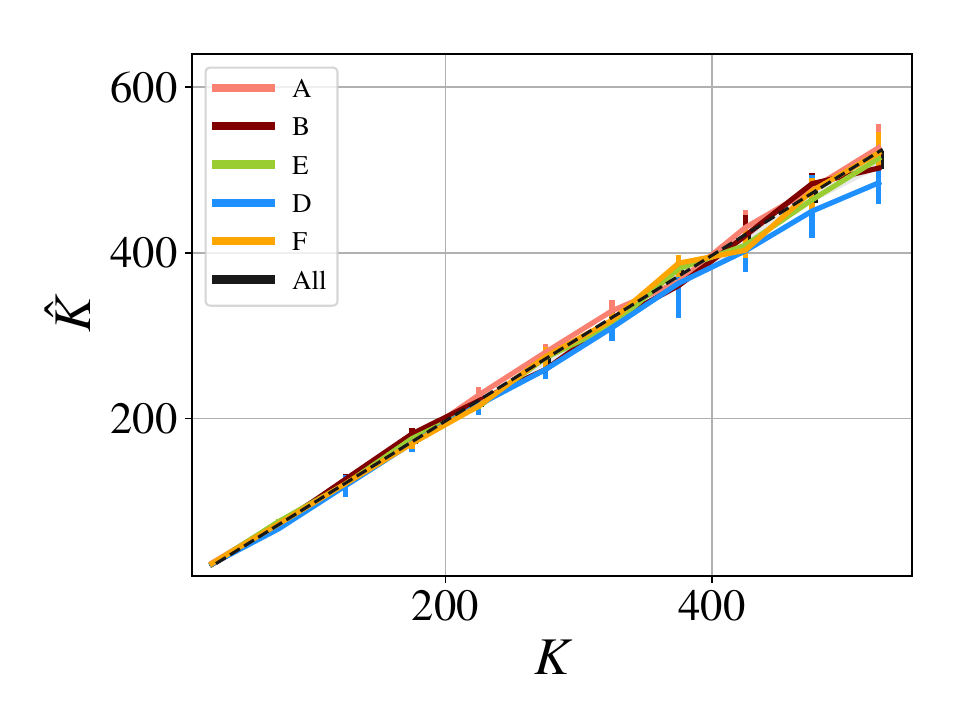}
      \includegraphics[width=0.48\columnwidth]{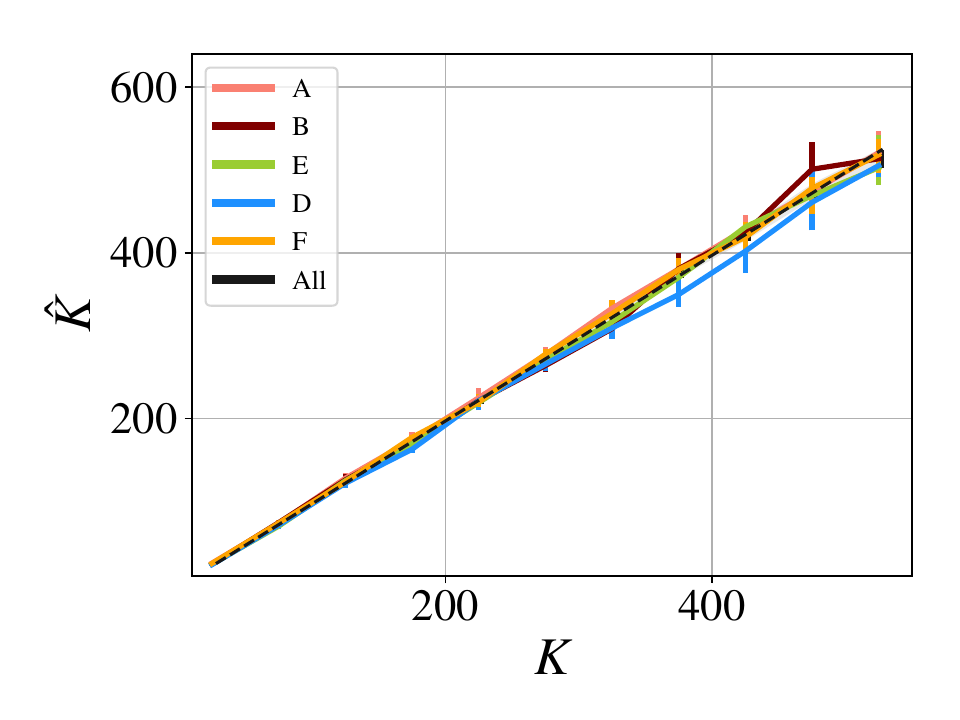}
    \caption{Prior predictive matching provides estimates $\hat K$ for each true latent factor dimensionality $K$ and prior configurations (colored lines), as analytic expression of empirical moments for both Poisson MF (left) and Compound Poisson MF (right). The lines indicate the median over 30 replications and the shaded are corresponds to $95\%$ confidence intervals.}   
    \label{fig:model-based}
\end{figure}

\begin{figure}[t]
    \centering
    \includegraphics[width=0.32\textwidth]{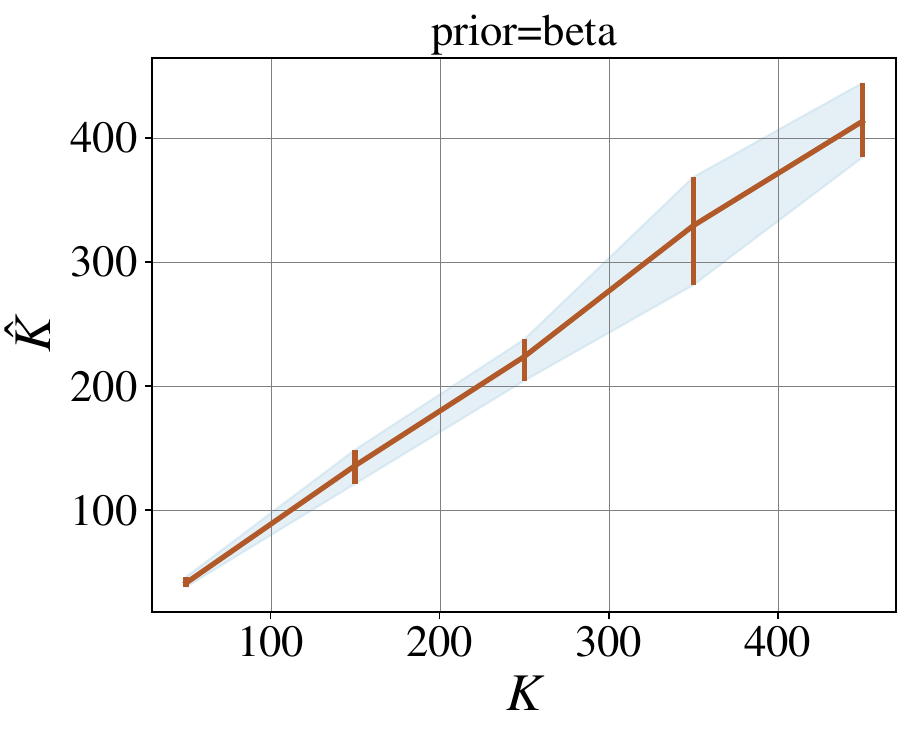}
    \includegraphics[width=0.32\textwidth]{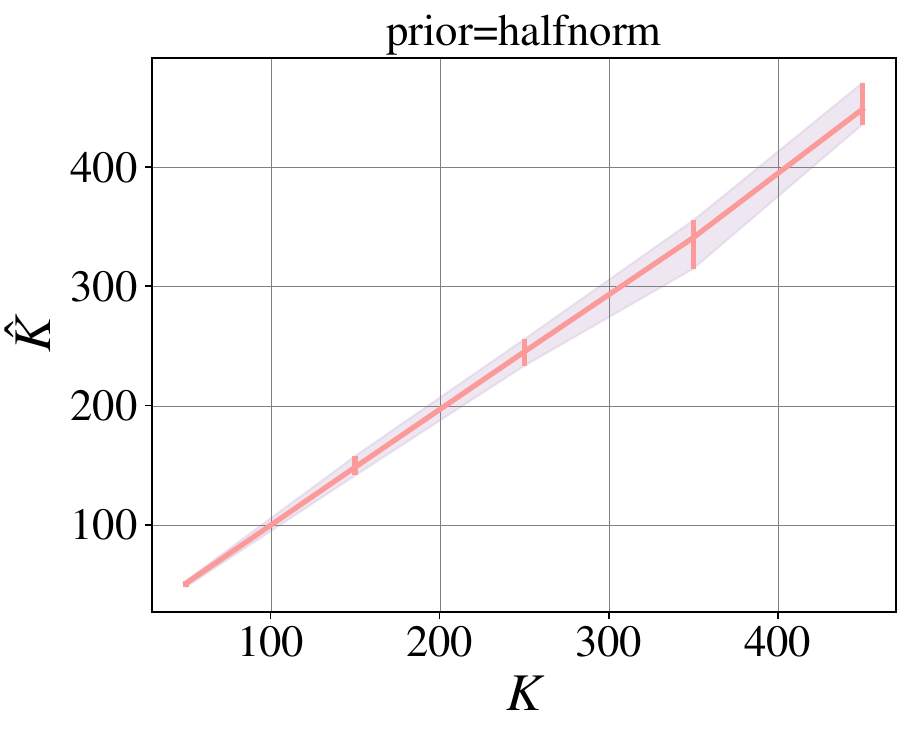}
    \includegraphics[width=0.32\textwidth]{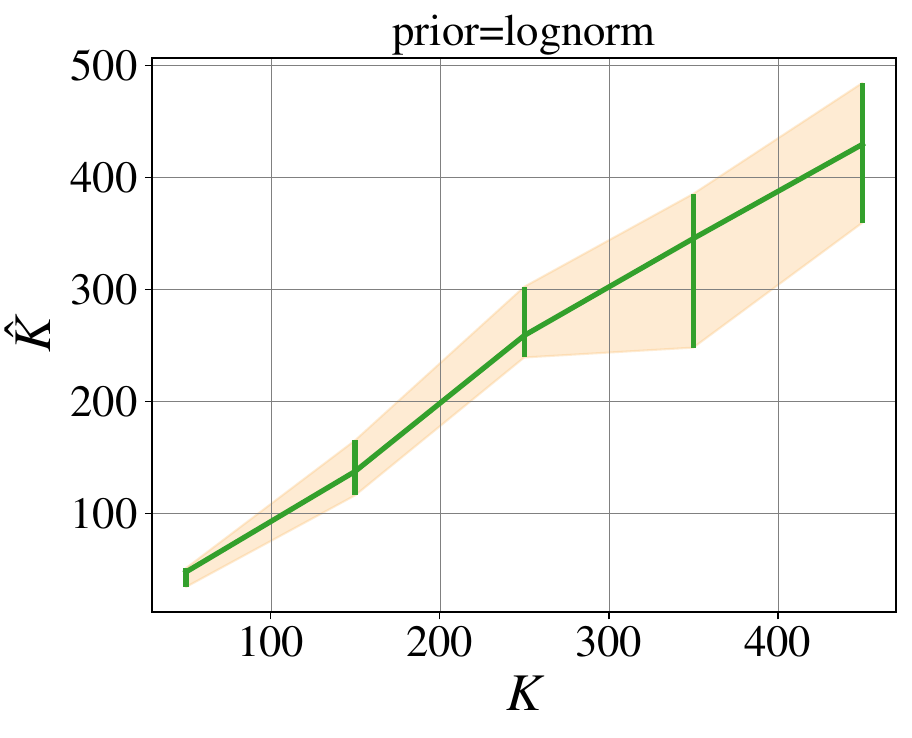} 
    \caption{Comparison of the latent factor dimensionality estimator $\hat K$ for different choices of prior distribution for PMF: Beta (left), Half-Normal (center) and Log-Normal (right). 
    The y-axis represent the estimated latent factor dimensionality $\hat K$ with 95\% confidence intervals around the median, and the x-asis represent the true dimensionality $K$.
    }        
    \label{fig:diffpriors}
\end{figure}

For PMF and CPMF, \eqref{eq:latent_var_est} and \eqref{eq:latent_var_est_2} provide analytic expressions for hyperparameters given the target moments. We demonstrate them in an empirical Bayes scenario, where estimates of the observed data are used as targets, to show that the results are robust to  estimation errors.
We sample a data matrix (of size {$10^3 \times 10^3$}) from the model for 30 scenarios where the true hyperparameters (denoted by $\lambda^*$) are set at different values. We repeat this for a range of values for the true $K$, and for each data compute the empirical estimates required for estimating the number of factors using \eqref{eq:latent_var_est} and \eqref{eq:latent_var_est_2}. Figure~\ref{fig:model-based} shows the estimates %
accurately match the ground truth when the data follows the model, for both PMF and CPMF with observation model $Y_{ij} \sim \sum_{i=1}^{N_{ij}} \mathcal{N}(1,1)$. For both cases we used gamma distributions as the priors.

The formulation of the model in \eqref{eq:genmf} allows for generic prior distributions with hyperparameters mean $\mu$ and variance $\sigma^2$, and the resultings equations are valid for multiple choices of prior distribution consistent with the model constraints (for example of non-negativity of the rate of the Poisson). We repeated the above analysis using different prior distributions for PMF: Beta, Half-Normal and Log-Normal. For each prior distribution we fixed a set of valid hyperparameters (the values used for each the configuration of prior distribution are in Table~\ref{tab:priors_initialization}) . Figure~\ref{fig:diffpriors} shows the results for these choices of priors, verifying that the result holds irrespective of the prior.

\subsubsection{Different observation models}\label{sec:differentmodels}

To demonstrate the method on other forms of observed data, we next evaluate the analytic solutions of \eqref{eq:generic_solution} for the latent dimensionality of the generic MF model of \eqref{eq:generic_pmf}. The empirical validation is performed using Gumbel, Laplace and Normal distributions for the observation model, with Log-Normal, Half-Normal, Beta and Gamma priors for each of observation distributions. The location parameters of the observation distributions were set such that the conditional mean is linear with respect to the latent factors $\E[Y_{ij} | \theta, \beta]=\eta_{ij}=\sum_{k=1}^{K}\theta_{ik}\beta_{jk}$, while the expected conditional variance is a hyperparameter of the model that is related to the scale parameters of the distribution (details in Appendix~\ref{append:experiments}). For example, for the Gumbel distribution with location and scale parameters $\mu_{ij}$ and $\sigma$, we have $Y_{ij} \sim \text{Gumbel}(\mu_{ij},\sigma)$, and given that $E[Y_{ij} | \theta, \beta]=\eta_{ij}=\mu_{ij}+\gamma\sigma$, the location parameter is set to $\mu_{ij}=\eta_{ij}-\gamma\sigma$,\footnote{The constant $\gamma$ used for the mean of the Gumbel distributions is the Euler–Mascheroni constant, with value $\approx 0.577$} and the resulting conditional expected variance is $\E[\var(Y_{ij} | \theta, \beta)]=\frac{\pi^2}{6}\sigma^2$.

\begin{figure*}[t]
    \includegraphics[width=1.0\textwidth]{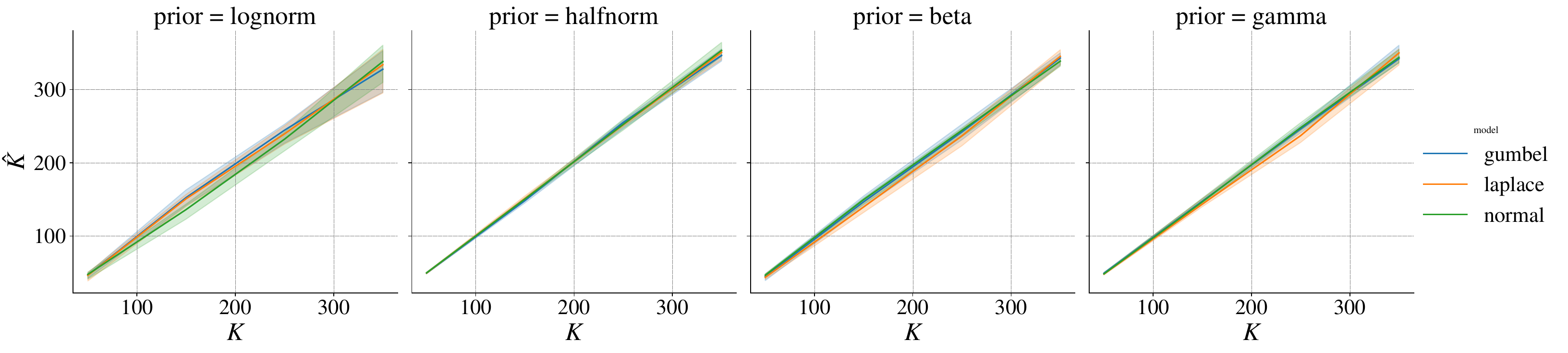}
    \caption{Comparison of the latent factor dimensionality estimator $\hat K$ for generic Bayesian matrix factorization models, validated using different choices of observation models (Gumbel, Laplace and Normal) and different choices of prior distribution: Log-Normal, Half-Normal, Beta and Gamma. 
    The y-axis represent the estimated latent factor dimensionality $\hat K$ with 95\% confidence intervals around the median, and the x-asis represent the true dimensionality $K$.
    }        
    \label{fig:diffmodels}
\end{figure*}

For each combination of observation and prior distributions, we vary the hyperparameters (details in Appendix~\ref{append:experiments} and Table~\ref{tab:priors_initialization}) and the true  dimensionality of the latent vectors $K \in \{50, 150, 250, 350\}$, with 20 runs for each unique configuration. We sample a data matrix (of size {$10^3 \times 10^3$}) for the each configuration of the experiment, calculating empirical estimates for the mean, variance and correlations, and using those estimates to estimate the latent dimensionality $\hat K$.
Figure~\ref{fig:diffmodels} shows the estimated dimensionality $\hat K$ (in the y-axis) for each true latent dimensionality $K$ (aggregated over multiple runs and hyperparameters configurations for each prior and observation distribution). The conclusion is that the method retrieves the true latent dimensionality well irrespective of the likelihood and prior choices.
Figure~\ref{fig:diffmodels_rel} shows the relative error $\frac{\hat{K}-K}{k}$ of the latent dimensionality estimator, which allows a more detailed analysis of the effect of the prior and observation model choice. For some combinations the variance of the estimator is higher than for others, even though all have similar mean performance.
In particular, the Log-Normal prior displays higher variability in general, while the variability of the relative error for the Beta prior decreases with the true latent dimensionality $K$.

\begin{figure*}[t]
    \centering
    \includegraphics[width=0.7\textwidth]{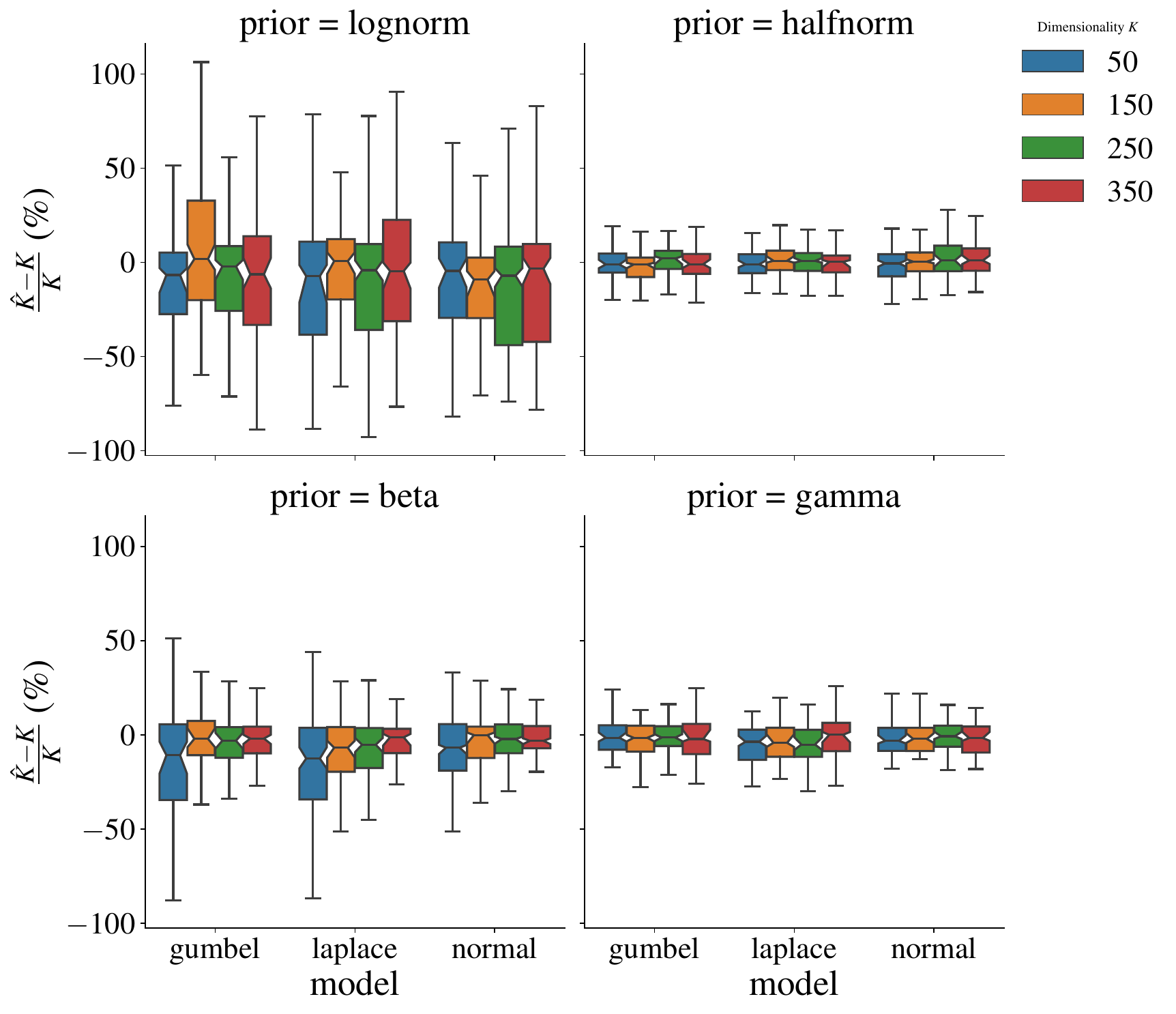}
    \caption{Boxplots of the relative error $\frac{\hat K-K}{K}$ analyzing the latent factor dimensionality estimator $\hat K$ for generic Bayesian matrix factorization models, validated using different choices of observation models (Gumbel, Laplace and Normal) and different choices of prior distribution: Log-Normal, Half-Normal, Beta and Gamma. 
    The y-axis represent relative error, displayed in a collection of boxplots showing the median and quartiles, with color code for different true latent dimensionality $K \in \{50, 150, 250, 350\}$, observation model in the x-axis and organized in rows and colums for different priors (Log-Normal in top-left, Half-Normal in top-right, Beta in bottom-left and Gamma and bottom-right).
    }        
    \label{fig:diffmodels_rel}
\end{figure*}

\subsection{Posterior Quality}
\label{sec:experiment_bo}

The main use for the approach is as part of a modeling process where we eventually carry out posterior inference using the selected hyperparameters. The quality of the solution can hence only be evaluated by inspecting how the final model performs. We do this by fitting a PMF model to the \texttt{user-artists} data of the~\texttt{hetrec-lastfm} dataset~\citep{Cantador:RecSys2011}\footnote{\url{ http://files.grouplens.org/datasets/hetrec2011}}, using an efficient implementation of coordinate ascent variational inference\footnote{
We extended \url{http://github.com/dawenl/stochastic_PMF}
to support sparse data. 
}
 fitted to randomly selected 90\% subset of the data. We evaluate the quality of the model using the PSIS-LOO criterion~\citep{DBLP:journals/sac/VehtariGG17a}. PSIS-LOO is here considered as an example of a typical task-agnostic metric a practitioner would be likely to use, but the experiment is not sensitive to the specific choice.

Figure~\ref{fig:prior_posterior} already illustrated the quality surface via explicit enumeration of the hyperparameter choices in a regular grid, shown as slices of the five-dimensional surface where all remaining parameters were fixed to the optimal ones provided by our method. Some prior choices have better PSIS-LOO scores -- for example, slightly smaller $K$ would be better -- but the crucial observation is that prior predictive matching provides sufficiently good solution in an instant. It is also important to understand that these results depend on the specific inference algorithm and evaluation metric used, and the optimal solution would change -- possibly by a lot -- if the variational approximation was replaced, e.g., with MCMC and PSIS-LOO by a another metric. Hence, exactly matching whatever choice happens to be optimal here would not even be correct.

\begin{figure}[t]
    \centering
    \includegraphics[width=0.48\columnwidth]{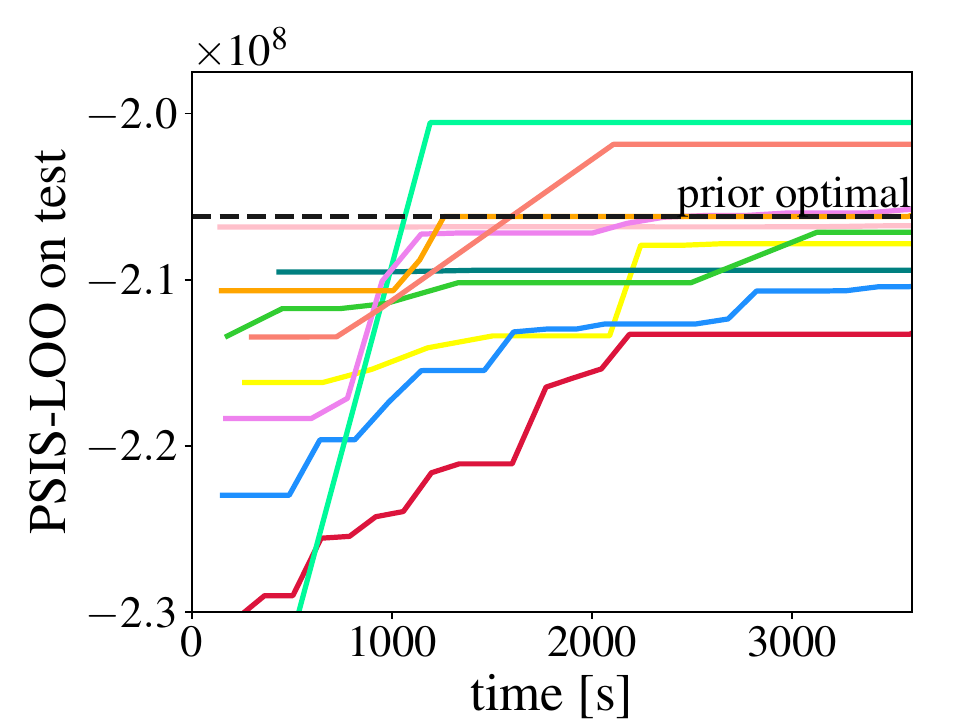}
    \includegraphics[width=0.48\columnwidth]{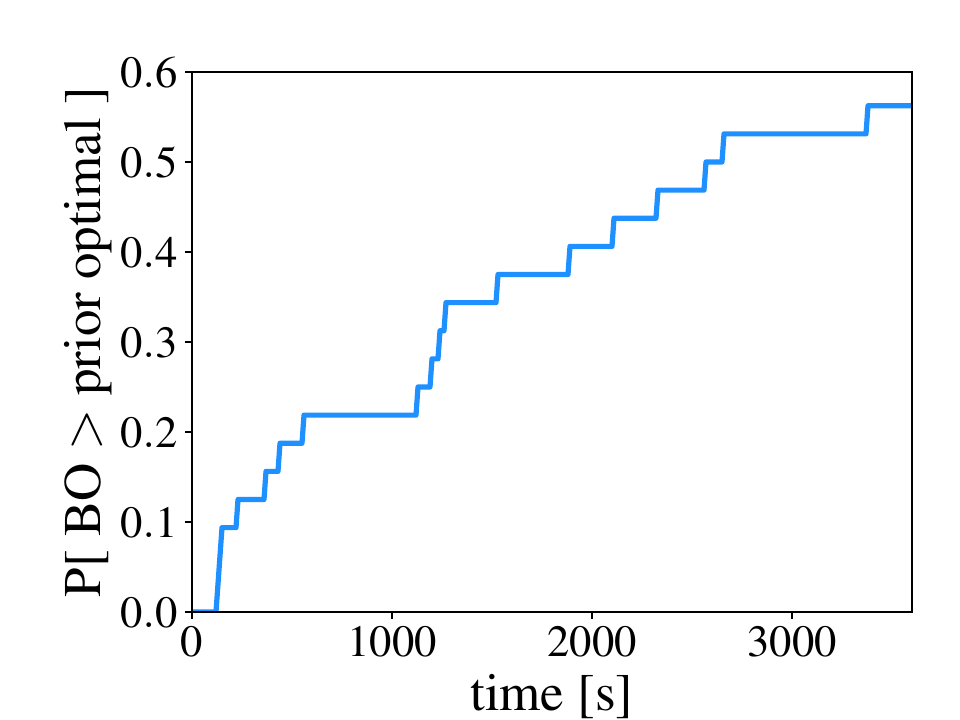}    
    \caption{
Comparison between Bayesian optimization runs (BO; colored lines) and the proposed method (dashed line) for optimization of five-dimensional hyperparameter for Poisson MF. BO eventually provides better PSIS-LOO (by directly optimizing for it), but even after an hour almost half of the runs are worse than our analytic, immediate, solution (right).
    }
    \label{fig:bo}
\end{figure}

Figure~\ref{fig:bo} compares the solution with the alternative of running a global optimization algorithm to figure out the hyperparameters, for the same model, data, inference algorithm and evaluation metric as above. We optimize for $\lambda$ using the robust Bayesian optimization (RoBO) tool~\citep{klein-bayesopt17} (with default options for parameters specified below), and compare results of multiple runs with the one provided by our approach in an instant. 

BO requires specifying a bounding box of feasible values, which is not trivial for PMF and has notable impact on the performance. %
For fairly carefully selected search space of 
$\mu_\theta, \sigma_\beta, \sigma_\theta, \sigma_\beta \in [10^{-4}, 10^2]$ (optimized on logarithmic scale)
and $K \in \{1..100\}$, BO is typically able to improve on the result of prior predictive matching. However, this takes on average roughly an hour even for this simplified scenario where efficient variational approximation can be fit to the fairly small data (on average) in a minute, and again we remind that BO is directly optimizing for this particular measure for the chosen inference algorithm. 
For more complex models taking e.g. hours for posterior inference, BO would be unfeasible. Even if we would then need to switch from analytic solutions to the model-independent algorithm for prior predictive matching, experiments in Section~\ref{sec:convergence_experiment} indicate that our approach is still likely to be more efficient; for PMF the algorithm converged in seconds, which can be directly compared with the times reported here for BO.
Finally, we remark that the prior optimal choice always outperforms the initial estimate of BO and BO typically requires tens of iterations to reach its quality, which suggests that it is also likely to outperform various naive heuristics and default values for the hyperparameters.

\subsection{Technical Validations}
\label{sec:technical}

After demonstrating that our approach can be successfully applied for selecting hyperparameters and priors in  practical scenarios, we proceed to technical experiments examining in detail the sensitivity of the approach to violation of the model assumptions (Section~\ref{sec:mismatch}) and testing properties of the gradient-based algorithm (Sections~\ref{sec:convergence_experiment}). Additional technical experiments on estimators and parameterization of the prior are provided in Appendices \ref{sec:parametrization} and \ref{sec:experiment_bias_variance}.

\subsubsection{Sensitivity to Model Mismatch}
\label{sec:mismatch}

In most applications, the data does not follow any model in the assumed model family. Since we compute the virtual statistics conditional on the model, it is unclear how well the approach works when the model mismatch is severe. We conducted three experiments to evaluate this, by controlling the amount of mismatch on artificial data while assuming the PMF model and using the analytic expression \eqref{eq:latent_var_est} for setting the number of factors $K$. 

We first consider two forms of model mismatch that relate to the distribution of the individual entries. For both experiments we generated data with $K \in \{25, 50, 75, 100, 125, 150\}$ for two hyperameter configurations and generated $20$ realizations for each configuration, and we summarize the results using relative error $\frac{\hat K - K}{K}$ of the estimates $\hat K$ for different true $K$.

\begin{figure*}[t]
    \centering
    \includegraphics[width=0.45\textwidth]{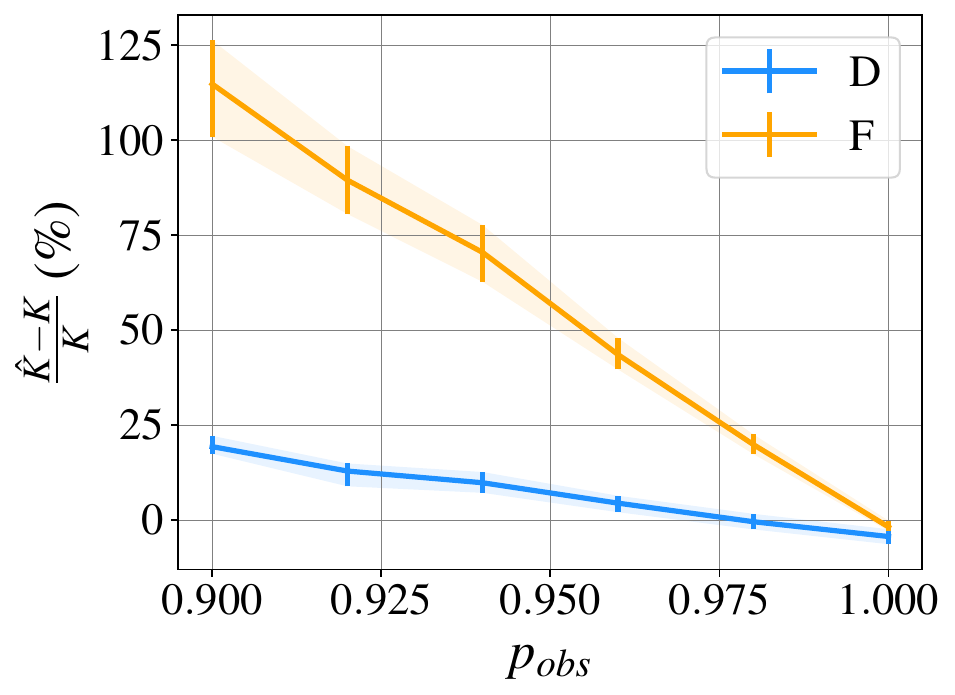}
    \includegraphics[width=0.45\textwidth]{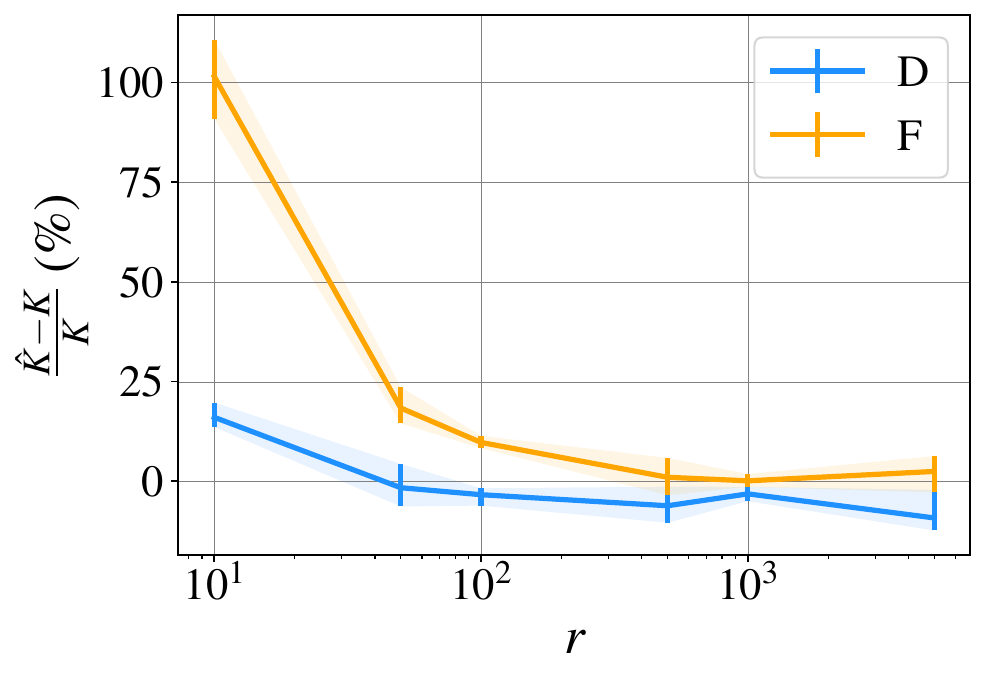}
    \caption{Sensitivity to model mismatch on zero-inflated (left) and overdispersed (right) data.
    For both cases increasing model mismatch (smaller $p_{obs}$ or $r$) increases the error monotonically, implying the approach is robust for small model mismatch but may give misleading results if the assumed model family fits the data very poorly.
    The y-axis represent the relative error $\frac{{\hat K} - K}{K}$ with 95\% confidence intervals around the median, and D and F refer to two different true hyperparameter configurations.
    }        
    \label{fig:sensitivity}
\end{figure*}

\paragraph{Sparsity} 
A typical mismatch with count-matrix models relates to sparsity; the data has more zeroes than expected under the model. To investigate the effect of this, we independently sample for each entry $Y_{ij}$ (sampled from PMF) a Bernoulli variable $X_{ij} \sim \text{Ber}(p_{obs})$ controlling whether the entry is observed, so that our final observation is given by $\tilde{Y}_{ij} = X_{ij} \times Y_{ij} $.
Decreasing $p_{obs}$ increases model mismatch, and Figure~\ref{fig:sensitivity} (left) shows this also increases relative error in the estimate $\hat K$, but the decline is graceful. For some configurations (blue) the relative error stays below 25\% even after dropping 10\% of observations. Note that for this kind of model mismatch the retrieved hyperparameter is consistently larger than the true one; additional components are required to explain the increased variance caused by the excess zeroes.

\paragraph{Overdispersion}
The second experiment considers another prototypical model mismatch, a scenario where a too simple model class is used.  We use PMF as the model, but generate the data so that the Poisson likelihood is replaced with negative binomial (NB) with varying rate of overdispersion that PMF cannot account for.
More specifically, we sample data from NB distribution so that the conditional mean is given by the MF, but the variance is controlled with overdispersion parameter $r$, using the parameterization $\text{NB}(r,p)$ where $r$ is the number of failures until the experiment is stopped and $1-p$ is probability of failure.
Using the notation of Eq.~\ref{eq:genmf}, our data is hence generated by $\eta_{ij} = \left(\sum_{k = 1}^K \theta_{ik}\beta_{jk}\right) $ and $Y_{ij} \sim \text{NB}(r,\frac{\eta_{ij}}{\eta_{ij}+r})$. This implies that $\E[ Y_{ij}  | \eta_{ij}; r ] =  \eta_{ij} $ and $\var[ Y_{ij}  | \eta_{ij} ; r ] =  \eta_{ij} + \frac{\eta_{ij}^2}{r}$, so the smaller the $r$, the more overdispersed the distribution is. We vary $r \in \{ 10, 50, 100, 500, 1000, 5000 \}$, and Figure
~\ref{fig:sensitivity} (right) shows that again the procedure is relatively robust for the mismatch.

Both experiments indicate that the analytic expression for determining $K$ is robust only for small model mismatch, and that the relative error grows drastically as a function of the mismatch. Here both examples correspond to overdispersed data compared to the assumed model -- which is often the case -- which results in overestimating the number of factors. In general, it is difficult to anticipate how specific kinds of model mismatches influence the results, but as a general rule we suggest interpreting odd hyperparameters as signs of potential mismatch worth investigating in more detail. For both of these examples the problem could be solved by switching to an appropriate model, explicitly modeling missing data in the first case and using negative binomial as likelihood in the latter. For both cases we would also need to switch to formulas, adapting the general result of Eq.~\ref{eq:generic_solution}.

\paragraph{Inconsistent margins}
As mentioned in Section~\ref{sec:motivation}, placing independent priors on the factors induces equal margin sums in expectation. More specifically, the expected row-sum is $\E[\sum_i Y_{ij}] = NK\mu_\beta\mu_\theta$ and the expected column-sum is $\E[\sum_j Y_{ij}] = MK\mu_\beta\mu_\theta$. Often the margins of real data sets are, however, quite non-uniform. We already showed in Section~\ref{sec:experiment_bo} that the overall configuration of the hyperparameters can be good even in the recommender engine setups that typically have very non-uniform margins, but it is worthwhile to study the sensitivity to this specific form of model mismatch also in scenarios that can be analysed in more detail.

We consider a case where each row is consistently element-wise multiplied with a positive vector $\bm{\gamma}=[\gamma_1, \ldots, \gamma_M]$, with $\gamma_j \geq 1$, and again focus on the estimator for $K$. In this case, given $\bm{\eta} =\{ \eta_{ij} = \sum_{k = 1}^K \theta_{ik}\beta_{jk} \}$, the conditional mean and variance is  $\E[Y_{ij} | \theta \beta]= \var[Y_{ij} | \theta \beta]= \gamma_j\eta_{ij}$, and the marginal mean and variance $\E[Y_{ij}] = e_j = K\gamma_j\mu_\theta\mu_\beta$ and $\var[Y_{ij}] = v_j = e_j+\gamma_j^2\var[\eta_{ij}]$ with $\var[\eta_{ij}]=K((\mu_{\beta}\sigma_{\theta})^2+(\mu_{\theta}\sigma_{\beta})^2+ (\sigma_{\theta}\sigma_{\beta})^2)$. We observe that each column $j$ have an expected sum controlled by $\gamma_j$, which also influences the degree of overdispersion, and the parameter $\gamma_j$ can be chosen to enforce a desired value the expected column-sum. Furthermore, we can calculate the correlations between $Y_{ij}$ and $Y_{tl}$, resulting in the following expressions for the non-trivial cases:
for the fixed row $i=t$ and different columns $\rho(Y_{ij},Y_{tl})=\rho_{jl}^{(1)}=\frac{K\gamma_j\gamma_l}{\sqrt{v_jv_l}}(\mu_\beta\sigma_\theta)^2$, and for fixed columns $j=l$ and different rows  $\rho(Y_{ij},Y_{tl})=\rho_{j}^{(2)}=\frac{K\gamma_j^2}{v_j}(\mu_\theta\sigma_\beta)^2$. In summary, the mismatch here corresponds to both larger expectation and variance, as well as a distinct correlation structure and column-wise variation which is not present in the PMF. 

We perform an experimental validation by setting the scaling matrix $\bm{\gamma}=[1, \ldots, \gamma_{max}] \in \mathbb{R}^{M}_+$ and varying $\gamma_{max} \in \{1, \ldots, 8 \} $. We use $K=50$, executing 5 runs of the experiment for each prior (Beta, Gamma, Half-Normal and Log-Normal) with different configurations of their respective hyperparameters (detailed in appendix Table~\ref{tab:priors_initialization}), while using the standard equations for PMF to estimate the necessary quantities. Figure~\ref{fig:consitent_rows_rescale} shows that the equation for $K$ consistently underestimates the true value, with worse results as $\gamma_{max}$ increases. This is expected, since we added multiple sources of deviation from the original model assumptions, resulting in equations for the moments that are varying with each column index for mean and variance, and each pair of columns for the correlations.

It is worth noting that this particular form of mismatch could be corrected for by deriving a new estimator for $K$, using the values for $\E[Y_{ij}]$, $\var[Y_{ij}]$ and $\rho(Y_{ij},Y_{tl})$ described above, and new empirical estimates of those quantities. Alternatively, it is possible to recover the original terms of the equations derived for PMF by rescaling the observations with the formulas $\hat Y_{ij}^{(1)}=\frac{Y_{ij}}{\gamma_j}$ and $\hat Y_{ij}^{(2)}=\frac{Y_{ij}}{\gamma_j^2}$, and calculated adjusted empirical estimates for the mean $\E[\hat Y_{ij}^{(1)}]$, variance $\var[\hat Y_{ij}^{(1)}]-\E[\hat Y_{ij}^{(2)}]+\E[\hat Y_{ij}^{(1)}]$ and covariance $\cov(Y_{ij}^{(1)}, Y_{tl}^{(1)})$. In the Appendix~\ref{app:mismatch} these estimates are discussed in more detail, and we show an example of adjusting for model mismatch in this case of column-wise multiplicative factors.

\begin{figure*}[t]
    \centering
    \includegraphics[width=0.4\textwidth]{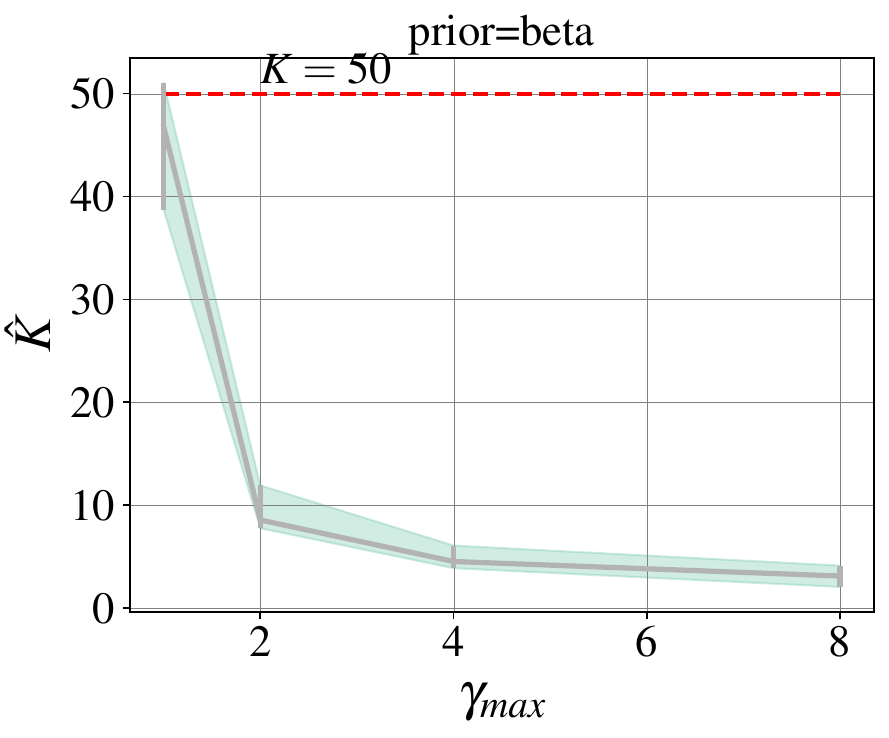}
    \includegraphics[width=0.4\textwidth]{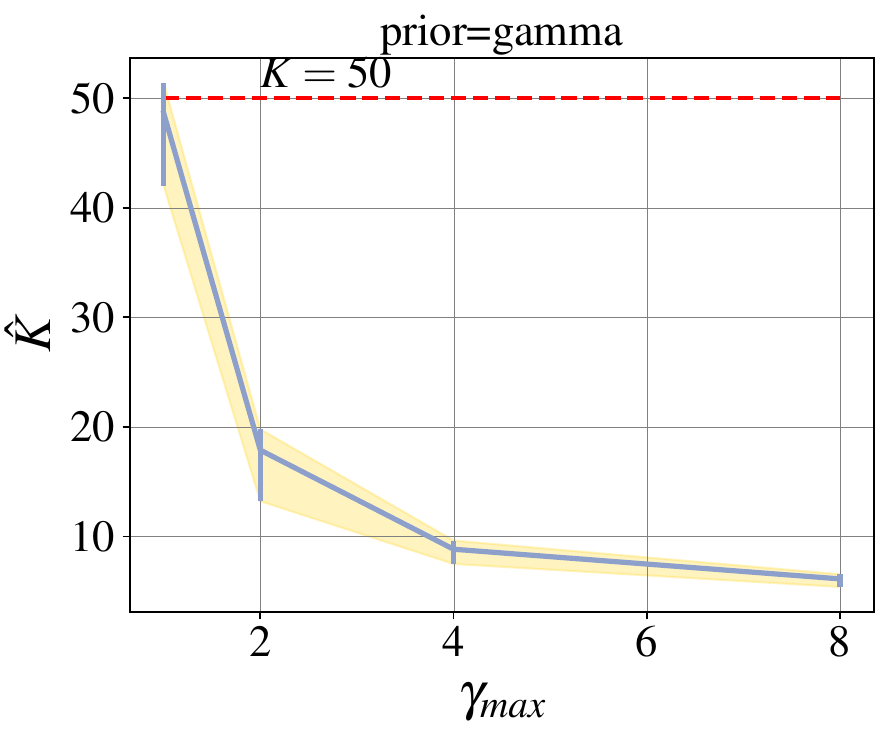} \\
    \includegraphics[width=0.4\textwidth]{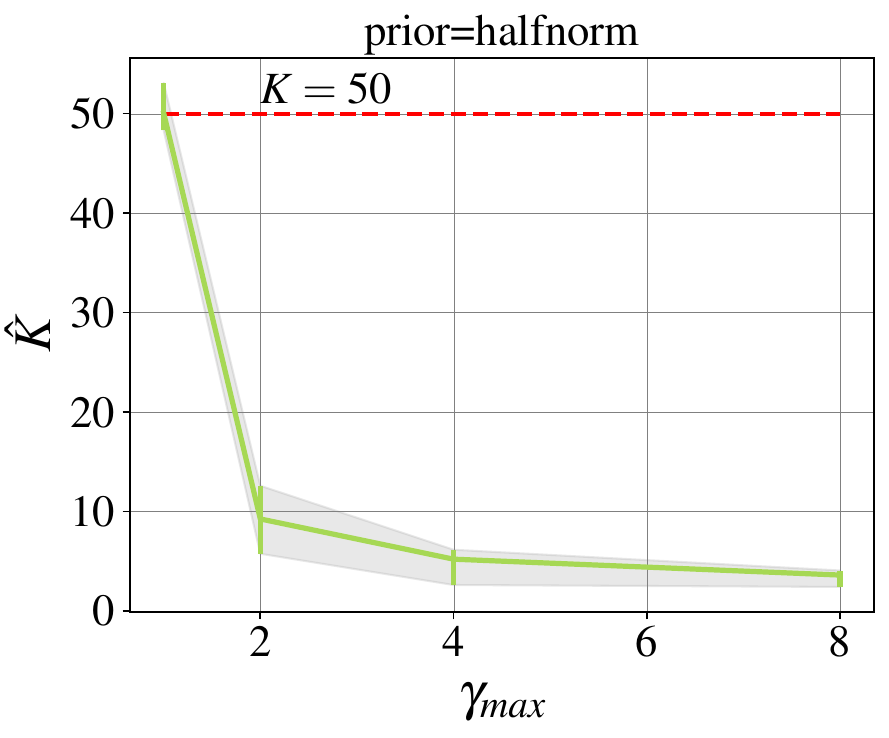}
    \includegraphics[width=0.4\textwidth]{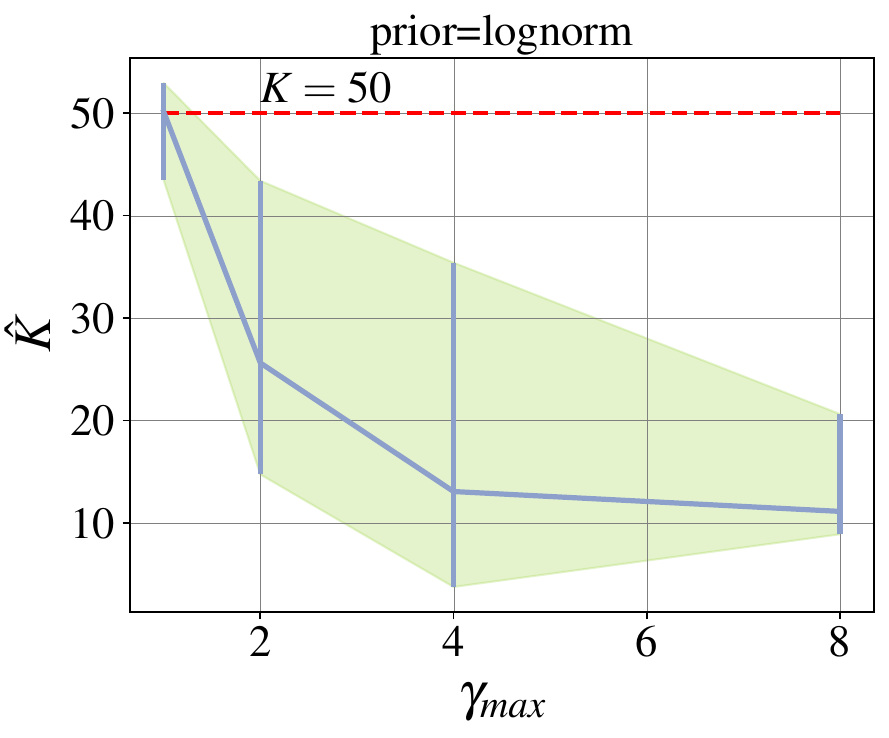}
    \caption{Sensitivity of the  estimated latent dimensionality $\hat K$ when each row of the latent rate matrix $\bm{\eta} =\{ \eta_{ij} = \sum_{k = 1}^K \theta_{ik}\beta_{jk} \} \in \mathbb{R}^{N \times M}_+$ is consistently multiplied element-wise by $[1, \ldots, \gamma_{max}]$. This procedure induces a resulting count-matrix with distinct column-sum counts, where higher column-index display higher values. Each graph was generated using a fixed true dimensionality $K=50$, and different prior distributions: Beta (top left), Gamma (top right), Half-Normal (bottom left) and Log-Normal (bottom right). Results represent median over 5 runs for each prior configuration and its 95\% confidence interval.
    }        
    \label{fig:consitent_rows_rescale}
\end{figure*}

\subsubsection{Stochastic Algorithm Convergence}
\label{sec:convergence_experiment}

\begin{figure*}[t]
    \centering
    \includegraphics[width=0.32\textwidth]{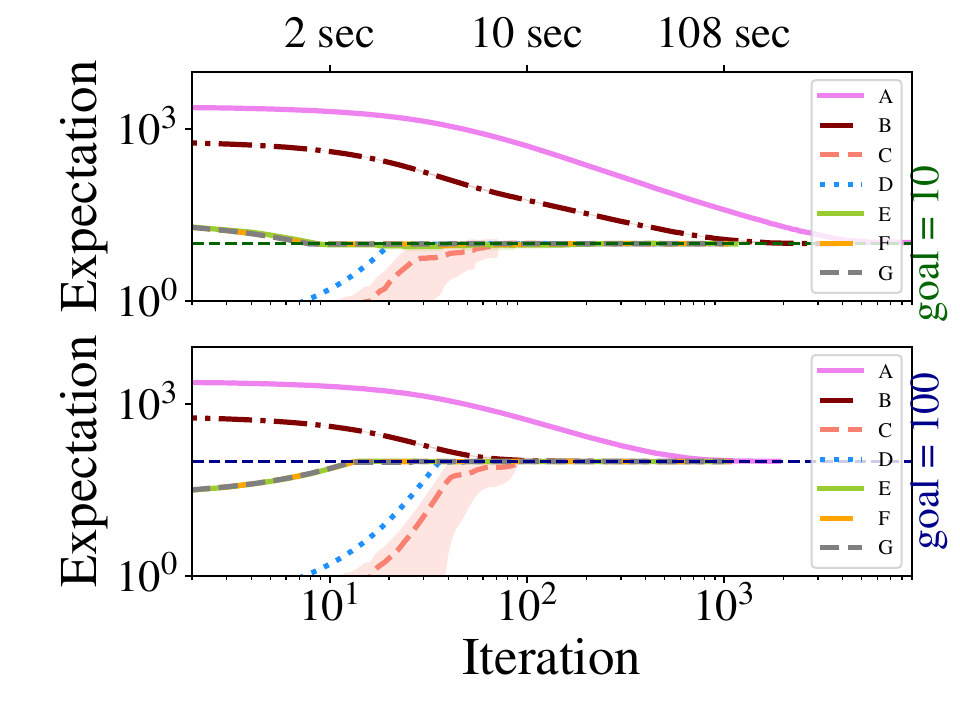}
    \includegraphics[width=0.32\textwidth]{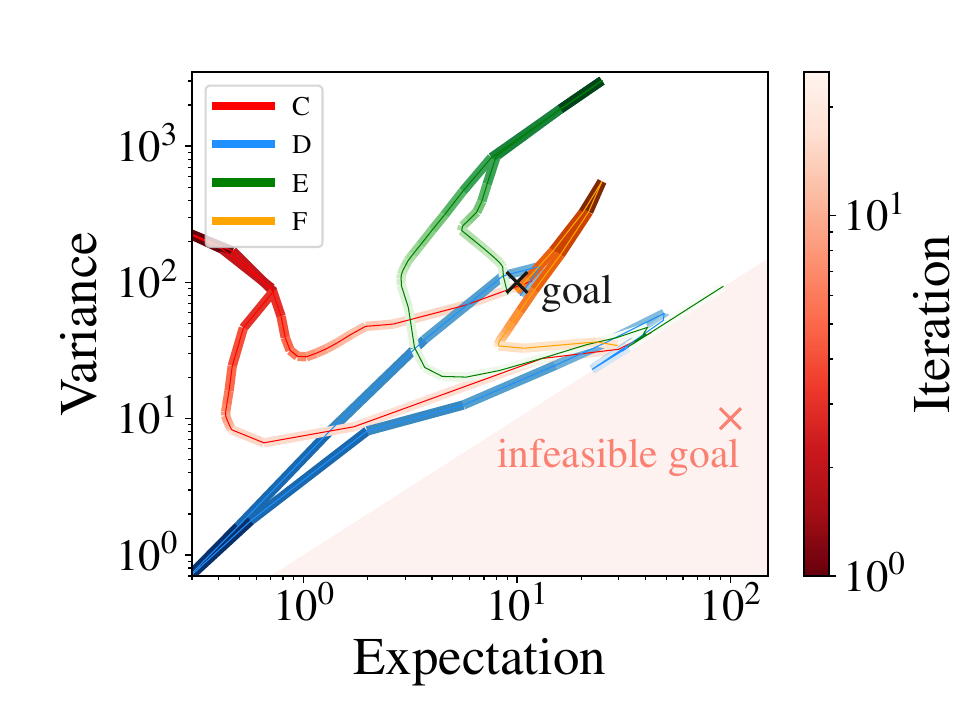}
    \includegraphics[width=0.32\textwidth]{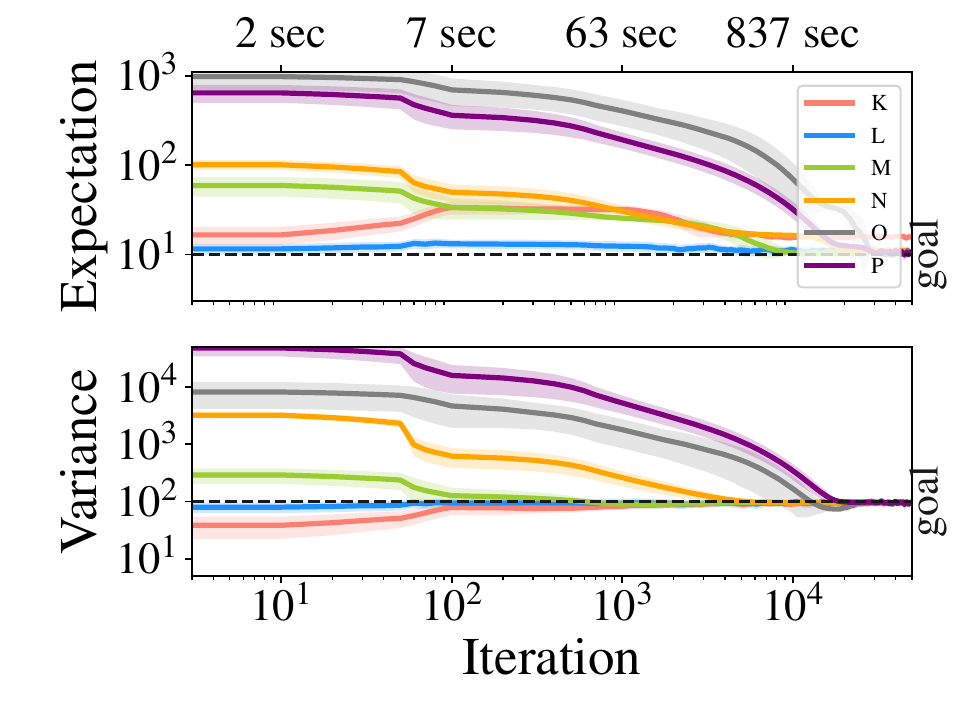} 
    \caption{Convergence of the model independent algorithm for PMF (left and middle) and HPF (right) starting from different initializations.
    }        
    \label{fig:convergence}
\end{figure*}

The stochastic algorithm of Section~\ref{sec:model_independent} can be used for richer model families, but as iterative algorithm requires more careful analysis. We study its properties via convergence speed and sensitivity to initial conditions for two models in the prior elicitation scenario (target values provided by the user), and illustrate an important property regarding scenarios where the target values cannot be satisfied.

Before demonstrating the method in practice, we note that
the algorithm itself
relies on Monte Carlo estimates for the virtual statistics. The statistics are not necessarily accurate. Hence, we empirically evaluate the bias and variance of estimators in Appendix~\ref{sec:experiment_bias_variance}, showing that for estimating the mean the bias is typically negligible but that for the variance estimator the bias can be large for small values of $\var[Y]$.
Furthermore, convergence of gradient-based optimization algorithms depends on how the optimization space is structured. Therefore, we compare two alternative parametrizations for PMF in Appendix~\ref{sec:parametrization}, showing that good model parametrization (e.g. in terms of means and variances) may simplify the solved optimization problem.

We first apply the algorithm for PMF, to demonstrate we can replicate the behavior of the analytic solution (but using fixed $K=25$, since the algorithm currently does not support discrete hyperparameters). Figure~\ref{fig:convergence} (left) shows convergence plots for two target values for the mean $\hat{\E}[Y]$ and seven initial values (Table~\ref{tab:pmf_initializations} in the Appendix), demonstrating that we find the optimal solution typically in seconds and even for intentionally bad initializations (scenario "A" starts with orders of magnitude too large mean) in minutes. The optimization was carried out in location-scale parameterization, using 
Adam optimizer with learning rate $=0.1$ and estimating $\hat{E}[Y]$ with $S_y = 3$ samples of observed variable for each of $S_1 = 100$ samples of latent variables.

Figure~\ref{fig:convergence} (right) shows similar convergence curves for hierarchical PMF \citep{gopalan2015scalable}, optimizing for both mean and variance targets provided by the user with $\discrepancy=(\hat{\E}[Y]-10)^2+(\hat{\V}[Y]-100)^2$, illustrating that the algorithm works for more complex MF models and discrepancies as well. Again we find the solution irrespective of the initial value (Table~\ref{tab:hpf_initializations} in the Appendix), but convergence is naturally slower due to higher dimensionality (2 more hyperparameters) and the hierarchical structure.  The convergence of the optimizer is highly insensitive to the relative weighting of the two statistics, resulting only in minor speed differences (not shown here).

Finally, Figure~\ref{fig:convergence} (middle) demonstrates the behavior of the algorithm for two PMF cases optimized for mean and variance, one with feasible and one with infeasible target moments. Besides users specifying contradicting goals in the prior elicitation case, the latter case can also happen with the empirical Bayes scenario if the data does not exactly match the model. For PMF we must have $\var[Y_{ij}] \ge \E[Y_{ij}]$ 
and hence here the target $(\text{E}^*, \text{V}^*) = (100, 10)$ cannot be reached. The algorithm correctly handles this, by converging to the border of the feasible region and returning a non-zero discrepancy, providing valuable feedback for the user on possible model mismatch or poorly specified targets. The weighting of the two statistics determines what kind of compromise the algorithm makes and allows freedom for the user to indicate e.g. reliability of their given targets.

\section{Discussion and Conclusion}

The hyperparameters of Bayesian machine learning models
are often overlooked and chosen based on vague intuitions instead of more rigorous practices, e.g., gamma priors with small mean and large variance are used to induce sparsity. 
This is understandable as strict Bayesian paradigm requires specifying the prior independently of the data \citep{garthwaite:2005,ohagan:2006}, whereas standard optimization procedures require fitting the model for some data to evaluate the quality.
Even if accepting that priors are to be adapted based on available data (which is admissible for latent-variable models whose priors do not really encode subjective prior knowledge), optimizing for the hyperparameters is challenging as posterior inference is slow and the optimization surface can be difficult.

We provided a framework for selecting hyperparameters for Bayesian  models that: (a) better fits within the Bayesian paradigm by only relying on simple data summaries or by entirely ignoring the data and only using expected moments provided by a user; (b) is efficient in contrast to the traditional alternatives by avoiding posterior inference when specifying priors. The framework builds on the prior predictive distribution, previously used for prior predictive elicitation \citep{kadane:1980} and model checking \citep{schad:2019, gelman2020bayesian}, converted here into a tool for ML practitioners working with large models. From another perspectice, the approach generalizes methods of moments and maximum marginal likelihood.

We demonstrated the approach for Bayesian matrix factorization models, for which the choice of priors and hyperparameters is difficult. For Poisson MF \citep{DBLP:journals/cin/Cemgil09} and Compound PMF \citep{BasbugE16} we derived analytic expressions that provide immediate solutions (i.e., hyperparameters optimal in terms of PPD matching the desired statistics)
and, in particular, we now have an analytic expression for determining the number of latent factors. 
Finally, 
we also presented a model-independent optimization algorithm that was here demonstrated for a hierarchical MF model \citep{gopalan2015scalable}, 
but can be applied for a more general class of models. In this work, we only demonstrated the approach in context of matrix factorization to keep the scope of the paper more limited, but the algorithm applies to any probabilistic model as long as we can sample from the model in a differentiable manner. Empirical evaluation of the approach in context of other model families remains an interesting future direction.

The overall methodology is generic, supporting in principle arbitrary choice of the statistics and discrepancy measures and not taking any stance on how the values for the target statistics are determined. Such a flexibility means the method requires some care when used. We believe that as long as the set of statistics are limited to a few low-order moments it is safe to estimate the target values directly from data, but cannot say in detail what would happen if using considerably higher order moments or other more detailed statistics. Our practical recommendation is to start with the obvious statistics of mean and variance, and then add additional ones if considered relevant for the problem or if the solution is still too undetermined. Besides correlation considered in this work, we see potential in using kurtosis, skewness, extreme values and selected quantiles of the data distribution, already used in prior elicitation protocols.

Perhaps the most important aspect requiring expertise from the user concerns model mis-spefication. We empirically studied three different forms of mis-specification to highlight varying degrees of robustness. For instance, for PMF the approach is robust for minor over-dispersion and zero-inflation, but suggests too low number of factors for cases where the margin counts do not follow the prior assumption of uniformity. However, it can still work well in practice even in such cases, as demonstrated by good performance in posterior quality comparisons on real recommender engine data with clearly non-uniform margins.
In general, the results for models that fit the data very poorly are not necessarily even sensible. Our best recommendation here is for practitioners to use this results as a starting point for the prior specification in the modeling workflow, but not to necessarily directly rely on the solution. For example, if our method suggests using only a few factors for a recommender engine task commonly modeled with dozens of factors, our recommendation is to carefully consider whether the specific probabilistic model matches the particular data at hand or whether, e.g., changing the likelihood or model structure would be in order. Alternatively, the solution can be used as an initial point for Bayesian optimization.

Our results demonstrate practical feasibility of the idea, but further improvements would help make it a routine part of the Bayesian ML development pipeline. Promising directions for improving the model-independent algorithm include:
(1) more careful analysis of the moment and discrepancy measure choices, possibly building on the existing work done for approximate Bayesian computation, (2) automatic choice of good parameterization for the hyperparameter space, possibly along the lines of \citet{gorinova2019automatic}, to support generic probabilistic programs, (3) better empirical estimators for the gradients \citep{mohamed2019monte} to reduce optimization noise, (4) support for discrete hyperparameters, and (5) better means for identifying failure modes without relying on external validation or theoretical analysis.

\acks{The work was supported by Academy of Finland, under grant 1313125, as well as the Finnish Center for Artificial Intelligence (FCAI), a Flagship of the Academy of Finland.
We also thank the Finnish Grid and Cloud Infrastructure (urn:nbn:fi:research-infras-2016072533) for providing computational resources. We also thank Norwegian Open AI Lab - NTNU for the support with computational resources and funding. We thank Helge Langseth, Heri Ramampiaro (NTNU/Trondheim) and Ant\'onio G\'ois (MILA) for useful discussions and critical review of the publication.}

\appendix

\begin{appendices}

\section{Setup of the experiments}\label{append:experiments}

Tables~\ref{tab:pmf_initializations} and \ref{tab:hpf_initializations} contain initial configurations used in experiments with respectively PMF and HPF. The legends in Figures \ref{fig:sensitivity}, \ref{fig:surfaces} and \ref{fig:nsamples} refer to these letters.

Table~\ref{tab:priors_initialization} show the configurations of the priors used in the experiments that evaluated the methods with multiple priors. For each prior distribution of the parameters $\theta$ and $\beta$, hyperparam\_1 and hyperparam\_2 represent the location and scale, according the standard parameterization used in \texttt{numpy}\footnote{\url{https://numpy.org/doc/stable/}} library implementation. Table~\ref{tab:distributions_params} show for each distribution what is the location and scale parameterization corresponding to the values of hyperparam\_1 and hyperparam\_2, with its respective mean and variance. Table~\ref{tab:distributions_params} include also the distributions used for the experiment with different observations models, namely Gumbel, Laplace and Normal distribution. In this experiment the location parameter was set such that the conditional mean $\E[Y_{ij} | \theta, \beta]=\sum_{k=1}^{K}\theta{ik}\beta_{jk}$ and the scale (for each distribution) with values $\{0.5, 1.0, \dots, 3.5, 4.0, 4.5, \dots, 7.5, 8.0\}$, leading to the expected conditional variance $\E[\var(Y_{ij} | \theta, \beta)]$ having values according to the formulas for the variance in Table~\ref{tab:distributions_params} (for Gumbel, Laplace and Normal distributions).

\begin{table}[t]
    \caption{Considered sets of initial PMF hyperparameters.}
    \centering
    \begin{tabular}{c|llll|llll|rr}
 & $a$ & $b$ & $c$ & $d$ & $\mu_\theta$ & $\sigma_\theta$ & $\mu_\beta$ & $\sigma_\beta$ & $\E[Y]$ & $\var[Y]$ \\
\hline
\hline
A  &  10 & 1 & 10 & 1  &  10.0  & 3.16  & 10.0  & 3.16   &  2500.00  & 55000.00 \\
B  &  10 & 2 & 10 & 2  &  5.0  & 1.58  & 5.0  & 1.58   &  625.00  & 3906.25 \\
\hline
C  &  0.001 & 0.01 & 0.01 & 0.1  &  0.1  & 3.16  & 0.1  & 1.0   &  0.25  & 253.00 \\
D  &  0.1 & 1 & 0.1 & 1  &  0.1  & 0.32  & 0.1  & 0.32   &  0.25  & 0.55 \\
\hline
E  &  0.1 & 0.1 & 0.1 & 0.1  &  1.0  & 3.16  & 1.0  & 3.16   &  25.00  & 3025.00 \\
F  &  1 & 1 & 0.1 & 0.1  &  1.0  & 1.0  & 1.0  & 3.16   &  25.00  & 550.00 \\
G  &  1000 & 1000 & 1000 & 1000  &  1.0  & 0.03  & 1.0  & 0.03   &  25.00  & 25.05 \\
    \end{tabular}
    \label{tab:pmf_initializations}
\end{table}

\begin{table}[t]
    \caption{Considered sets of initial HPF hyperparameters.}
    \centering
    \begin{tabular}{c|llllll|rr}
 & $a$ & $a'$ & $b'$ & $c$ & $c'$ & $d'$  & $\E[Y]$ & $\var[Y]$ \\
\hline
\hline
K  &  1.0 & 100.0 & 10.0 & 1.0  &  100.0  & 10.0    &  0.26  & 0.26 \\
L  &  0.1 & 100.0 & 1.0 & 1.0  &  100.0  & 1.0    &  2.55  & 8.25 \\
M  &  50.0 & 5000.0 & 10.0 & 1.0  &  5000.0  & 1.0    &  125.05  & 781.42 \\
N  &  1.0 & 100.0 & 1.0 & 10.0  &  10.0  & 1.0    &  280.57  & 15309.26 \\
O  &  450.0 & 4500.0 & 100.0 & 10.0  &  400.0  & 1.0    &  1128.10  & 9833.55 \\
P  &  50.0 & 50.0 & 1.0 & 1.0  &  50.0  & 1.0    &  1301.71  & 146074.01 \\
    \end{tabular}
    \label{tab:hpf_initializations}
\end{table}

\begin{table}[t]
    \caption{Set of hyperparameters configurations for the experiments with multiple priors.}
    \centering
\begin{tabular}{ll|ll|l}
$\theta$ hyperparam\_1 & $\theta$ hyperparam\_2 & $\beta$ hyperparam\_1 & $\beta$ hyperparam\_2 & prior    \\ 
\hline
\hline
1.0                       & 1.0                       & 1.0                        & 1.0                        & gamma    \\
10.0                      & 1.0                       & 10.0                       & 1.0                        & gamma    \\
1.0                       & 10.0                      & 1.0                        & 10.0                       & gamma    \\
0.1                       & 1.0                       & 1.0                        & 0.1                        & gamma    \\
\hline
1.0                       & 1.0                       & 1.0                        & 1.0                        & beta     \\
1.0                       & 2.0                       & 1.0                        & 1.0                        & beta     \\
1.0                       & 2.0                       & 1.0                        & 2.0                        & beta     \\
1.0                       & 2.0                       & 2.0                        & 1.0                        & beta     \\
\hline
0.0                       & 1.0                       & 0.0                        & 1.0                        & halfnorm \\
0.0                       & 2.0                       & 0.0                        & 2.0                        & halfnorm \\
0.5                       & 2.0                       & 1.0                        & 2.0                        & halfnorm \\
0.5                       & 0.5                       & 2.0                        & 1.0                        & halfnorm \\
\hline
1.0                       & 1.0                       & 0.0                        & 1.0                        & lognorm  \\
1.0                       & 2.0                       & 0.0                        & 2.0                        & lognorm  \\
1.0                       & 2.0                       & 0.5                        & 0.5                        & lognorm  \\
1.0                       & 0.5                       & 0.5                        & 2.0                        & lognorm  \\ 
\end{tabular}
    \label{tab:priors_initialization}
\end{table}

\begin{table}[t]
    \caption{Parameterization of different distributions}
    \centering
    \begin{tabular}{c|c|c|c}
 distribution & location, scale & $\E$ & $\var$ \\
\hline
\hline
$\gam(a,b)$  &   $a,b$ (shape and scale)  &  $ab$  & $ab^2$ \\
\hline
$\text{Beta}(a,b)$  &   $a,b$  &  $\frac{a}{a+b}$  & $\frac{ab}{(a+b)^2(a+b+1)}$ \\
\hline
$\text{HalfNormal}(\mu,\sigma)$  &   $\mu,\sigma$  &  $\mu+\sigma\sqrt{\frac{2}{\pi}}$  & $\sigma^2(1-\frac{2}{\pi})$ \\
\hline
$\text{LogNormal}(\mu,\sigma)$  &   $\mu,\sigma$  &  $\text{exp}(\mu+\frac{\sigma^2}{2})$  & $(\text{exp}(\sigma^2)-1)\text{exp}(2\mu+\sigma^2)$ \\

\hline
$\text{Gumbel}(\mu,\sigma)$  &   $\mu,\sigma$  &  $\mu+\gamma\sigma$  & $\frac{\pi^2}{2}\sigma^2$\\

\hline
$\text{Laplace}(\mu,b)$  &   $\mu,\sigma$  &  $\mu$  & $\frac{\pi^2}{2}\sigma^2$\\
\hline
$\text{Normal}(\mu,\sigma^2)$  &   $\mu,\sigma$  &  $\mu$  & $\sigma^2$ \\

\hline
$\text{Poisson}(\eta)$  &   $\eta$  &  $\eta$  & $\eta$

    \end{tabular}
    \label{tab:distributions_params}
\end{table}

\section{Derivation of Analytic Solution for Poisson Matrix Factorization}

We consider the model defined in Eq.~\ref{eq:genmf}. We reinstate that for the sake of notational convenience we will adopt the following notation $ \lambda \eqdef \{ \mu_{\theta},\sigma_{\theta}^2, \mu_{\beta},\sigma_{\beta}^2 \} $.  We are interested in computing $\E[Y_{ij} ; \lambda ]$, $\var[Y_{ij} ; \lambda ]$, $\cov[Y_{ij}, Y_{tl}  ; \lambda ]$ and $\rho[Y_{ij}, Y_{tl}  ; \lambda ]$ denotating here the virtual statistics derived from PDD. We want to marginalize out the latent variables to be able to calculate a mathematical relation between the hyperparameters and the virtual statistics.\footnote{For notation brevity assume the hyperparameter $\lambda$ implicitily, for example $\E[Y_{ij}] \eqdef \E[Y_{ij} ; \lambda ] = \int Y_{ij} p(Y_{ij}|\sum_k \theta_{ik}\beta_{jk} )\prod_k p(\theta_{ik} ; \lambda) p(\beta_{jk} ; \lambda) d\theta_{ik} d\beta_{jk} d Y_{ij} $. }
\subsection{Preliminaries}

We will make use of Total Expectation Law, Total Variance Law and Total Covariance Law and the following propositions without proofs:

\begin{align}
    \E[Y]&=\E[\E[Y | X] ] \label{eq:tote}\\
    \var[Y]&=\E[\var[Y | X] ]+\var[\E[ Y | X] ] \label{eq:totvar} \\
    \cov[X,Y]&=\E[\cov[X, Y | Z]] +\cov[\E[ X | Z], \E[ Y | Z] ] \label{eq:totcov} \\
    \var[XY]&=\E[X^2]\E[Y^2]-\E[X]^2\E[Y]^2 \label{eq:varmult}\text{ if X and Y are independent} \\
    \var[XY]&=\E[X]^2\var[Y]+\E[Y]^2\var[X]+\var[X]\var[Y] \label{eq:varmult2} \text{ if X and Y are independent} \\
    \E[X^2]&=\var[X]+\E[X]^2 \label{eq:ex2} \\
    \var[\sum_k X_k ]&=\sum_k \var[X_k]+2\sum_{k<k'}\cov[X_k,X_{k'}]. \label{eq:varsum}
\end{align}

These relations will be useful in the task of marginalizing out the latent parameters of the hierarchical model, as we shall see in the following section.
\subsection{Intermediate results}
We will start by proving some intermediate results that are useful in different steps for the final results. 
\begin{proposition}\label{prop:cov1}
For any combination of valid values for the indexes $i$,$j$,$t$ and $l$, if the latent indexes $k \neq k'$, then $\cov[\theta_{ik}\beta_{jk}, \theta_{tk'}\beta_{lk'}] = 0$.
\end{proposition}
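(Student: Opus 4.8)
The plan is to reduce the covariance to an expectation and then exploit independence. Since
$\cov[\theta_{ik}\beta_{jk},\,\theta_{tk'}\beta_{lk'}] = \E[\theta_{ik}\beta_{jk}\,\theta_{tk'}\beta_{lk'}] - \E[\theta_{ik}\beta_{jk}]\,\E[\theta_{tk'}\beta_{lk'}]$, it is enough to show that the joint expectation factors, i.e. $\E[\theta_{ik}\beta_{jk}\,\theta_{tk'}\beta_{lk'}] = \E[\theta_{ik}\beta_{jk}]\,\E[\theta_{tk'}\beta_{lk'}]$.

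First I would isolate the two groups of latent variables occurring in the two factors, $G = \{\theta_{ik},\beta_{jk}\}$ and $G' = \{\theta_{tk'},\beta_{lk'}\}$. By the model specification \eqref{eq:genmf}, the collection $\{\theta_{ik}\}$ is mutually independent over all its index pairs, the collection $\{\beta_{jk}\}$ is mutually independent over all its index pairs, and these two collections are independent of each other. Because $k \neq k'$, the variable $\theta_{ik}$ is genuinely different from $\theta_{tk'}$ (they would coincide only if both $i=t$ and $k=k'$), and likewise $\beta_{jk}\neq\beta_{lk'}$; crucially this holds no matter what the values of $i,j,t,l$ are, which is why the claim needs only the hypothesis $k\neq k'$. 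Hence $G$ and $G'$ are disjoint sub-collections of one mutually independent family, so the random vector $(\theta_{ik},\beta_{jk})$ is independent of $(\theta_{tk'},\beta_{lk'})$, and in particular the scalar $\theta_{ik}\beta_{jk}$ is independent of $\theta_{tk'}\beta_{lk'}$. Factoring the joint expectation along this independence gives the desired identity, so the covariance is $0$.

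I do not expect a genuine obstacle here: the proposition is essentially the bookkeeping observation that the two products share no common latent variable once $k\neq k'$, so the only thing requiring (minimal) care is verifying that no coincidence of the row/column indices can force a shared variable — which the argument above handles by noting the latent index $k$ already separates the two groups.
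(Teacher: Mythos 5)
Your proof is correct and follows the same route as the paper: expand the covariance via its definition and show the joint expectation factors because, once $k\neq k'$, the two products involve disjoint sub-collections of the mutually independent latent variables. You simply spell out the independence bookkeeping that the paper's own (very terse) proof leaves implicit.
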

\begin{proof}
By definition of the covariance $$\cov[\theta_{ik}\beta_{jk}, \theta_{tk'}\beta_{lk'}] = \E[\theta_{ik}\beta_{jk} \theta_{tk'}\beta_{lk'}]-\E[\theta_{ik}\beta_{jk}]\E[ \theta_{tk'}\beta_{lk'}].$$
Given that $k \neq k'$, this implies (for any combination of the other indices) $\E[\theta_{ik}\beta_{jk} \theta_{tk'}\beta_{lk'}]=\E[\theta_{ik}\beta_{jk}]\E[ \theta_{tk'}\beta_{lk'}]$.
\end{proof}

\begin{proposition}\label{prop:var1}
For any combination of valid values for the indexes $i$,$j$ and $k$ the following equations hold:
\begin{enumerate}
    \item $\E[\sum_k \theta_{ik}\beta_{jk}] = K\mu_{\theta}\mu_{\beta}$;
    \item $\var[\sum_k \theta_{ik}\beta_{jk}] = K[(\mu_{\beta}\sigma_{\theta})^2+(\mu_{\theta}\sigma_{\beta})^2+ (\sigma_{\theta}\sigma_{\beta})^2]$;
    \item $\cov[\sum_k \theta_{ik}\beta_{jk},\sum_k \theta_{tk}\beta_{lk}] = K[\delta_{it}(\mu_{\beta}\sigma_{\theta})^2+\delta_{jl}(\mu_{\theta}\sigma_{\beta})^2+ \delta_{it}\delta_{jl}(\sigma_{\theta}\sigma_{\beta})^2)]$.
\end{enumerate}
\end{proposition}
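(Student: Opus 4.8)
The plan is to establish the three identities in sequence, in each case collapsing the sum over the latent dimension $k$ to a single representative term by combining the iid structure across $k$ with Proposition~\ref{prop:cov1}, and then evaluating that term with the elementary moment laws \eqref{eq:varmult2} and \eqref{eq:ex2} together with the independence built into \eqref{eq:genmf} (all $\theta_{ik}$ iid, all $\beta_{jk}$ iid, and the two families mutually independent).

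For part (1) I would apply linearity of expectation to get $\E[\sum_k \theta_{ik}\beta_{jk}] = \sum_k \E[\theta_{ik}]\E[\beta_{jk}]$, using $\theta_{ik}\perp\beta_{jk}$, and then sum the $K$ identical terms $\mu_\theta\mu_\beta$. For part (2) I would invoke \eqref{eq:varsum} with $X_k = \theta_{ik}\beta_{jk}$; Proposition~\ref{prop:cov1} specialized to $t=i$, $l=j$ annihilates every cross term $\cov[X_k,X_{k'}]$ with $k\neq k'$, leaving $\var[\sum_k \theta_{ik}\beta_{jk}] = K\,\var[\theta_{ik}\beta_{jk}]$ by identical distribution. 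Then \eqref{eq:varmult2} with $X=\theta_{ik}$, $Y=\beta_{jk}$ yields $\var[\theta_{ik}\beta_{jk}] = \mu_\theta^2\sigma_\beta^2 + \mu_\beta^2\sigma_\theta^2 + \sigma_\theta^2\sigma_\beta^2$, and multiplying by $K$ gives the claim.

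For part (3) I would expand the double sum by bilinearity of covariance, $\cov[\sum_k \theta_{ik}\beta_{jk},\sum_{k'}\theta_{tk'}\beta_{lk'}] = \sum_{k,k'}\cov[\theta_{ik}\beta_{jk},\theta_{tk'}\beta_{lk'}]$, and discard the off-diagonal contributions via Proposition~\ref{prop:cov1}, reducing to $\sum_k \cov[\theta_{ik}\beta_{jk},\theta_{tk}\beta_{lk}] = K\,\cov[\theta_{ik}\beta_{jk},\theta_{tk}\beta_{lk}]$. The remaining single term I would evaluate through $\E[\theta_{ik}\beta_{jk}\theta_{tk}\beta_{lk}] - \E[\theta_{ik}\beta_{jk}]\E[\theta_{tk}\beta_{lk}]$ in the four cases according to whether $i=t$ and whether $j=l$: a repeated row or column index forces a second moment, handled by $\E[\theta^2]=\sigma_\theta^2+\mu_\theta^2$ from \eqref{eq:ex2}, whereas distinct indices give a product of means that cancels. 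The four cases return $(\mu_\beta\sigma_\theta)^2+(\mu_\theta\sigma_\beta)^2+(\sigma_\theta\sigma_\beta)^2$, $(\mu_\beta\sigma_\theta)^2$, $(\mu_\theta\sigma_\beta)^2$, and $0$, which are precisely packaged by $\delta_{it}(\mu_\beta\sigma_\theta)^2+\delta_{jl}(\mu_\theta\sigma_\beta)^2+\delta_{it}\delta_{jl}(\sigma_\theta\sigma_\beta)^2$; multiplying by $K$ completes the proof.

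The only delicate step is the case analysis in part (3): keeping track of which indices coincide, hence which factors are identical (yielding a second moment) versus independent (yielding a product of first moments), and verifying that the four numerical outcomes are reproduced faithfully by the compact Kronecker-delta formula. Parts (1) and (2) are then essentially immediate specializations of the same computation.
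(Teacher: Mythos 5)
Your proposal is correct and follows essentially the same route as the paper's proof: linearity of expectation for (1), Eq.~\eqref{eq:varsum} plus Proposition~\ref{prop:cov1} and Eq.~\eqref{eq:varmult2} for (2), and the diagonal reduction followed by the same four-case analysis packaged with Kronecker deltas for (3). No gaps; the case bookkeeping you flag as delicate is handled exactly as in the paper.
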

\begin{proof}:

\begin{enumerate}
    \item For the first equation we apply the summation property of the expected value and the fact that $\theta_{ik}$ and $\beta_{jk}$ are independent. 
    \item For $\var[\sum_k \theta_{ik}\beta_{jk}]$, we start by using Eq.~\ref{eq:varsum}, thus resulting in $\var[\sum_k \theta_{ik}\beta_{jk}] = \sum_k \var[\theta_{ik}\beta_{jk}]+2\sum_{k<k'}\cov[\theta_{ik}\beta_{jk},\theta_{ik'}\beta_{jk'}]$. However, from Proposition~\ref{prop:cov1} we know the covariance terms where the indexes $k$ and $k'$ are not the same should be zero, resulting in  $\var[\sum_k \theta_{ik}\beta_{jk}] = \sum_k \var[\theta_{ik}\beta_{jk}]$. Now using \eqref{eq:varmult2} for the variance of the product of random variables we obtain \begin{align*}
   \var[\theta_{ik}\beta_{jk}] & = \E[\theta_{ik}]^2\var[\beta_{jk}]+\E[\beta_{jk}]^2\var[\theta_{ik}]+\var[\theta_{ik}]\var[\beta_{jk}]\\
   & = \mu_\theta^2\sigma_\beta^2+\mu_\beta^2\sigma_\theta^2+\sigma_\beta^2\sigma_\theta^2\\
   \Rightarrow \var[\sum_k \theta_{ik}\beta_{jk}] &= K[(\mu_{\beta}\sigma_{\theta})^2+(\mu_{\theta}\sigma_{\beta})^2+ (\sigma_{\theta}\sigma_{\beta})^2].
\end{align*}
   \item For the last equation we start with the definition of covariance: 
\begin{align*}
\cov[\sum_k \theta_{ik}\beta_{jk},\sum_k \theta_{tk}\beta_{lk}] &= \E[\sum_{k,k'} \theta_{ik}\beta_{jk}\theta_{tk'}\beta_{lk'}]-\E[\sum_{k} \theta_{ik}\beta_{jk}]\E[\sum_{k'} \theta_{tk'}\beta_{lk'}] \\
&= \sum_{k,k'} \underset{\cov[\theta_{ik}\beta_{jk},\theta_{tk'}\beta_{lk'}]}{\underbrace{\E[\theta_{ik}\beta_{jk}\theta_{tk'}\beta_{lk'}]- \E[\theta_{ik}]\E[\beta_{jk}]\E[\theta_{tk'}]\E[\beta_{lk'}]}}. 
\end{align*}
Considering Proposition~\ref{prop:cov1}, we know that only the shared indices $k$ are non zero, thus simplyfiying to
\begin{align}
\cov[\sum_k \theta_{ik}\beta_{jk},\sum_k \theta_{tk}\beta_{lk}]
= \sum_{k} \E[\theta_{ik}\beta_{jk}\theta_{tk}\beta_{lk}]- \E[\theta_{ik}]\E[\beta_{jk}]\E[\theta_{tk}]\E[\beta_{lk}]  \label{eq:cov_inter1} \\
 = \sum_k \cov[\theta_{ik}\beta_{jk},\theta_{tk}\beta_{lk}] \nonumber.
\end{align}

Now, we can calculate $ \cov[\theta_{ik}\beta_{jk},\theta_{tk}\beta_{lk}]$ for four different cases:
    \begin{enumerate}
        \item if $i \neq t \And j \neq l$: because of independence of all variables, we obtain $$\cov[\theta_{ik}\beta_{jk},\theta_{tk}\beta_{lk}]=0;$$
        \item if $i = t \And j \neq l$:
        \begin{align}
            \cov[\theta_{ik}\beta_{jk},\theta_{ik}\beta_{lk}] &= \E[\theta_{ik}^2\beta_{jk}\beta_{lk}]- \E[\theta_{ik}]^2\E[\beta_{jk}]\E[\beta_{lk}] \nonumber \\
            &= \E[\theta_{ik}^2]\E[\beta_{jk}]E[\beta_{lk}]- \E[\theta_{ik}]^2\E[\beta_{jk}]\E[\beta_{lk}] \nonumber \\
            &= \E[\beta_{jk}]E[\beta_{lk}](\E[\theta_{ik}^2]- \E[\theta_{ik}]^2) %
            = \mu_\beta^2\sigma_\theta^2 ;\nonumber
        \end{align}
        \item if $i \neq t \And j = l$:
        \begin{align}
            \cov[\theta_{ik}\beta_{jk},\theta_{tk}\beta_{jk}] &= \E[\beta_{jk}^2]\E[\theta_{ik}]E[\theta_{tk}]- \E[\beta_{jk}]^2\E[\theta_{ik}]E[\theta_{tk}] %
            = \mu_\theta^2\sigma_\beta^2; \nonumber
        \end{align}
        \item if $i = t \And j = l$:
        \begin{align}
        \cov[\theta_{ik}\beta_{jk},\theta_{ik}\beta_{jk}] =\var[\theta_{ik}\beta_{jk}] %
        =(\mu_{\beta}\sigma_{\theta})^2+(\mu_{\theta}\sigma_{\beta})^2+ (\sigma_{\theta}\sigma_{\beta})^2. \nonumber
        \end{align}
    \end{enumerate}
    Putting all together using Kronecker delta for the indices in the different cases we obtain
    \begin{align}
        \cov[\theta_{ik}\beta_{jk},\theta_{tk}\beta_{lk}] = \delta_{it}(\mu_{\beta}\sigma_{\theta})^2+\delta_{jl}(\mu_{\theta}\sigma_{\beta})^2+ \delta_{it}\delta_{jl}(\sigma_{\theta}\sigma_{\beta})^2). \label{eq:cov_inter2}
    \end{align}
    
    We obtain the final results combining \eqref{eq:cov_inter1} and \eqref{eq:cov_inter2}
    
  $$
 \Rightarrow \cov[\sum_k \theta_{ik}\beta_{jk},\sum_k \theta_{tk}\beta_{lk}]
=  K[ \delta_{it}(\mu_{\beta}\sigma_{\theta})^2+\delta_{jl}(\mu_{\theta}\sigma_{\beta})^2+ \delta_{it}\delta_{jl}(\sigma_{\theta}\sigma_{\beta})^2) ] .$$
\end{enumerate}
\end{proof}

\subsection{Expected values and variance}

We can now proceed to derive the prior predictive expected value and variance. 

\begin{proposition}\label{prop:expvar}
For any combination of valid values for the indexes $i$,$j$ and the following equations hold:
\begin{enumerate}\label{prop:kvare}
    \item $\E[Y_{ij}] = K\mu_{\theta}\mu_{\beta}$;
    \item $\var[Y_{ij}] =K[\mu_{\theta}\mu_{\beta}+ (\mu_{\beta}\sigma_{\theta})^2+(\mu_{\theta}\sigma_{\beta})^2+ (\sigma_{\theta}\sigma_{\beta})^2]$.
\end{enumerate}
\end{proposition}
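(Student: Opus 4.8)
The whole argument is an application of the laws of total expectation and total variance (Eq.~\ref{eq:tote} and Eq.~\ref{eq:totvar}), conditioning on the latent rate $\Lambda_{ij} \eqdef \sum_k \theta_{ik}\beta_{jk}$, together with the intermediate moments of $\Lambda_{ij}$ already established in Proposition~\ref{prop:var1}. The one model-specific fact I will use is that for a Poisson variable $Y_{ij} \mid \Lambda_{ij} \sim \poi(\Lambda_{ij})$ both the conditional mean and the conditional variance equal the rate, i.e.\ $\E[Y_{ij} \mid \theta_{i\cdot},\beta_{j\cdot}] = \Lambda_{ij}$ and $\var[Y_{ij} \mid \theta_{i\cdot},\beta_{j\cdot}] = \Lambda_{ij}$.

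\textbf{Expectation.} First I would apply the total expectation law conditioning on the latent factors:
\[
\E[Y_{ij}] = \E\big[\,\E[Y_{ij}\mid \theta_{i\cdot},\beta_{j\cdot}]\,\big] = \E\Big[\sum_k \theta_{ik}\beta_{jk}\Big] = K\mu_{\theta}\mu_{\beta},
\]
where the last equality is part~1 of Proposition~\ref{prop:var1}. This gives Eq.~\ref{eq:pmf_closedform_exp}.

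\textbf{Variance.} Next I would apply the total variance law (Eq.~\ref{eq:totvar}) with the same conditioning:
\[
\var[Y_{ij}] = \E\big[\var[Y_{ij}\mid \theta_{i\cdot},\beta_{j\cdot}]\big] + \var\big[\E[Y_{ij}\mid \theta_{i\cdot},\beta_{j\cdot}]\big] = \E\Big[\sum_k \theta_{ik}\beta_{jk}\Big] + \var\Big[\sum_k \theta_{ik}\beta_{jk}\Big].
\]
The first term is $K\mu_{\theta}\mu_{\beta}$ (part~1 of Proposition~\ref{prop:var1} again, used because the Poisson conditional variance equals the rate), and the second term is $K[(\mu_{\beta}\sigma_{\theta})^2 + (\mu_{\theta}\sigma_{\beta})^2 + (\sigma_{\theta}\sigma_{\beta})^2]$ by part~2 of Proposition~\ref{prop:var1}. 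Adding the two yields
\[
\var[Y_{ij}] = K\big[\mu_{\theta}\mu_{\beta} + (\mu_{\beta}\sigma_{\theta})^2 + (\mu_{\theta}\sigma_{\beta})^2 + (\sigma_{\theta}\sigma_{\beta})^2\big],
\]
which is Eq.~\ref{eq:pmf_closedform_var}.

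\textbf{Main obstacle.} There is no real obstacle left at this level: all the combinatorial work — in particular showing that cross terms with distinct latent indices $k \neq k'$ vanish and computing $\var[\theta_{ik}\beta_{jk}]$ via Eq.~\ref{eq:varmult2} — has already been discharged in Proposition~\ref{prop:var1}. The only point requiring a moment's care is recognizing that the ``extra'' $\E[\Lambda_{ij}]$ contribution to the variance comes precisely from the Poisson noise (the mean-equals-variance property), which is what distinguishes this from a deterministic-rate model; everything else is substitution and addition.
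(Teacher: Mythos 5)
Your proof is correct and follows essentially the same route as the paper: the law of total expectation and the law of total variance conditioned on the latent rate $\sum_k \theta_{ik}\beta_{jk}$, combined with the Poisson mean-equals-variance property and the intermediate moments from Proposition~\ref{prop:var1}. No gaps; your closing remark correctly identifies the Poisson noise term as the only model-specific ingredient.
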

\begin{proof}
By the law of total expectation, $\E[Y_{ij}] = \E[  \E[Y_{ij} | \sum_k \theta_{ik}\beta_{jk}] ] = \E[\sum_k \theta_{ik}\beta_{jk}]$, already calculated in Proposition~\ref{prop:var1}.
For the second equation we use the law of total variance (\eqref{eq:totvar}) $\var[Y_{ij}] = \E[\var[Y_{ij}|\sum_k\theta_{ik}\beta_{jk}] ]+\var[\E[ Y_{ij}|\sum_k\theta_{ik}\beta_{jk} ] ]$, and because we have a Poisson likelihood we know that $ \var[Y_{ij}|\sum_k\theta_{ik}\beta_{jk}] = \E[Y_{ij}|\sum_k\theta_{ik}\beta_{jk}]=\sum_k\theta_{ik}\beta_{jk} $. Now putting both together and using Proposition~\ref{prop:var1} we obtain
\begin{align*}
    \var[Y_{ij}] = \E[\sum_k\theta_{ik}\beta_{jk}]+\var[\sum_k\theta_{ik}\beta_{jk}] %
    = K[\mu_{\theta}\mu_{\beta}+(\mu_{\beta}\sigma_{\theta})^2+(\mu_{\theta}\sigma_{\beta})^2+ (\sigma_{\theta}\sigma_{\beta})^2].
\end{align*}

\end{proof}

\subsection{Covariance and correlation}

Finally, combining the previous results we can obtain the covariance and correlation for the prior predictive distribution

\begin{proposition}\label{prop:cov}
The prior predictive covariance is given by
$$ \cov[Y_{ij},Y_{tl}] = K[  \delta_{it}(\mu_{\beta}\sigma_{\theta})^2+\delta_{jl}(\mu_{\theta}\sigma_{\beta})^2+ \delta_{it}\delta_{jl}(\mu_{\theta}\mu_{\beta}+(\sigma_{\theta}\sigma_{\beta})^2)].$$
\end{proposition}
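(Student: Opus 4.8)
The plan is to apply the law of total covariance (Eq.~\ref{eq:totcov}), conditioning on the full collection of latent factors $\{\theta_{ik},\beta_{jk}\}$, in the same spirit in which the proofs of Propositions~\ref{prop:expvar} and~\ref{prop:var1} handled the mean and variance. Writing the conditioning set for brevity as $\theta,\beta$, this gives
\[
\cov[Y_{ij},Y_{tl}] = \E\big[\cov[Y_{ij},Y_{tl} | \theta,\beta]\big] + \cov\big[\E[Y_{ij} | \theta,\beta],\,\E[Y_{tl} | \theta,\beta]\big].
\]
I would evaluate the two terms separately and then add them.

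For the first term, note that given $\theta,\beta$ the entries of $Y$ are independent Poisson variables, so $\cov[Y_{ij},Y_{tl} | \theta,\beta]=0$ unless $(i,j)=(t,l)$; in the diagonal case it is the conditional Poisson variance, equal to the rate $\sum_k\theta_{ik}\beta_{jk}$. Hence the first term equals $\delta_{it}\delta_{jl}\,\E[\sum_k\theta_{ik}\beta_{jk}] = \delta_{it}\delta_{jl}K\mu_\theta\mu_\beta$ by part~1 of Proposition~\ref{prop:var1}. For the second term, the Poisson likelihood gives $\E[Y_{ij} | \theta,\beta]=\sum_k\theta_{ik}\beta_{jk}$, so the term is exactly $\cov[\sum_k\theta_{ik}\beta_{jk},\sum_k\theta_{tk}\beta_{lk}]$, which part~3 of Proposition~\ref{prop:var1} already evaluates to $K[\delta_{it}(\mu_\beta\sigma_\theta)^2+\delta_{jl}(\mu_\theta\sigma_\beta)^2+\delta_{it}\delta_{jl}(\sigma_\theta\sigma_\beta)^2]$. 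Summing the two contributions and grouping the $\delta_{it}\delta_{jl}$ coefficients then yields the stated formula.

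The computation is essentially bookkeeping; the one point to get right is the diagonal case $(i,j)=(t,l)$, where the ``conditional covariance'' term is really a conditional variance and therefore contributes the extra $K\mu_\theta\mu_\beta$ (the overdispersion injected by the Poisson layer) on top of the latent-sum covariance --- this is precisely the term that distinguishes $\cov[Y_{ij},Y_{tl}]$ from $\cov[\sum_k\theta_{ik}\beta_{jk},\sum_k\theta_{tk}\beta_{lk}]$. No genuinely new calculation is needed beyond appealing to Propositions~\ref{prop:cov1} and~\ref{prop:var1} and carefully tracking the index cases.
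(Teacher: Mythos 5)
Your proof is correct and follows essentially the same route as the paper's: the law of total covariance conditioned on the latent factors, with the first term reducing to $\delta_{it}\delta_{jl}\E[\sum_k\theta_{ik}\beta_{jk}]=\delta_{it}\delta_{jl}K\mu_\theta\mu_\beta$ via the Poisson variance--mean identity and the second term given by part~3 of Proposition~\ref{prop:var1}. Your explicit remark that the diagonal case contributes the extra Poisson overdispersion term is exactly the point the paper's derivation also hinges on.
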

\begin{proof}
Using the law of total covariance (\eqref{eq:totcov}):
\begin{align*}
    \cov[Y_{ij},Y_{tl}] & = \E[\cov[Y_{ij},Y_{tl}|\theta_{i.},\beta_{j.},\theta_{t.},\beta_{l.}]]+\cov[\E[Y_{ij}|\theta_{i.},\beta_{j.}],\E[Y_{tl} |\theta_{t.},\beta_{l.}] ] \\
    &= \E[\delta_{it}\delta_{jl}\var[Y_{ij}|\theta_{i.},\beta_{j.}]]+\cov[\sum_k \theta_{ik}\beta_{jk},\sum_k \theta_{tk}\beta_{lk}] \\
     &= \E[\delta_{it}\delta_{jl}\sum_k \theta_{ik}\beta_{jk}]+ K[ \delta_{it}(\mu_{\beta}\sigma_{\theta})^2+\delta_{jl}(\mu_{\theta}\sigma_{\beta})^2+ \delta_{it}\delta_{jl}(\sigma_{\theta}\sigma_{\beta})^2) ]  \\
     &= K[  \delta_{it}(\mu_{\beta}\sigma_{\theta})^2+\delta_{jl}(\mu_{\theta}\sigma_{\beta})^2+ \delta_{it}\delta_{jl}(\mu_{\theta}\mu_{\beta}+(\sigma_{\theta}\sigma_{\beta})^2)].
\end{align*}
\end{proof}

\begin{proposition}\label{prop:corr}
The prior predictive correlation is given by
$$ \rho[Y_{ij},Y_{tl}] = \frac{\delta_{it}(\mu_{\beta}\sigma_{\theta})^2+\delta_{jl}(\mu_{\theta}\sigma_{\beta})^2+ \delta_{it}\delta_{jl}(\mu_{\theta}\mu_{\beta}+(\sigma_{\theta}\sigma_{\beta})^2)}{\mu_{\theta}\mu_{\beta}+ (\mu_{\beta}\sigma_{\theta})^2+(\mu_{\theta}\sigma_{\beta})^2+ (\sigma_{\theta}\sigma_{\beta})^2}  $$
or alternatively
$$ \rho[Y_{ij},Y_{tl}] = \begin{cases} 
0,\text{if }i \neq t \And j \neq l\\
1,\text{if }i = t \And j = l \\
\rho_1 = \frac{(\mu_{\beta}\sigma_{\theta})^2}{\mu_{\theta}\mu_{\beta}+ (\mu_{\beta}\sigma_{\theta})^2+(\mu_{\theta}\sigma_{\beta})^2+ (\sigma_{\theta}\sigma_{\beta})^2},\text{if }i = t \And j \neq l \\
\rho_2=\frac{(\mu_{\theta}\sigma_{\beta})^2}{\mu_{\theta}\mu_{\beta}+ (\mu_{\beta}\sigma_{\theta})^2+(\mu_{\theta}\sigma_{\beta})^2+ (\sigma_{\theta}\sigma_{\beta})^2},\text{if }i \neq t \And j = l
\end{cases}.$$
\end{proposition}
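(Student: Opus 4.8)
The plan is to obtain the correlation simply as the ratio of the covariance (Proposition~\ref{prop:cov}) to the geometric mean of the two variances (Proposition~\ref{prop:expvar}), and then to read off the four cases by inspecting the Kronecker deltas. First I would record the key structural observation that makes everything collapse: the prior predictive variance $\var[Y_{ij}] = K[\mu_{\theta}\mu_{\beta}+ (\mu_{\beta}\sigma_{\theta})^2+(\mu_{\theta}\sigma_{\beta})^2+ (\sigma_{\theta}\sigma_{\beta})^2]$ does not depend on the indices $i,j$ at all. Hence $\sqrt{\var[Y_{ij}]\,\var[Y_{tl}]} = \var[Y_{ij}]$, i.e. the normalizing denominator is just the common variance value $V \eqdef K[\mu_{\theta}\mu_{\beta}+ (\mu_{\beta}\sigma_{\theta})^2+(\mu_{\theta}\sigma_{\beta})^2+ (\sigma_{\theta}\sigma_{\beta})^2]$.

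Next I would substitute the covariance from Proposition~\ref{prop:cov} into $\rho[Y_{ij},Y_{tl}] = \cov[Y_{ij},Y_{tl}]/V$, observe that the factor $K$ cancels between numerator and denominator, and arrive at the single closed-form expression
\[
\rho[Y_{ij},Y_{tl}] = \frac{\delta_{it}(\mu_{\beta}\sigma_{\theta})^2+\delta_{jl}(\mu_{\theta}\sigma_{\beta})^2+ \delta_{it}\delta_{jl}(\mu_{\theta}\mu_{\beta}+(\sigma_{\theta}\sigma_{\beta})^2)}{\mu_{\theta}\mu_{\beta}+ (\mu_{\beta}\sigma_{\theta})^2+(\mu_{\theta}\sigma_{\beta})^2+ (\sigma_{\theta}\sigma_{\beta})^2}.
\]
This already gives the first displayed form in the statement.

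Finally I would split into the four mutually exclusive and exhaustive index configurations. If $i\neq t$ and $j\neq l$, all three delta terms vanish and $\rho=0$. If $i=t$ and $j=l$, all deltas equal $1$ and the numerator becomes identical to the denominator, giving $\rho=1$. If $i=t$ and $j\neq l$, only $\delta_{it}=1$ survives and the numerator reduces to $(\mu_{\beta}\sigma_{\theta})^2$, yielding $\rho_1$; symmetrically, $i\neq t$ and $j=l$ yields $\rho_2$. Collecting these gives the piecewise form.

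I do not anticipate a genuine obstacle: the only point requiring a moment's care is justifying that the denominator of the correlation is the common variance rather than a product of two distinct quantities (which is immediate from the index-independence noted above), and keeping the bookkeeping of the Kronecker deltas straight across the four cases. Everything else is direct substitution of the already-established Propositions~\ref{prop:cov} and~\ref{prop:expvar}.
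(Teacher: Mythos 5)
Your proposal is correct and follows essentially the same route as the paper: the correlation is computed as the covariance from Proposition~\ref{prop:cov} divided by $\sqrt{\var[Y_{ij}]\var[Y_{tl}]}$, which collapses to the common (index-independent) variance, with the factor $K$ cancelling and the four cases read off from the Kronecker deltas. No gaps.
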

\begin{proof}
From the definition of correlation we have
\begin{align*}
    \rho[Y_{ij},Y_{tl}] &= \frac{\cov[Y_{ij},Y_{tl}] }{\sqrt{\var[Y_{ij}] \var[y_{tlj}] }} %
    =\frac{\cov[Y_{ij},Y_{tl}] }{\sqrt{\var[Y_{ij}]^2  }}\\
    &=\frac{\delta_{it}(\mu_{\beta}\sigma_{\theta})^2+\delta_{jl}(\mu_{\theta}\sigma_{\beta})^2+ \delta_{it}\delta_{jl}(\mu_{\theta}\mu_{\beta}+(\sigma_{\theta}\sigma_{\beta})^2)}{\mu_{\theta}\mu_{\beta}+ (\mu_{\beta}\sigma_{\theta})^2+(\mu_{\theta}\sigma_{\beta})^2+ (\sigma_{\theta}\sigma_{\beta})^2}. 
\end{align*}
\end{proof}
\subsection{Finding the hyperparameters given the moments}
\begin{proposition}\label{prop:multvarprior}
Given that we know $K$, $\E[Y_{ij}]$, $\var[Y_{ij}]$, $\rho_1$ and $\rho_2$ the following equations hold and can be used for determining the hyperparameters:
\begin{align}
    \sigma_\theta \sigma_\beta &= \frac{\var[Y_{ij}]}{\E[Y_{ij}]}\sqrt{\rho_1 \rho_2} \label{eq:multvarprior}\\
    \left(\frac{\sigma_\beta}{\mu_\beta}\right)^2 &= K\frac{\var[Y_{ij}]}{\E[Y_{ij}]^2}\rho_2 \\
    \left(\frac{\sigma_\theta}{\mu_\theta}\right)^2 &= K\frac{\var[Y_{ij}]}{\E[Y_{ij}]^2}\rho_1 \\
    \rho_1\left(\frac{\sigma_\beta}{\mu_\beta}\right)^2 &= \rho_2 \left(\frac{\sigma_\theta}{\mu_\theta}\right)^2.
\end{align}
\end{proposition}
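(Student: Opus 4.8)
The plan is to start from the closed-form expressions for the prior predictive mean, variance, and correlations established in Propositions~\ref{prop:expvar} and \ref{prop:corr}, and to treat them as a system of algebraic identities to be solved for the ratios $\sigma_\theta/\mu_\theta$ and $\sigma_\beta/\mu_\beta$ and the product $\sigma_\theta\sigma_\beta$. Recall that $\E[Y_{ij}] = K\mu_\theta\mu_\beta$, that the denominator appearing in $\rho_1$ and $\rho_2$ is exactly $\var[Y_{ij}]/K$, and that $\rho_1 = (\mu_\beta\sigma_\theta)^2 K / \var[Y_{ij}]$, $\rho_2 = (\mu_\theta\sigma_\beta)^2 K / \var[Y_{ij}]$. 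These three observations are the only ingredients needed.

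First I would rearrange the $\rho_2$ identity: from $\rho_2 = (\mu_\theta\sigma_\beta)^2 K/\var[Y_{ij}]$ we get $(\mu_\theta\sigma_\beta)^2 = \rho_2\var[Y_{ij}]/K$, and dividing both sides by $\mu_\theta^2\mu_\beta^2 = (\E[Y_{ij}]/K)^2$ yields
\begin{equation*}
\left(\frac{\sigma_\beta}{\mu_\beta}\right)^2 = \frac{\rho_2\var[Y_{ij}]/K}{\E[Y_{ij}]^2/K^2} = K\frac{\var[Y_{ij}]}{\E[Y_{ij}]^2}\rho_2,
\end{equation*}
which is the second claimed equation. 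The third equation follows identically by the symmetric computation starting from the $\rho_1$ identity. The fourth equation is then immediate: multiplying the second equation by $\rho_1$ and the third by $\rho_2$ gives $\rho_1(\sigma_\beta/\mu_\beta)^2 = K\var[Y_{ij}]\E[Y_{ij}]^{-2}\rho_1\rho_2 = \rho_2(\sigma_\theta/\mu_\theta)^2$.

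For the first equation I would combine the two product identities: $(\mu_\beta\sigma_\theta)^2(\mu_\theta\sigma_\beta)^2 = \rho_1\rho_2\var[Y_{ij}]^2/K^2$, so $(\mu_\theta\mu_\beta)^2(\sigma_\theta\sigma_\beta)^2 = \rho_1\rho_2\var[Y_{ij}]^2/K^2$; taking square roots and using $\mu_\theta\mu_\beta = \E[Y_{ij}]/K$ gives $(\E[Y_{ij}]/K)\,\sigma_\theta\sigma_\beta = \sqrt{\rho_1\rho_2}\,\var[Y_{ij}]/K$, i.e. $\sigma_\theta\sigma_\beta = \var[Y_{ij}]\sqrt{\rho_1\rho_2}/\E[Y_{ij}]$, as claimed. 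I do not anticipate a genuine obstacle here — the whole argument is elementary algebra — but the one point requiring a little care is the sign when extracting square roots: one must note that $\mu_\theta,\mu_\beta,\sigma_\theta,\sigma_\beta$ are all nonnegative (being means and standard deviations of the nonnegative-supported priors $f$), so all square roots are taken with the positive branch and no ambiguity arises.
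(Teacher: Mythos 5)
Your proposal is correct and follows essentially the same route as the paper: both start from the rearranged correlation identities $\rho_1\var[Y_{ij}]/K=(\mu_\beta\sigma_\theta)^2$ and $\rho_2\var[Y_{ij}]/K=(\mu_\theta\sigma_\beta)^2$, multiply them and use $\mu_\theta\mu_\beta=\E[Y_{ij}]/K$ for the first equation, and obtain the coefficient-of-variation identities by elementary division (the paper divides by $(\mu_\beta\sigma_\theta)^2$ where you divide by $(\mu_\theta\mu_\beta)^2$, a cosmetic difference). Your remark on taking the positive square-root branch is a small point of care the paper leaves implicit.
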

\begin{proof}
We can rewrite the columns correlation $\rho_1$ and row correlation $\rho_2$ equations from Proposition~\ref{prop:corr} as
\begin{align}
    \rho_1 \frac{\var[Y_{ij}]}{K}&=(\mu_{\beta}\sigma_{\theta})^2 \label{eq:varrho_1} \\
    \rho_2 \frac{\var[Y_{ij}]}{K}&=(\mu_{\theta}\sigma_{\beta})^2. \label{eq:varrho_2}
\end{align}
Multiplying them together we obtain
\begin{align}
\rho_1 \rho_2  \left( \frac{\var[Y_{ij}]}{K} \right)^2 &= \underset{\frac{\E[Y_{ij}]}{K}}{\underbrace{(\mu_{\beta}\mu_{\theta})}}^2(\sigma_{\beta}\sigma_{\theta})^2 \nonumber \\ 
\implies \rho_1 \rho_2  \left( \frac{\var[Y_{ij}]}{\E[Y_{ij}]} \right)^2 &=(\sigma_{\beta}\sigma_{\theta})^2. \label{eq:multvarprior2}
\end{align}
Taking the root of \eqref{eq:multvarprior2} completes the proof for \eqref{eq:multvarprior}. 

Now, using \eqref{eq:multvarprior2}, \eqref{eq:varrho_1} and \eqref{eq:varrho_2}, we will obtain the value of $ \frac{(\sigma_{\beta}\sigma_{\theta})^2}{(\mu_{\beta}\sigma_{\theta})^2}$ and $ \frac{(\sigma_{\beta}\sigma_{\theta})^2}{(\mu_{\theta}\sigma_{\beta})^2}$:
\begin{align}
    \frac{(\sigma_{\beta}\sigma_{\theta})^2}{(\mu_{\beta}\sigma_{\theta})^2} = \frac{\sigma_{\beta}^2}{\mu_{\beta}^2} &= \rho_1 \rho_2  \left( \frac{\var[Y_{ij}]}{\E[Y_{ij}]} \right)^2 \frac{K}{\rho_1 \var[Y_{ij}]} = \rho_2 K \frac{\var[Y_{ij}]}{\E[Y_{ij}]^2}  \\ 
    \frac{(\sigma_{\beta}\sigma_{\theta})^2}{(\mu_{\theta}\sigma_{\beta})^2} = \frac{\sigma_{\theta}^2}{\mu_{\theta}^2} &= \rho_1 \rho_2  \left( \frac{\var[Y_{ij}]}{\E[Y_{ij}]} \right)^2 \frac{K}{\rho_2 \var[Y_{ij}]} = \rho_1 K  \frac{\var[Y_{ij}]}{\E[Y_{ij}]^2}.
\end{align}

Finally, dividing \eqref{eq:varrho_1} by \eqref{eq:varrho_1} we obtain the last result that completes the proof.
\end{proof}

\begin{proposition}\label{prop:estimatek}
Given that we know $\E[Y_{ij}]$, $\var[Y_{ij}]$, $\rho_1$ and $\rho_2$, we can obtain the number of latent factors $K$ and coefficient of variation ($\frac{\sigma}{\mu}$) of the priors of the Poisson factorization model that would generate data to match those moments:

\begin{align}
    K  &= \frac{(1-(\rho_1+\rho_2))\var[Y_{ij}]-\E[Y_{ij}]}{\rho_1\rho_2} \left( \frac{\E[Y_{ij}]}{\var[Y_{ij}]} \right)^2 \label{eq:latent_k} \\
    \left(\frac{\sigma_\theta}{\mu_\theta}\right)^2 &= \frac{(1-(\rho_1+\rho_2))\var[Y_{ij}]-\E[Y_{ij}]}{\rho_2 \var[Y_{ij}] } \label{eq:latent_cv1} \\
    \left(\frac{\sigma_\beta}{\mu_\beta}\right)^2 &= \frac{(1-(\rho_1+\rho_2))\var[Y_{ij}]-\E[Y_{ij}]}{\rho_1 \var[Y_{ij}] } \label{eq:latent_cv2}.
\end{align}
\end{proposition}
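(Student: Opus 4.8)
The plan is to combine the closed-form moment expressions of Propositions~\ref{prop:expvar} and~\ref{prop:corr} with the intermediate identities already established in Proposition~\ref{prop:multvarprior}, exploiting the algebraic role played by $\tau := 1 - (\rho_1 + \rho_2)$. The starting observation is that, by the formulas for $\rho_1$ and $\rho_2$ in Proposition~\ref{prop:corr}, the two cross terms $(\mu_\beta\sigma_\theta)^2$ and $(\mu_\theta\sigma_\beta)^2$ together with the remaining "diagonal" term $\mu_\theta\mu_\beta + (\sigma_\theta\sigma_\beta)^2$ partition the common denominator $D := \mu_\theta\mu_\beta + (\mu_\beta\sigma_\theta)^2 + (\mu_\theta\sigma_\beta)^2 + (\sigma_\theta\sigma_\beta)^2$. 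Hence $\tau D = \mu_\theta\mu_\beta + (\sigma_\theta\sigma_\beta)^2$, and since $D = \var[Y_{ij}]/K$ by Proposition~\ref{prop:expvar}, multiplying through by $K$ and using $\E[Y_{ij}] = K\mu_\theta\mu_\beta$ gives
\begin{equation*}
\tau\,\var[Y_{ij}] = K\mu_\theta\mu_\beta + K(\sigma_\theta\sigma_\beta)^2 = \E[Y_{ij}] + K(\sigma_\theta\sigma_\beta)^2 .
\end{equation*}

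The next step is to eliminate the remaining unknowns. Equation~\eqref{eq:multvarprior2} of Proposition~\ref{prop:multvarprior} states $(\sigma_\theta\sigma_\beta)^2 = \rho_1\rho_2\,(\var[Y_{ij}]/\E[Y_{ij}])^2$, and crucially the right-hand side carries no factor of $K$. Substituting this into the displayed relation turns it into an equation linear in $K$, which I solve to obtain
\begin{equation*}
K = \frac{\tau\,\var[Y_{ij}] - \E[Y_{ij}]}{\rho_1\rho_2}\left(\frac{\E[Y_{ij}]}{\var[Y_{ij}]}\right)^{2},
\end{equation*}
i.e., Eq.~\eqref{eq:latent_k}.

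Finally, for the coefficients of variation I substitute this value of $K$ into the identities $(\sigma_\theta/\mu_\theta)^2 = \rho_1 K\,\var[Y_{ij}]/\E[Y_{ij}]^2$ and $(\sigma_\beta/\mu_\beta)^2 = \rho_2 K\,\var[Y_{ij}]/\E[Y_{ij}]^2$ from Proposition~\ref{prop:multvarprior}; the factor $\rho_1$ (resp.\ $\rho_2$) cancels against the $\rho_1\rho_2$ in the denominator of $K$ and the powers of $\E[Y_{ij}]$ and $\var[Y_{ij}]$ collapse, leaving Eqs.~\eqref{eq:latent_cv1} and~\eqref{eq:latent_cv2}. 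The only mildly delicate point in the whole argument is the bookkeeping in the first step — recognizing that $1-\rho_1-\rho_2$ extracts exactly the Poisson mean term $\mu_\theta\mu_\beta$ plus the product-of-variances term $(\sigma_\theta\sigma_\beta)^2$ — after which everything reduces to substitution and simplification; one should also note that the resulting formulas implicitly assume $\rho_1,\rho_2 > 0$ and $\tau\,\var[Y_{ij}] > \E[Y_{ij}]$ in order for $K$ to be a well-defined positive quantity.
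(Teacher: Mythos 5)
Your proposal is correct and follows essentially the same route as the paper's proof: both reduce the variance identity to $\tau\var[Y_{ij}] = \E[Y_{ij}] + K(\sigma_\theta\sigma_\beta)^2$ (you via the partition of the correlation denominator, the paper via substituting $\rho_1\var[Y_{ij}]$ and $\rho_2\var[Y_{ij}]$ for the cross terms), then eliminate $(\sigma_\theta\sigma_\beta)^2$ using Proposition~\ref{prop:multvarprior} to solve linearly for $K$, and back-substitute into the coefficient-of-variation identities. Your closing remark on the positivity conditions $\rho_1,\rho_2>0$ and $\tau\var[Y_{ij}]>\E[Y_{ij}]$ is a useful addition the paper leaves implicit.
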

\begin{proof}
From Proposition~\ref{prop:kvare}, we can rewrite the expression for the variance as
$$ \var[Y_{ij}] = \E[Y_{ij}]+K\underset{\rho_2 \frac{\var[Y_{ij}]}{K}}{(\underbrace{\mu_{\theta}\sigma_{\beta}})^2}+K\underset{\rho_1 \frac{\var[Y_{ij}]}{K}}{(\underbrace{\mu_{\beta}\sigma_{\theta}})^2}+K(\sigma_{\theta}\sigma_{\beta}).
$$
Now, using \eqref{eq:varrho_1} and \eqref{eq:varrho_1} to substitute in the previous equation we obtain:
$$ \var[Y_{ij}] = \E[Y_{ij}]+(\rho_1+\rho_2)\var[Y_{ij}]+K(\sigma_{\theta}\sigma_{\beta})^2.
$$

Using the squared version of \eqref{eq:multvarprior} from Proposition~\ref{prop:multvarprior}, we know that $K(\sigma_{\theta}\sigma_{\beta})^2 = K\left( \frac{\var[Y_{ij}]}{\E[Y_{ij}]} \right)^2 \rho_1 \rho_2$. This results in \begin{align}
    K\left( \frac{\var[Y_{ij}]}{\E[Y_{ij}]} \right)^2 \rho_1 \rho_2 &= (1-(\rho_1+\rho_2))\var[Y_{ij}]-\E[Y_{ij}] \nonumber \\
    \implies K &= \frac{(1-(\rho_1+\rho_2))\var[Y_{ij}]-\E[Y_{ij}]}{\rho_1 \rho_2 }\left( \frac{\E[Y_{ij}]}{\var[Y_{ij}]} \right)^2. \label{eq:result_latent}
\end{align}
The remaining results are obtained by substituting \eqref{eq:result_latent} in \eqref{eq:varrho_1} and \eqref{eq:varrho_2}.
\end{proof}

\subsubsection{Gamma priors}

For gamma priors parameterized with shape (a,c) and rate (b,d) we have:
$$
    \mu_\theta = \frac{a}{b} ;     \sigma_\theta^2 = \frac{a}{b^2} ; \mu_\beta = \frac{c}{d} ;     \sigma_\beta^2 = \frac{c}{d^2}
$$
and the cofficient of variation is given by
\begin{align}
    \frac{\sigma_\theta^2}{\mu_{\theta}^2} &= \frac{a}{b^2}\frac{b^2}{a^2}=\frac{1}{a}  \label{eq:gamcv1}\\
    \frac{\sigma_\beta^2}{ \mu_\beta^2} &=\frac{d^2}{c^2} \frac{c}{d^2}=\frac{1}{c}. \label{eq:gamcv2}
\end{align}
Thus, Eq.~\ref{eq:gamcv1} and \ref{eq:gamcv1} establish a close form relationship between the shape hyperparameters of Gamma distributed latent variables in Poisson MF and moments of the marginal distribution of the data. This means that any assumption that the expert might have about those moments on the data can be readily translated into appropriate values for the prior specification.

In conclusion, given the chosen moments, the prior especification of Gamma-Poisson MF model reduces to one degree of freedom, given that the latent dimensionality, and shape hyperparameters are determined. The only two hyperparameters left are the rate/scale, although they would be restriced to be obey a relationship with functional form 
$$b \propto \frac{1}{d}.$$

\begin{proposition}\label{prop:gamma priors}
Given that we know the moments $\E[Y_{ij}]$, $\var[Y_{ij}]$, $\rho_1$ and $\rho_2$, we can obtain the scale parameters of the Gamma priors speficied as $F(\mu_{\theta},\sigma_{\theta}^2)=\gam(a,b)$ and $F(\mu_{\beta},\sigma_{\beta}^2)=\gam(c,d)$ in the Gamma-Poisson factorization model such that the prior predictive moments would match those given moments.

\begin{align}
    \frac{1}{a} &= \frac{(1-(\rho_1+\rho_2))\var[Y_{ij}]-\E[Y_{ij}]}{\rho_2 \var[Y_{ij}] } \label{eq:latent_cv12} \\
    \frac{1}{c} &= \frac{(1-(\rho_1+\rho_2))\var[Y_{ij}]-\E[Y_{ij}]}{\rho_1 \var[Y_{ij}] } \label{eq:latent_cv22}
\end{align}
\end{proposition}
\begin{proof}
Immediate from Proposition~\ref{prop:estimatek} and the parameterization of Gamma distribution discussed.
\end{proof}

Also we can rewrite Eq.~\ref{eq:multvarprior} with Gamma parameterization to obtain 
\begin{align}
    \frac{a}{b^2}\frac{c}{d^2} &= \sigma_\theta^2 \sigma_\theta^2 =\left( \frac{\var[Y_{ij}]}{\E[Y_{ij}]} \right)^2 \rho_1 \rho_2  \nonumber \\
    \implies (bd)^2 &= \left( \frac{\E[Y_{ij}] }{\var[Y_{ij}]} \right)^2 \frac{ac}{\rho_1 \rho_2} \nonumber \\
    \implies bd &=  \frac{\E[Y_{ij}] }{\var[Y_{ij}]} \sqrt{ \frac{ac}{\rho_1 \rho_2}}.
\end{align}

\section{Derivation of Analytic Solution for Compound Poisson Matrix Factorization}

We will work with the Exponential Dispersion models (EDM) family of observation that makes Compound Poisson matrix factorization models. Keeping the same notation of the previous section, but adding variable $N_{ui}$ as a Poisson distributed latent count factor of the ED model. With abuse of notation, for example this model allow for observations of the type $Y_{ij} = \sum_{i=1}^{N_{ij}} \mathcal{N}(1,1)$, where $N_{ij}$ is a Poisson random variable, extending Poisson factorization to the domain of real valued observations. Also, from the additive properties ~\cite{JorgensenEDM,BasbugE16} of EDM models, $Y_{ij} = \sum_{i=1}^{N_{ij}} Y_{ijk} $ with $ Y_{ijk} \overset{\text{iid}}{\sim} \ed(w,\kappa)$ is equivalent to $Y_{ij} \sim \ed(w,\kappa N_{ij}) $. Thus, the Compound Poisson Matrix Factorization (CPMF) model we use is defined as
\begin{align*}
    Y_{ij} & \sim \ed(w,\kappa N_{ij})\\
    N_{ij} & \sim \poi(\sum_k \theta_{ik}\beta_{jk}) \\
    \theta_{ik} &\sim F(\mu_{\theta},\sigma_{\theta}^2)\\ 
    \beta_{jk} &\sim F(\mu_{\beta},\sigma_{\beta}^2),
\end{align*}
where $p(Y_{ij} |N_{ij} ; w,\kappa )=\text{exp}(Y_{ij}w-\kappa N_{ij}\psi(w))h(Y_{ij},\kappa N_{ij})$, $\E[Y_{ij} | N_{ij} ; w,\kappa ]=\kappa N_{ij} \psi'(w)$ and $\var[Y_{ij} |  N_{ij} ; w,\kappa]=\kappa N_{ij} \psi{''}(w)$.

\subsection{Mean, variance, covariance and correlation}

\begin{proposition}\label{prop:edmexpvar}
For any combination of valid values for the indexes $i$,$j$, the following equations hold:
\begin{enumerate}\label{prop:kvare2}
    \item $\E[Y_{ij}] = \kappa \psi{'}(w) K \mu_{\theta}\mu_{\beta}$;
    \item $\var[Y_{ij}] =\kappa \psi{''}(w) K \mu_{\theta}\mu_{\beta}+(\kappa \psi{'}(w))^2K[\mu_{\theta}\mu_{\beta}+ (\mu_{\beta}\sigma_{\theta})^2+(\mu_{\theta}\sigma_{\beta})^2+ (\sigma_{\theta}\sigma_{\beta})^2]$.
\end{enumerate}
\end{proposition}
\begin{proof}
By the law of total expectation and the properties of the mean of ED family, $\E[Y_{ij}] = \E[ \kappa   \psi'(w) N_{ij} ] $, which simplifies to $\E[Y_{ij}] = \kappa   \psi'(w) \E[N_{ij} ] $ and from Proposition~\ref{prop:expvar} we know the expected value of the latent Poisson count $N_{ij}$, concluding that $\E[Y_{ij}] = \kappa \psi{'}(w) K \mu_{\theta}\mu_{\beta}$.
Using the law of total variance $\var[Y_{ij}] = \E[\var[Y_{ij}|N_{ij} ] ]+\var[\E[ Y_{ij}|N_{ij}]$, that simplifies to 
$$\var[Y_{ij}] = \kappa \psi{''}(w)  \E[ N_{ij}  ]+[\kappa \psi'(w)]^2 \var[ N_{ij} ],$$ and again substituting Proposition~\ref{prop:expvar} completes the proof. 
\end{proof}

\begin{proposition}\label{prop:edmcovcorr}
The prior predictive correlation is given by:

$$ \rho[Y_{ij},Y_{tl}] = \begin{cases} 
0,\text{if }i \neq t \And j \neq l\\
1,\text{if }i = t \And j = l \\
\rho_1 ,\text{if }i = t \And j \neq l \\
\rho_2 ,\text{if }i \neq t \And j = l
\end{cases},$$
with
\begin{align*}
\rho_1 &= \frac{K [\kappa \psi'(w)]^2}{\var[Y_{ij}]}(\mu_{\beta}\sigma_{\theta})^2 \\
\rho_2 &= \frac{K [\kappa \psi'(w)]^2}{\var[Y_{ij}]}(\mu_{\theta}\sigma_{\beta})^2 .
\end{align*}
\end{proposition}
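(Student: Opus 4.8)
The plan is to compute the covariance $\cov[Y_{ij}, Y_{tl}]$ for the CPMF model by the law of total covariance conditioned on the latent counts $N_{ij}, N_{tl}$ (equivalently, on the factors $\theta_{i\cdot}, \beta_{j\cdot}, \theta_{t\cdot}, \beta_{l\cdot}$ which determine the Poisson rates), and then divide by the variance $\var[Y_{ij}]$ computed in Proposition~\ref{prop:edmexpvar} to obtain $\rho_1$ and $\rho_2$. First I would apply Equation~\eqref{eq:totcov} with the conditioning variable being the pair of latent counts: the term $\E[\cov[Y_{ij}, Y_{tl} \mid N_{ij}, N_{tl}]]$ vanishes unless $i=t$ and $j=l$ (since given the counts the $Y$'s are independent draws from the ED family), so for the off-diagonal correlation cases $\rho_1$ ($i=t, j\neq l$) and $\rho_2$ ($i\neq t, j = l$) only the second term survives, namely $\cov[\E[Y_{ij}\mid N_{ij}], \E[Y_{tl}\mid N_{tl}]] = \cov[\kappa\psi'(w) N_{ij}, \kappa\psi'(w) N_{tl}] = [\kappa\psi'(w)]^2 \cov[N_{ij}, N_{tl}]$.

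Next I would invoke the covariance of the latent Poisson counts. Since $N_{ij} \sim \poi(\sum_k \theta_{ik}\beta_{jk})$ has exactly the structure of the PMF observation model, the argument of Proposition~\ref{prop:cov} applies verbatim with $Y$ replaced by $N$: conditioning on the factors, $\cov[N_{ij}, N_{tl} \mid \theta, \beta] = \delta_{it}\delta_{jl}\sum_k \theta_{ik}\beta_{jk}$, and combined with $\cov[\sum_k\theta_{ik}\beta_{jk}, \sum_k\theta_{tk}\beta_{lk}]$ from Proposition~\ref{prop:var1}, one gets $\cov[N_{ij}, N_{tl}] = K[\delta_{it}(\mu_\beta\sigma_\theta)^2 + \delta_{jl}(\mu_\theta\sigma_\beta)^2 + \delta_{it}\delta_{jl}(\mu_\theta\mu_\beta + (\sigma_\theta\sigma_\beta)^2)]$. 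Specializing to the case $i=t, j\neq l$ gives $\cov[N_{ij}, N_{il}] = K(\mu_\beta\sigma_\theta)^2$, hence $\cov[Y_{ij}, Y_{il}] = [\kappa\psi'(w)]^2 K (\mu_\beta\sigma_\theta)^2$; dividing by $\var[Y_{ij}]$ yields $\rho_1 = \frac{K[\kappa\psi'(w)]^2}{\var[Y_{ij}]}(\mu_\beta\sigma_\theta)^2$, and symmetrically for $\rho_2$ with $(\mu_\theta\sigma_\beta)^2$. The cases $i\neq t, j\neq l$ and $i=t, j=l$ give $0$ and $1$ respectively, the latter trivially and the former because all four factor families are mutually independent so $\cov[N_{ij}, N_{tl}] = 0$.

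The only mild subtlety — which I expect to be the main thing to get right rather than a genuine obstacle — is being careful about what we condition on: the clean statement is to condition jointly on all the relevant latent factors $\theta_{i\cdot}, \beta_{j\cdot}, \theta_{t\cdot}, \beta_{l\cdot}$ (so that both the $N$'s and, through them, the $Y$'s become determined up to their conditional laws), then peel off the expectation in two stages — first over $Y \mid N$ using the ED moment formulas, then over $N \mid \theta,\beta$ using Proposition~\ref{prop:var1}. One should also double-check that the "inner" covariance term $\E[\delta_{it}\delta_{jl}\var[Y_{ij}\mid N_{ij}]] = \delta_{it}\delta_{jl}\kappa\psi''(w)\E[N_{ij}]$ is exactly the piece that, when $i=t,j=l$, reconstructs the full $\var[Y_{ij}]$ of Proposition~\ref{prop:edmexpvar}, which serves as an internal consistency check and confirms $\rho[Y_{ij},Y_{ij}] = 1$. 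Everything else is routine substitution.
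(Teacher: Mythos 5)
Your proposal is correct and follows essentially the same route as the paper's proof: apply the law of total covariance conditioning on the latent counts $N$, obtain $\cov[Y_{ij},Y_{tl}] = \delta_{it}\delta_{jl}\kappa\psi''(w)\E[N_{ij}] + [\kappa\psi'(w)]^2\cov[N_{ij},N_{tl}]$, reuse Proposition~\ref{prop:cov} for $\cov[N_{ij},N_{tl}]$ since $N$ has exactly the PMF structure, and divide by $\var[Y_{ij}]$ case by case. Your explicit consistency check for the diagonal case and the remark about conditioning on the factors jointly are sound additions but do not change the argument.
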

\begin{proof}
Starting with the covariance, we apply the law of total covariance to obtain
\begin{align}
	\cov[Y_{ij},Y_{tl}] = \delta_{it}\delta_{jl}\kappa \psi{''}(w)  \E[ N_{ij}]+[\kappa \psi{'}(w)]^2\cov[N_{ij},N_{tl}]. \label{eq:edcov}
\end{align}
When all the indices coincide this will be equal to the variance, thus leading to a correlation of 1, when all the indices are different this will lead to correlation of zero. This means that the main difference between the prior predictive correlation structure of CPMF and PMF will be where there is rows and columns correlation, that we will be able to calculate because we know the covariance $\cov[N_{ij},N_{tl}]$ from Proposition~\ref{prop:cov}, namely
\begin{align*}
	\rho_1 &= \frac{[\kappa \psi{'}(w)]^2\cov[N_{ij},N_{il}]}{\var[Y_{i,j}]} %
	       = \frac{K [\kappa \psi{'}(w)]^2(\mu_{\beta}\sigma_{\theta}^2 )}{\var[Y_{i,j}]},\\
	\rho_2 &= \frac{[\kappa \psi{'}(w)]^2\cov[N_{ij},N_{tj}]}{\var[Y_{i,j}]} %
	       = \frac{K [\kappa \psi{'}(w)]^2(\mu_{\theta}\sigma_{\beta}^2 )}{\var[Y_{i,j}]}.
\end{align*}
\end{proof}

\subsection{Finding the hyperparameters given the moments}

\begin{proposition}\label{prop:compound_estimatek}
For Compound Poisson MF, given that we know $\E[Y_{ij}]$, $\var[Y_{ij}]$, $\rho_1$ and $\rho_2$, we can obtain the number of latent factors $K$ of the model that would generate data to match those moments: 

\begin{align}
    K  &= \frac{(1-(\rho_1+\rho_2))\var[Y_{ij}]-\left(\kappa \psi'(w)+\frac{\psi{''}(w)}{\psi'(w)} \right) \E[Y_{ij}]}{\rho_1\rho_2} \left( \frac{\E[Y_{ij}]}{\var[Y_{ij}]} \right)^2. \label{eq:edm_latent_k_1} 
\end{align}
\end{proposition}
\begin{proof}
We will start by showing that 
\begin{align}
    \sigma_\theta \sigma_\beta &= \frac{\var[Y_{ij}]}{\E[Y_{ij}]\kappa \psi'(w)}\sqrt{\rho_1 \rho_2}. \label{eq:edm_mult_std}
\end{align}
Take $\rho_1$ and $\rho_2$ and multiply them to obtain:
\begin{align*}
    \rho_1 \rho_2 &= \left( \frac{K [\kappa \psi'(w)]^2}{\var[Y_{ij}]}\right)^2 (\mu_{\theta}\mu_{\beta})^2(\sigma_{\theta}\sigma_{\beta})^2. 
\end{align*}
From Proposition~\ref{prop:edmexpvar}, we know $K\mu_{\theta}\mu_{\beta}=\frac{\E[Y_{ij}] }{ \kappa \psi{'}(w)}$, so we can substitute that on the previous equation obtaining 
\begin{align*}
    \rho_1 \rho_2 &= \left( \frac{ [\kappa \psi'(w)]^2}{\var[Y_{ij}]}\right)^2 \left( \frac{\E[Y_{ij}] }{ \kappa \psi{'}(w)} \right)^2 (\sigma_{\theta}\sigma_{\beta})^2 %
        = [\kappa \psi'(w)]^2 \left( \frac{\E[Y_{ij}] }{ \var[Y_{ij}]} \right)^2 (\sigma_{\theta}\sigma_{\beta})^2. 
\end{align*}

Now let us turn our attention to $\var[Y_{ij}]$ and re-write it using the previous results together with Proposition~\ref{prop:edmcovcorr} for the correlations, and Proposition~\ref{prop:edmexpvar} for the mean:
\begin{align*}
   \var[Y_{ij}] &=  \psi{''}(w) \underformula{\kappa K \mu_{\theta}\mu_{\beta}}{\frac{\E[Y_{ij}]}{\psi{'}(w)}}
   +\underformula{K[\kappa \psi{'}(w)]^2\mu_{\theta}\mu_{\beta}}{ \kappa \psi{'}(w)\E[Y_{ij}] }
   + K[\kappa \psi{'}(w)]^2[(\mu_{\beta}\sigma_{\theta})^2+(\mu_{\theta}\sigma_{\beta})^2+ (\sigma_{\theta}\sigma_{\beta})^2] \\
   &= \left( \frac{\psi{''}(w)}{\psi{'}(w)}+\kappa \psi{'}(w) \right) \E[Y_{ij}]
   +\underformula{K[\kappa \psi{'}(w)]^2(\mu_{\beta}\sigma_{\theta})^2}{\rho_1 \var[Y_{ij}]}
   \\
   &+\underformula{K[\kappa \psi{'}(w)]^2(\mu_{\theta}\sigma_{\beta})^2}{\rho_2 \var[Y_{ij}]}
   +K[\kappa \psi{'}(w)]^2(\sigma_{\theta}\sigma_{\beta})^2 \\
   &=\left( \frac{\psi{''}(w)}{\psi{'}(w)}+\kappa \psi{'}(w) \right) \E[Y_{ij}]+(\rho_1+\rho_2)\var[Y_{ij}]
   +K \underformula{[\kappa \psi{'}(w)]^2(\sigma_{\theta}\sigma_{\beta})^2}{ \rho_1 \rho_2  \left( \frac{\var[Y_{ij}]] }{ \E[Y_{ij} } \right)^2}\\
   &=\left( \frac{\psi{''}(w)}{\psi{'}(w)}+\kappa \psi{'}(w) \right) \E[Y_{ij}]+(\rho_1+\rho_2)\var[Y_{ij}]
   +K \rho_1 \rho_2  \left( \frac{\var[Y_{ij}]] }{ \E[Y_{ij} } \right)^2.
\end{align*}
Reorganizing the terms and isolating $K$ we obtain the final formula
\begin{align*}
    K  &= \frac{(1-(\rho_1+\rho_2))\var[Y_{ij}]-\left(\kappa \psi'(w)+\frac{\psi{''}(w)}{\psi'(w)} \right) \E[Y_{ij}]}{\rho_1\rho_2} \left( \frac{\E[Y_{ij}]}{\var[Y_{ij}]} \right)^2 \label{eq:edm_latent_k}. 
\end{align*}

\end{proof}

\section{Generic Probabilistic Matrix Factorization}
\label{sec:genericpmf}

Consider the model defined in Eq.~\ref{eq:generic_pmf}, with priors $F(\mu_{\theta},\sigma_{\theta}^2)$ and $ F(\mu_{\beta},\sigma_{\beta}^2) $, and observation model $ F_Y$, defined as:
\begin{align*}
    \theta_{ik} &\sim F(\mu_{\theta},\sigma_{\theta}^2), \quad \beta_{jk} \sim F(\mu_{\beta},\sigma_{\beta}^2) \nonumber \\ 
    Y_{ij} & \sim F_Y(\sum_{k = 1}^K \theta_{ik}\beta_{jk}), \text{ with } \E[Y_{ij}]=\sum_{k = 1}^K \theta_{ik}\beta_{jk}.
\end{align*}

\begin{proposition}\label{prop:gen_moments}
For any entry of the matrix $ \mathbf{Y} =\{ Y_{ij} \}$, the mean and variance is given by:
\begin{align}
    \E[Y_{ij}] &= K\mu_{\theta}\mu_{\beta}  \\
    \var[Y_{ij}] &= \E[\var[Y_{ij}|\theta, \beta]] \nonumber \\
    &+K[(\mu_{\beta}\sigma_{\theta})^2 +(\mu_{\theta}\sigma_{\beta})^2+ (\sigma_{\theta}\sigma_{\beta})^2] 
\end{align}
\end{proposition}
\begin{proof}
The equation for $\E[Y_{ij}]$ is obtained from the same steps shown for the case PMF and CPMF, and the fact that $\E[Y_{ij}]=\sum_{k = 1}^K \theta_{ik}\beta_{jk}$ by definition of the model. 

The equation for $\var[Y_{ij}]$ is obtained also from the same steps shown before, but with the additional restriction that $ \E[\var[Y_{ij}|\theta, \beta]]$ is model dependent and can not be simplified further, once we apply the law of total variance.
\end{proof}

\begin{proposition}\label{prop:gen_moments_2}
For any pair of entries $ Y_{ij} $ and $Y_{tl}$ of the  matrix $\mathbf{Y}$, their correlation is given by:
\begin{align}
     \rho[Y_{ij},Y_{tl}] &= \begin{cases} 
0,\text{if }i \neq t \And j \neq l\\
1,\text{if }i = t \And j = l \\
\rho_1 ,\text{if }i = t \And j \neq l \\
\rho_2 ,\text{if }i \neq t \And j = l
\end{cases}
\end{align}
\begin{align}
    \text{ with }\rho_1 &= \frac{K(\mu_{\beta}\sigma_{\theta})^2}{\var[Y_{ij}]} \nonumber \\
    \rho_2 &=\frac{K(\mu_{\theta}\sigma_{\beta})^2}{\var[Y_{ij}]} \nonumber
\end{align}
\end{proposition}
\begin{proof}
Using the law of total covariance:
\begin{align*}
    \cov[Y_{ij},Y_{tl}] & = \E[\cov[Y_{ij},Y_{tl}|\theta,\beta]]+\cov[\E[Y_{ij}|\theta,\beta],\E[Y_{tl} |\theta,\beta] ] \\
    &= \delta_{it}\delta_{jl}\E[\var[Y_{ij}|\theta,\beta]]+\cov[\sum_k \theta_{ik}\beta_{jk},\sum_k \theta_{tk}\beta_{lk}] \\
     &=\delta_{it}\delta_{jl}\E[\var[Y_{ij}|\theta,\beta]]+ K[  \delta_{it}(\mu_{\beta}\sigma_{\theta})^2+\delta_{jl}(\mu_{\theta}\sigma_{\beta})^2+ \delta_{it}\delta_{jl}(\sigma_{\theta}\sigma_{\beta})^2].
\end{align*}

Analyzing this expression for the different cases we obtain:
\begin{itemize}
    \item $i \neq t \And j \neq l$: $\cov[Y_{ij},Y_{tl}]=0$
    \item $i = t \And j = l$: $\cov[Y_{ij},Y_{tl}]=\cov[Y_{ij},Y_{ij}]=\var[Y_{ij}]$
    \item $i = t \And j \neq l$: $\cov[Y_{ij},Y_{tl}]=K(\mu_{\beta}\sigma_{\theta})^2$
    \item $i \neq t \And j = l$: $\cov[Y_{ij},Y_{tl}]=K(\mu_{\theta}\sigma_{\beta})^2$
\end{itemize}

Applying the definition of correlation $\rho[Y_{ij},Y_{tl}]=\frac{\cov[Y_{ij},Y_{tl}]}{\sqrt{\var[Y_ij]\var[Y_tl]}}=\frac{\cov[Y_{ij},Y_{tl}]}{\var[Y_ij]}$ we obtain the final equation in the proposition.
\end{proof}

Given Propositions \ref{prop:gen_moments} and \ref{prop:gen_moments_2} and some target values for the moments, we can directly solve for the number of latent factors $K$. Denoting
$ \tau = 1-(\rho_1+\rho_2)$, we obtain our main result

\begin{theorem}\label{prop:num_latent_gen}
  \begin{align}
      K  &= \frac{ \tau\var[Y_{ij}]-\E[\var(Y_{ij}|\theta, \beta)]}{\rho_1\rho_2} \left( \frac{\E[Y_{ij}]}{\var[Y_{ij}]} \right)^2.
  \end{align}
\end{theorem}
\begin{proof}
We can rewrite the expression of variance using the terms $\rho_1$ and $\rho_2$ resulting in
\begin{align*}
\var[Y_{ij}] &= \E[\var[Y_{ij}|\theta, \beta]]+(\rho_1+\rho_2)\var[Y_{ij}]+K(\sigma_{\theta}\sigma_{\beta})^2
\end{align*}

And given that $ K(\sigma_{\theta}\sigma_{\beta})^2 = = K\rho_1\rho_2\left( \frac{\var[Y_{ij}]}{\E[Y_{ij}]} \right)^2 $ and $ \tau = 1-(\rho_1+\rho_2)$ we obtain 

\begin{align*}
K\rho_1\rho_2\left( \frac{\var[Y_{ij}]}{\E[Y_{ij}]} \right)^2 &= (1-(\rho_1+\rho_2))\var[Y_{ij}]-\E[\var[Y_{ij}|\theta, \beta]] \\
\implies K&=\frac{\tau\var[Y_{ij}]-\E[\var[Y_{ij}|\theta, \beta]]}{\rho_1\rho_2}\left( \frac{\E[Y_{ij}]}{\var[Y_{ij}]} \right)^2
\end{align*}

\end{proof}

\section{Differentiable Moment's Estimators for Hierarchical Bayesian Models}
\label{sec:generalmethod}

This appendix provides the details on how to compute the gradients for the gradient-based approach described in Section~\ref{sec:model_independent}. To optimize \eqref{eq:discrepancy} with stochastic gradient descent, we require
that $\discrepancy(\cdot)$ and $\hat \T$ are differentiable w.r.t their arguments, and that we can propagate gradient $\nabla_\lambda$ through $\E[g(Y)]$. Next we show how this can be done for a rather general structure of hierarchical Bayesian models with outputs $Y$ and latent variables $\Z$. See Figure~\ref{fig:gradient_computation} for a conceptual illustration of the assumed model structure and the procedure for computing the gradient.
The procedure is based on recursively applying the law of total expectation. 
The unconditional expectation of $g(Y)$ can be obtained by integrating out latent variables $Z$, but since an analytical form of it is not available, 
we proceed by performing a numerical approximation,
where each of the integrals over latent variables $Z_1, \dots Z_l \dots Z_L$ is replaced by a sum over samples from respective (conditional) distributions.
An estimate of 
the required gradient $\nabla_\lambda \E[g(Y)]$ is then obtained by propagating estimates of the gradients
$\nabla_\lambda \E[g(Y)|\z_l]$ and $\nabla_\lambda \log p(\z_l| \dots; \lambda)$ backward through the computation graph.

\begin{figure}[t]
    \centering
\[\begin{tikzcd}
	&& {\text{Bayesian model}} &&& {\text{Gradient computation}} \\
	&&&&& {\nabla_\lambda E[g(Y)]} \\
	{Z_1} & \bullet & \bullet & \bullet && {\nabla_\lambda E[g(Y) | Z_{1}] \nabla_\lambda \text{log}(p(Z_{1}))} \\
	& \vdots & \vdots & \vdots && \cdots \\
	{Z_{L-1}} & \bullet && \bullet && {\nabla_\lambda E[g(Y) | Z_{L-1} ] \nabla_\lambda \text{log}(p(Z_l))} \\
	{Z_L} & \bullet & \bullet & \bullet && {\nabla_\lambda E[g(Y) | Z_L] \nabla_\lambda \text{log}(p(Z_L))} \\
	&& Y &&& {\nabla_\lambda g(Y)}
	\arrow[from=6-2, to=7-3]
	\arrow[from=6-3, to=7-3]
	\arrow[from=6-4, to=7-3]
	\arrow[from=3-2, to=4-2]
	\arrow[from=3-2, to=4-3]
	\arrow[from=3-2, to=4-4]
	\arrow[from=3-3, to=4-2]
	\arrow[from=3-3, to=4-3]
	\arrow[from=3-4, to=4-4]
	\arrow[from=3-4, to=4-3]
	\arrow[from=3-3, to=4-4]
	\arrow[from=3-4, to=4-2]
	\arrow[Rightarrow, from=7-6, to=6-6]
	\arrow[Rightarrow, from=3-6, to=2-6]
	\arrow[from=5-2, to=6-4]
	\arrow[from=5-2, to=6-2]
	\arrow[from=5-2, to=6-3]
	\arrow[from=5-4, to=6-4]
	\arrow[from=5-4, to=6-3]
	\arrow[from=5-4, to=6-2]
	\arrow[Rightarrow, from=6-6, to=5-6]
	\arrow[Rightarrow, from=5-6, to=4-6]
	\arrow[Rightarrow, from=4-6, to=3-6]
\end{tikzcd}\]
    \caption{Conceptual illustration of how the gradients for arbitrary moments can be estimated for Bayesian models with hierarchical structure of latent variables.}   
    \label{fig:gradient_computation}
\end{figure}

\textbf{Top level variables.}
Assuming all the remaining variables $Z_{-1}$ integrated out, the expectation of $g(Y)$ can be expressed by conditioning only on variables $\Z_1$ having no parents, i.e., latent variables being on top in the hierarchy of such a Bayesian model.
For discrete $\Z_1$ 
\begin{equation}
\E[g(Y)] = \sum_{\z_1 \in \ZSPACE_1} \underbrace{\E[g(Y)|\z_1]}_{f^1_\lambda(Y, \z_1) } \cdot p(\z_1; \lambda)
\label{eq:noparents_discrete}
\end{equation}
where  we named
$
f^1_\lambda(Y, \z_1) \equiv \E[g(Y)|\z_1]$ to emphasize that the expectations here behave like ordinary functions. 
The gradient of the expectation is then given as 
$$
\nabla_\lambda \E[g(Y)] = \sum_{\z_1 \in \ZSPACE_1} \nabla_\lambda f(Y, \z_1) \cdot  p(\z_1; \lambda) + f(Y,\z_1) \cdot \nabla_\lambda p(\z_1; \lambda).
$$
Whenever the number of possible discrete values, i.e., $|\ZSPACE_1|$, is too large or infinite (like for example, for Poisson distribution), the exact sum over all possible outcomes in \eqref{eq:noparents_discrete} is replaced with a set of samples
\begin{equation}
\E[g(Y)] \approx \frac{1}{S_1} \sum_{\z_1 \sim p(\z_1; \lambda)} f^1_\lambda(Y, \z_1) \cdot p(\z_{1}; \lambda),
\label{eq:approx_discrete}
\end{equation}
where $S_1$ denotes number of samples.
Then, log derivative trick in (e.g., the DiCE incarnation by \citealt{foerster2018dice}) is used to obtain unbiased estimates for the gradient 
\begin{equation}
\nabla_\lambda \E[g(Y)] \approx \frac{1}{S_1} \sum_{\z_1 \sim p(\z_1; \lambda)} \nabla_\lambda f(Y, \z_1) \cdot  p(\z_1; \lambda) + \underbrace{f(Y,\z_1) \cdot \nabla_\lambda \log p(\z_1; \lambda)}_{\text{log derivative trick}},
\label{eq:approx_discrete_gradient}
\end{equation}
The second term may incurr large variance of the gradient estimator but (even though we do not account for it in our experiments) this can be helped by variance reduction techniques~\citep{mnih2014,mnih2016,tucker2017rebar}.

For continuous $\Z_1$ the expectation 
is approximated using MC
\begin{equation}
\E[g(Y)] \approx \frac{1}{S_1} \sum_{\z_1 \sim p(\z_1;\lambda)} f^1_\lambda(Y, \z_1)
\label{eq:noparents_continous}
\end{equation}
and the gradient can be estimated by reparameterizing $\z_1 := \z_1(\epsilon_1, \lambda)$, i.e., expressing samples $z_1$ as a deterministic function of $\lambda$ and another random variable $\epsilon_1$ coming from a zero-parameter distribution $p_0$. The estimate is then  
\begin{equation}
\nabla_\lambda \E[g(Y)] 
\approx 
\nabla_\lambda \left[
\frac{1}{S_1} \sum_{\epsilon_1 \sim p_0(\epsilon_1)} f^1_\lambda\left(Y,\z_1(\epsilon_1, \lambda)\right) \right]
=
\frac{1}{S_1} \sum_{\epsilon_1 \sim p_0(\epsilon_1)}
\nabla_\lambda f^1_\lambda\left(Y,\z_1(\epsilon_1, \lambda)\right)
\label{eq:noparents_continous_reparametrized}
\end{equation}
where we omit the change-of-variables Jacobian for brevity. 
Even though the technical implementations of the path-wise (reparametrized) gradients~\citep{figurnov2018implicit} may differ from the conceptual presentation above, it does not affect our reasoning
-- usually the details are hidden by interface of an automatic-differentation library.

\textbf{Inner latent variables.}
For both of the above cases ($Z_1$ discrete and continous) we need to be able to calculate the inner gradients of the conditional expectations
$\nabla_\lambda f(Y,\z_1) \equiv \nabla_\lambda \E[g(Y)|\z_1]$, and in general, for any intermediate latent variables $\Z_l$ we need $\nabla_\lambda f(Y,\z_l) \equiv \nabla_\lambda \E[g(Y)|\z_l]$. It can be obtained by further expanding the expectations. For each $l$, we unwrap recursively the expectation by conditioning on variables' $\Z_l$ parents $\Z_{l-1}$. 
In particular, 
for discrete $\Z_l$ we again have 
\begin{equation}
\E[g(Y)|\z_{l-1}] = \sum_{\z_{l} \in \ZSPACE_{l}} \underbrace{\E[g(Y)|\z_{l}]}_{f^l_\lambda(Y, \z_l)} \cdot p(\z_{l}|\z_{l-1}; \lambda).
\label{eq:recursive_discrete}
\end{equation}
where
\eqref{eq:recursive_discrete} corresponds and has the same form as \eqref{eq:noparents_discrete} and therefore the gradient $\nabla_\lambda \E[g(Y)|\z_{l-1}]$ can be obtained similar as in \eqref{eq:approx_discrete_gradient} (with the exception that 
the probability mass function $p(\z_{l}|\z_{l-1}; \lambda)$ depends here also on samples $\z_{l-1}$ of the parents $\Z_{l-1}$).
Respectively, for continuous $\Z_{l}$:
\begin{equation}
\E[g(Y)|\z_{l-1}] \approx 
\frac{1}{S_l} \sum_{\z_l \sim p(\z_l;\lambda)} \underbrace{\E[g(Y)|\z_{l}]}_{f^l_\lambda(Y, \z_l)} 
=
\frac{1}{S_{l}} \sum_{\epsilon_{l} \sim p_0} \E[g(Y)|\z_{l}(\epsilon_{l}, \lambda, \z_{l-1})],
\label{eq:recursive_continous}
\end{equation}
where
\eqref{eq:recursive_continous} corresponds and has the same form as  \eqref{eq:noparents_continous} and allows obtaining its gradient similar as in \eqref{eq:noparents_continous_reparametrized}.
The only notable difference is again that 
the reparametrization $\z_l := \z_l(\epsilon_l, \lambda, z_{l-1})$
depends also on $z_{l-1}$.

\textbf{Direct parents of observed variables.}
By recursively exploring the computation graph,
we eventually arrive at the expectation of $g(Y)$ conditioned directly on its parents. For discrete $Y$ it takes an exact form of
\begin{equation}
\E[g(Y)|\z_L] = \sum_{y \in \YSPACE} \underbrace{g(y)}_{f^L_\lambda(y)} \cdot p(y|\z_L; \lambda),
\label{eq:y_discrete}
\end{equation}
where again we find the same structure as in 
\eqref{eq:noparents_discrete}
and \eqref{eq:recursive_discrete}, and the gradient can be obtained similar to \eqref{eq:approx_discrete_gradient}. Note however that here
$\nabla_\lambda f^L_\lambda(y) = \nabla_\lambda g(y)$ where we assume $\nabla_\lambda g(y)$ to be a known function.
For continuous outputs $Y$, we again resort to the MC approximation
\begin{equation}
\E[g(Y)|\z_L] \approx  \frac{1}{S_y} \sum_{\epsilon_y \sim p_0} g(y(\epsilon_y, \lambda, \z_L)),
\label{eq:y_continous}
\end{equation}
where differentiability w.r.t $\lambda$ is achieved by reparametrizing $y := y(\epsilon_y, \lambda, \z_L)$. 
$S_y$ is a number of MC samples and
$p_0$ is a simple distribution free of hyperparameters $\lambda$. 
Finally, the reparametrized gradient similar to
\eqref{eq:noparents_continous_reparametrized} is given as 
$$
\nabla_\lambda
\E[g(Y)|\z_L] \approx  \frac{1}{S_y} \sum_{\epsilon_y \sim p_0} 
\nabla_\lambda g(y(\epsilon_y, \lambda, \z_L)).
$$

\textbf{Mixed-type variables.}
If the set of variables denoted by $\Z_l$ consist of both continuous and discrete nodes, the above expressions need to be combined by summing over (or sampling from) the discrete variables and using MC approximation for the continuous ones.

\subsection{Example: Derivation for PMF and HPF}
\label{sec:model_indep_pmf}

To demonstrate the rather generic presentation above and to link it to the BMF use-case, we show as an example the MC estimate for the expectation $\E[Y_{ij}]$ of the PMF model
\begin{equation}
\E[Y] \approx \frac{1}{S_\theta \cdot S_\beta} \sum_{\epsilon_\theta \sim p_0} \sum_{\epsilon_\beta  \sim p_0} \E[Y | \theta(\epsilon_\theta, \lambda)^T \beta(\epsilon_\beta, \lambda)],
\label{eq:pmf_ey_uncond}
\end{equation}
where we reparametrize both $\beta$ and $\theta$.
For clarity, we also dropped indices $i$ and $j$ in $Y_{ij}$, $\theta_{i}$ and $\beta_{j}$. 
The internal conditional expectation in \eqref{eq:pmf_ey_uncond} we expand as
\begin{equation}
\E[Y| \theta^T \beta] \approx \frac{1}{C} \sum_{y \sim \poi(\theta^T \beta)} y \cdot \textit{Poisson}(y|\theta^T \beta),
\label{eq:pmf_ey}
\end{equation}
where we applied \eqref{eq:approx_discrete} and $\textit{Poisson}$ denotes Poisson probability mass distribution. 
Similarly, variance of $Y$ we estimate with $\var[Y] = \E[Y^2] - \E[Y]^2$, 
 where  the estimator of $\E[Y^2]$
 we obtain by substituting $y$ with $y^2$ in \eqref{eq:pmf_ey}.

To demonstrate the flexibility of the model-independent algorihm we also apply it on hierarchical Poisson factorization (HPF) model of 
\citet{gopalan2015scalable}, for which we do not have closed-form expressions for the moments. This model adds one level of hierarchy to PMF, and hence matches our general formulation:
\begin{align*}
\theta \sim \gam(a, \xi), \quad \xi \sim \gam(a', a'/b') \\
\beta \sim \gam(c, \eta), \quad \eta \sim \gam(c', c'/d'),
\end{align*}
where the new continuous variables $\xi$, $\eta$ we  reparametrize (and sample) as follows:
\begin{align*}
  \theta:=\theta(\epsilon_\theta, a, \xi), \quad  \xi := \xi(\epsilon_\xi, a', a'/b')\\
  \beta:=\beta(\epsilon_\beta, c, \xi), \quad  \eta := \eta(\epsilon_\eta, c', c'/d'),
\end{align*}

Then, for HPF, \eqref{eq:pmf_ey_uncond} takes the form
$$
\E[Y] \approx
\frac{1}{S_\xi \cdot S_\eta} 
\sum_{\epsilon_\xi \sim p_0} \sum_{\epsilon_\eta  \sim p_0}
\underbrace{
\frac{1}{S_\theta \cdot S_\beta} \sum_{\epsilon_\theta \sim p_0} \sum_{\epsilon_\beta  \sim p_0} \E[Y | \theta(\epsilon_\theta, \lambda, \underline{\eta(\epsilon_\eta, \lambda)})^T 
\beta(\epsilon_\beta, \lambda, \underline{\eta(\epsilon_\eta, \lambda))}]
}_{\E[Y|\xi, \eta]},
$$

\section{Parametrization of PMF}
\label{sec:parametrization}

\begin{figure*}[t]
    \centering
    \includegraphics[width=0.3\textwidth]{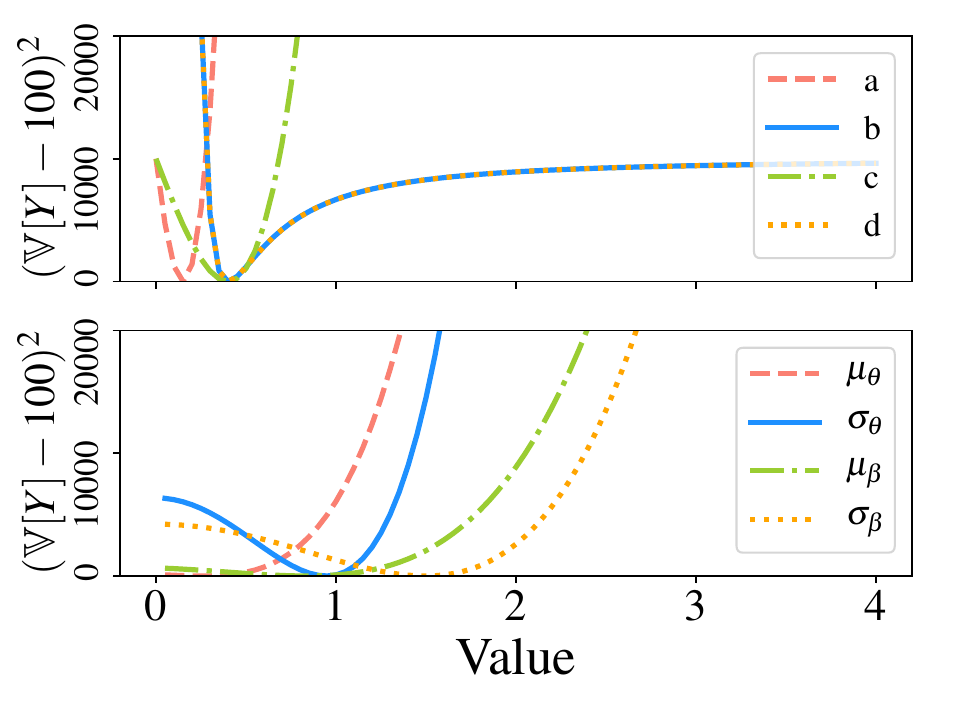}
    \includegraphics[width=0.3\textwidth]{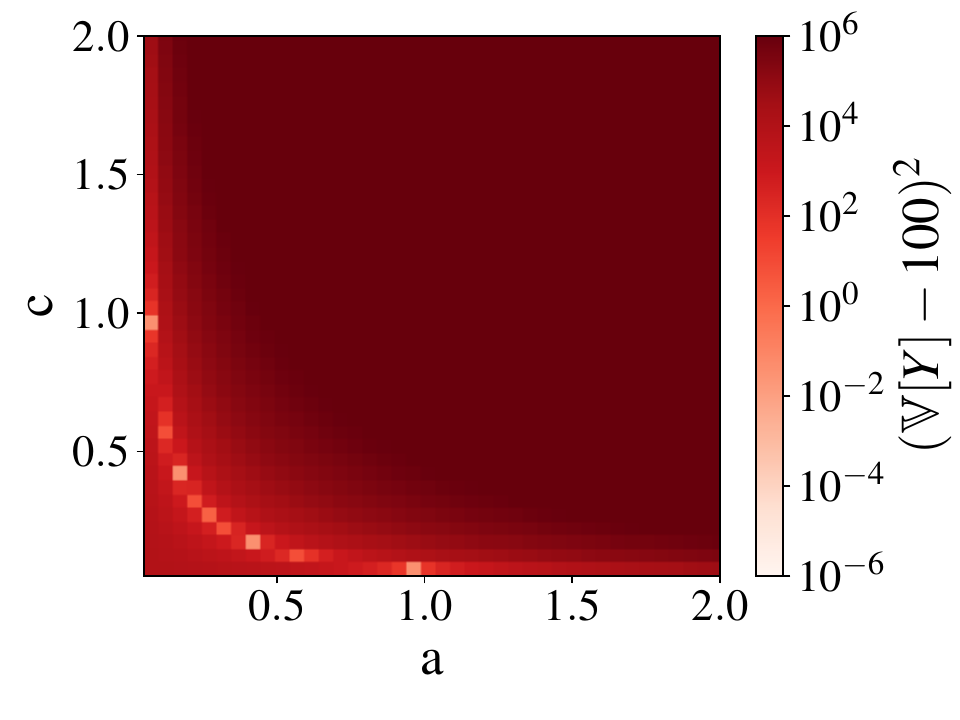}
    \includegraphics[width=0.3\textwidth]{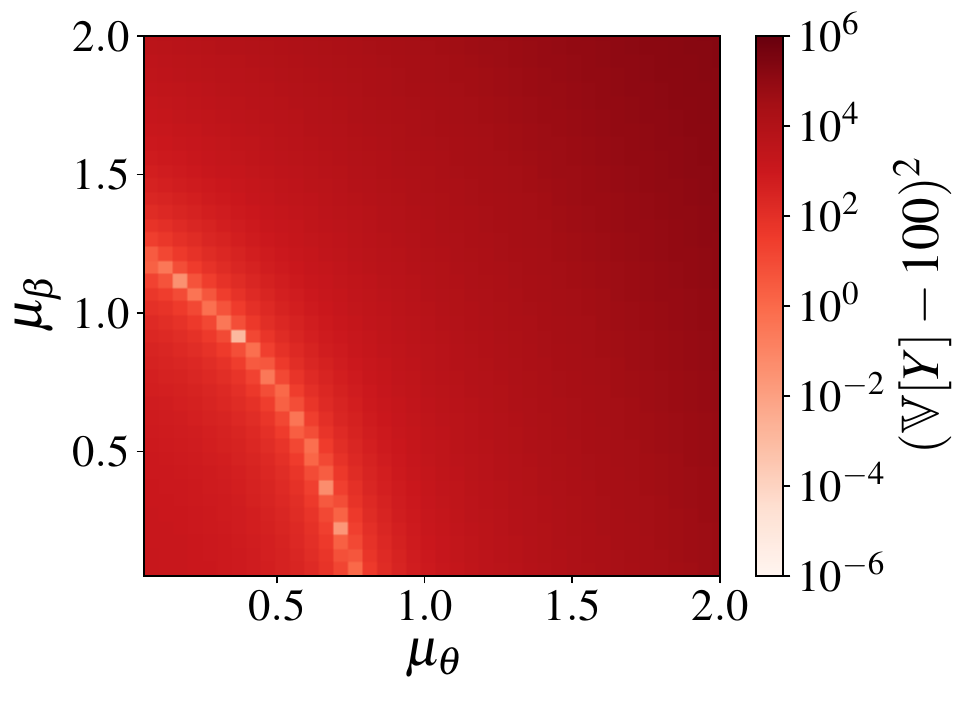}
    \caption{PMF ($K=25$) parametrization using concentrations  $a,c$ and rates $b,d$ vs. means $\mu_\theta, \mu_\beta$ and variances $\sigma_\theta^2, \sigma_\beta^2$:
    1D and 2D projections 
    of the optimization surface for matching $\var[Y]=100$ 
    in neighborhood of the optimal point ($a = 0.16$, $b=0.4$, $c=0.4$, $d=0.4$). 
    }
    \label{fig:surfaces}
\end{figure*}

The performance of the stochastic optimization algorithm depends on the model parametrization, i.e.,
structure of the associated optimization space. Figure~\ref{fig:surfaces} compares two alternative parametrizations for PMF, one in terms of concentrations ($a$, $c$) and rates ($b$, $d$)
and the other in terms of means ($\mu_\theta$, $\mu_\beta$) and variances 
($\sigma_\theta^2$,  $\sigma_\beta^2$). In this case, the latter results in diagonal Fisher information matrix and reduced correlations between the hyperparameters.
This makes optimization easier, seen also as smoother optimization surface.

The question of optimal parameterization for general cases is clearly non-trivial problem, as it is for all optimization problems. One direction for improved parameterization of the prior distributions could build on the approach of \citet{hartmann2019}, \citet{DBLP:conf/icml/TangR19} and \citet{gorinova2019automatic}, going towards a generic invariant to the parameterization of probabilistic programming models such as in~\citet{JMLR:v18:14-467}.

\section{Bias and Variance of Generic Estimators}
\label{sec:experiment_bias_variance}

\begin{figure*}[t]
    \centering
      \includegraphics[width=0.45\textwidth]{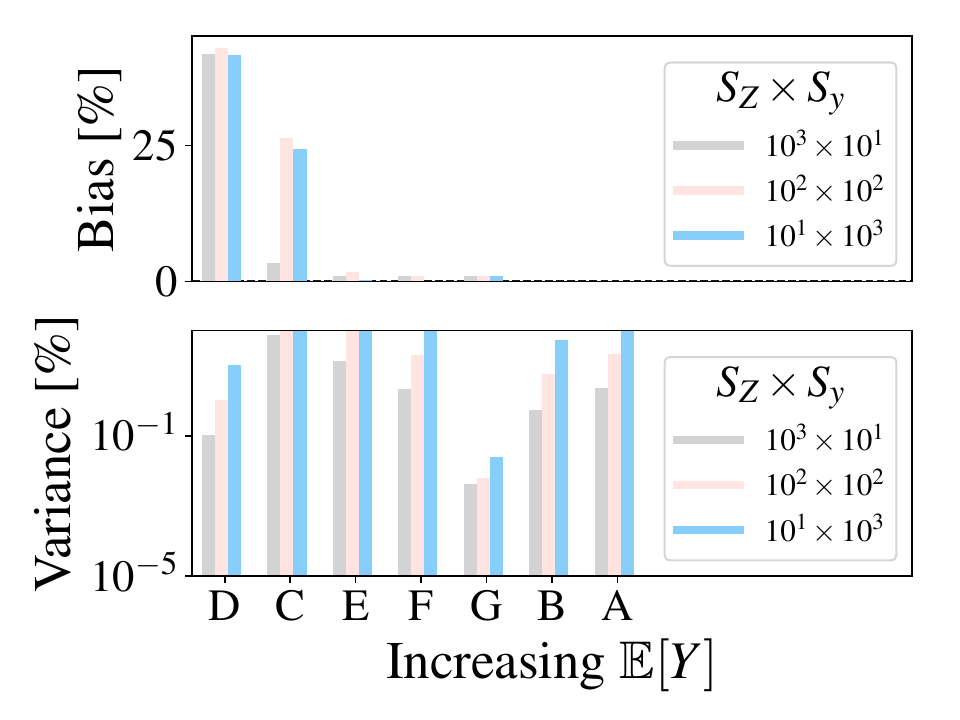}
      \includegraphics[width=0.45\textwidth]{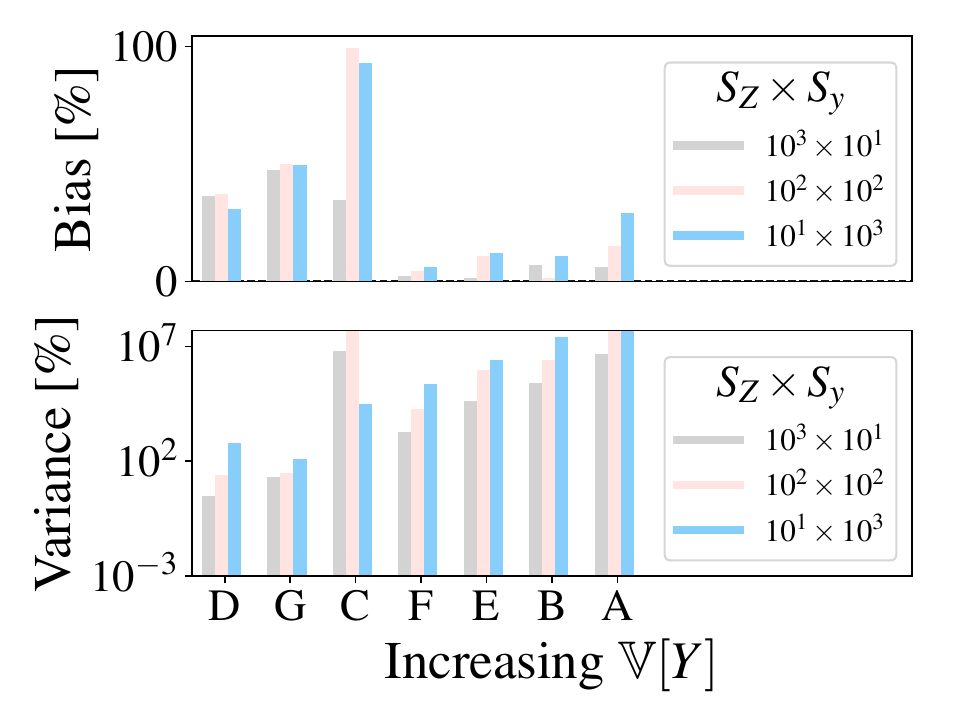}
    \caption{Biases and variances of the expected value (left) and variance (right) estimators for PMF ($K=25$) with different hyperparameters' configurations (Table~\ref{tab:pmf_initializations}).
    $S_Z$ denotes the number of latent variable samples and $S_y$ is the number of samples of observed variable.}      
    \label{fig:nsamples}
\end{figure*}

The model-independent stochastic algorithm relies on Monte Carlo estimates for the virtual statistics as explained in Section 5, instantiated for example cases in Eq. 19 and 20. Such estimates are not necessarily accurate, and hence we evaluate the bias and variance of the estimators against the closed-form solutions of Eq. 3 and 4, based on $10^3$ independent runs.

Figure~\ref{fig:nsamples} compares variances and biases for 
the mean $\hat \E[Y]$ (top) and 
the variance $\hat \var[Y]$ (bottom) estimators for a range of hyperparameter configurations (Tables~\ref{tab:pmf_initializations} and \ref{tab:hpf_initializations} in the Appendix), analyzing the effect of the number of samples used for estimating the statistics. We fix the total number of samples at $S_Z \times S_y = 10^4$, but vary the ratio between 
the number of samples for the latent variables ($S_Z$) and for the observed ones ($S_y$). Both
variance and bias are minimized when more computational resources are spent on sampling the latent variables, suggesting the use of $S_Z \times S_y := 10^3 \times 10$.
The mean estimator has slight bias which is noticeable only for very small values ($\E[Y]<1$), and the variance is usually lower than $50\%$ of the mean estimate.
The variance estimator, however, may have significant bias and variance for values of $\var[Y]$, potentially disturbing the convergence for initializations C, D and G and raising a question regarding better estimators.

\section{Adjustment for model mismatch}
\label{app:mismatch}
The model used in the experiment of Figure~\ref{fig:consitent_rows_rescale} in Section~\ref{sec:mismatch} has elements for each column controlled by a parameter, which is used to induce each columns to sum to a certain number. This model has a multiplicative factor $\gamma_j$ for each latent rate that can be adjusted to induced the desired sum, which can be interprested as allocation factors controlling how much the latent rates are allocated to each column. The original equations for PMF are not applicable in this case, but it is possible to adjust the estimators for empirical mean, variance and covariance, such that we could still use the original equations obtained for PMF. In this section we show details about this result, as well as a simple experiment validating the approach.

Define $Y_{ij} \sim \poi(\eta_{ij}) $ with $\eta_{ij}=\sum_{k = 1}^K \theta_{ik}\beta_{jk}$ as the original PMF matrix, and $\tilde{Y}_{ij} \sim \poi(\gamma_j\eta_{ij})$ with $\bm{\gamma}=[\gamma_1, \ldots, \gamma_M]$ as the modified PMF model with the multiplicative factors. The conditional mean and variance of the modified PMF is  $\E[\tilde{Y}_{ij} | \theta \beta]= \var[\tilde{Y}_{ij} | \theta \beta]= \gamma_j\eta_{ij}=\gamma_j\E[Y_{ij} | \theta \beta]$. The marginal expected and variance are obtained using the previous results resulting in $\E[\tilde Y_{ij}] = e_j = K\gamma_j\mu_\theta\mu_\beta=\gamma_j\E[Y_{ij}]$ and $\var[Y_{ij}] = v_j = e_j+\gamma_j^2\var[\eta_{ij}]=\gamma_j\E[Y_{ij}]+\gamma_j^2\var[\eta_{ij}]$.

 Furthermore we can calculate the correlations between $\tilde Y_{ij}$ and $\tilde Y_{tl}$ , and obtain a similar correlation structure that is zero when all the indices are different ($\rho(Y_{ij},Y_{tl})=0$) and 1 when all indices are the same ($\rho(\tilde Y_{ij},\tilde Y_{ij})=1$), but a column varying correlation for the remaining cases: for the fixed row $i=t$ and different columns $\rho(\tilde Y_{ij},\tilde Y_{tl})=\rho_{jl}^{(1)}=\frac{K\gamma_j\gamma_l}{\sqrt{v_jv_l}}(\mu_\beta\sigma_\theta)^2=\gamma_j\gamma_l\frac{\cov(Y_{ij},Y_{tl})}{\sqrt{v_jv_l}}$, and for fixed columns $j=l$ and different rows  $\rho(\tilde Y_{ij},\tilde Y_{tl})=\rho_{j}^{(2)}=\gamma_j^2\frac{\cov(Y_{ij},Y_{tl})}{v_j}$. Making empirical estimates of these moments are challenging, specially if we intend to use a single observation matrix to estimate those quantities. It is possible to recover the original terms of the equations derived for PMF by
 observing the relationship between the obtained formulas and the moments for PMF.
 
 Define $\hat Y_{ij}^{(1)}=\frac{\tilde Y_{ij}}{\gamma_j}$ and $\hat Y_{ij}^{(2)}=\frac{\tilde Y_{ij}}{\gamma_j^2}$. Now, $\E[ \hat Y_{ij}^{(1)} ]=\hat e=\E[Y_{ij}]=K\mu_\theta\mu_\beta$ which is the equation for the original PMF. Similarly we observe that for PMF, $\var[Y_{ij}]=\E[Y_{ij}]+\var[\eta_{ij}]$, while in the modified model we obtain $\var[\tilde Y_{ij}]=\gamma_j\E[Y_{ij}]+\gamma_j^2\var[\eta_{ij}]$, which implies that $\var[\hat Y_{ij}^{(1)}]=\frac{\var[\tilde Y_{ij}]}{\gamma_j^2}=\frac{\E[Y_{ij}]}{\gamma_j}+\var[\eta_{ij}]$. Since $\E[\hat Y_{ij}^{(2)}]=\frac{\E[Y_{ij}]}{\gamma_j}$, we can write $\var[\hat Y_{ij}^{(1)}]-\E[\hat Y_{ij}^{(2)}]=\var[\eta_{ij}]$, which is one of the terms in the original PMF equation for the variance, as well as being a term that does not vary with the column index. Furthermore, combining with the expected value $\E[\hat Y_{ij}^{(1)}]$ we obtain the result $\hat v=\var[\hat Y_{ij}^{(1)}]-\E[\hat Y_{ij}^{(2)}]+\E[\hat Y_{ij}^{(1)}]=\E[Y_{ij}]+\var[\eta_{ij}]$ which is precisely the equation for $\var[Y_{ij}]$ in the PMF model.
 
Given the bilinearity of the covariance, we have $\cov(Y_{ij}^{(1)}, Y_{tl}^{(1)})= \frac{\cov(\tilde Y_{ij}, \tilde Y_{tl})}{\gamma_j\gamma_l} $, which cancels with the terms $\gamma_j\gamma_l$ in the numerator when applied to the calculate the correlation coefficients. Finally, we obtain the original correlation of PMF using $\frac{\cov(Y_{ij}^{(1)}, Y_{tl}^{(1)})}{\hat v}$. We can now use this equations to obtain empirical estimator using  $\hat Y_{ij}^{(1)}$ and $\hat Y_{ij}^{(2)}$, and apply those in the same formula of the latent dimensionality $K$ obtained for PMF.

The empirical validation setup consist of fixed priors $\text{Gamma}(1,10)$ and $\text{Gamma}(1,0.5)$, latent dimensionality $K\in \{100,250,500,750,1000\}$, and 10 repeated runs for each $K$. In each run, we
sample a matrix $\{Y_{ij}\} \in \mathbb{R}^{1000 \times 1000}$ using the PMF model and another matrix $\{ \tilde Y_{ij} \} \in \mathbb{R}^{1000 \times 1000}$ with the multiplicative factors $\bm{\gamma}=[1,\ldots,\gamma_{max}]$ ($\gamma_{max}=3.5$), using the same prior distribution (but resampling the latent factor for each case) and $K$. For the PMF matrix $\{Y_{ij}\}$ we can directly calculate empirical estimates of the mean, variance and correlations, and estimate the latent dimensionality $\hat K$ using \eqref{eq:latent_var_est}. For the matrix $\{ \tilde Y_{ij} \}$ of the modified model we calculate $\hat Y_{ij}^{(1)}=\frac{\tilde Y_{ij}}{\gamma_j}$ and $\hat Y_{ij}^{(2)}=\frac{\tilde Y_{ij}}{\gamma_j^2}$, and use empirical estimates of variance, expected value and covariance (that is used to calculate the correlations) according to the equations presented in the previous paragraph, finally calculating the estimated latent dimensionality $\hat K$ using the same \eqref{eq:latent_var_est}. Figure~\ref{fig:adjusted} compares the results and shows that in both cases the estimate $\hat K$ recover the true $K$ with similar degree of variability.

\begin{figure}[t]
    \centering
     \includegraphics[width=0.45\textwidth]{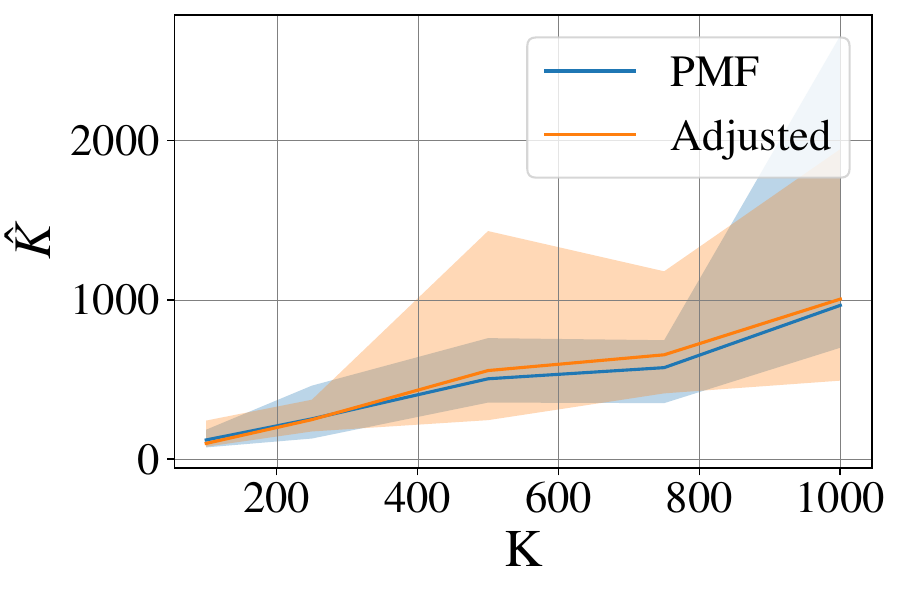}
    \caption{Comparing the estimated latent dimensionality $\hat K$ for PMF and the modified PMF with multiplicative factors, using the adjusted empirical estimates for variance, expected value and correlation (median over 10 runs, with CI=95\%)} 
    \label{fig:adjusted}
\end{figure}

\end{appendices}

\bibliography{biblio}   %

\end{document}